\setlist[enumerate,1]{label=\normalfont{(\Roman*)},leftmargin=*}
\patchcmd{\env@cases}{1.2}{0.96}{}{}
\DeclareMathOperator*{\argmax}{arg\,max}\fi
\DeclareMathOperator*{\argmin}{arg\,min}\fi
\providecommand{\inner}[2]{\left\langle#1,#2\right\rangle}
\newcommand*{\QED}{%
\leavevmode\unskip\penalty9999 \hbox{}\nobreak\hfill
    \quad\hbox{$\square$}%
}
\providecommand{\1}{\boldsymbol{1}}
\renewcommand{\Pr}{\mathbb{P}}
\providecommand{\E}{\mathbb{E}}
\providecommand{\cB}{\mathcal{B}}
\providecommand{\cE}{\mathcal{E}}
\providecommand{\cH}{\mathcal{H}}
\providecommand{\cN}{\mathcal{N}}
\providecommand{\cX}{\mathcal{X}}
\providecommand{\cY}{\mathcal{Y}}
\numberwithin{equation}{section}
\renewcommand{\Pr}{\mathbb{P}}
\providecommand{\E}{\mathbb{E}}
\providecommand{\bX}{\boldsymbol{X}}
\providecommand{\bvartheta}{\boldsymbol{\vartheta}}
\providecommand{\balpha}{\boldsymbol{\alpha}}
\providecommand{\bTheta}{\boldsymbol{\Theta}}
\providecommand{\cE}{\mathcal{E}}
\providecommand{\cX}{\mathcal{X}}
\providecommand{\cY}{\mathcal{Y}}
\providecommand{\cH}{\mathcal{H}}
\providecommand{\cN}{\mathcal{N}}
\providecommand{\cB}{\mathcal{B}}
\providecommand{\bz}{\boldsymbol{z}}
\providecommand{\btheta}{\boldsymbol{\theta}}
\providecommand{\botheta}{\mathfrak{r}}
\providecommand{\bz}{\boldsymbol{z}}
\providecommand{\tp}{\tilde{p}}
\newcommand{\htheta}{\hat{\theta}}
\providecommand{\lowp}{\underline{p}}
\providecommand{\lowbeta}{\underline{\beta}_{\Theta}}
\providecommand{\compgran}{comparison granularity}
\providecommand{\searchint}{search interval}
\providecommand{\twostage}{\textsf{SL+LHF}}
\providecommand{\ftpb}{\textsf{MAPB}}
\definecolor{longhorn}{rgb}{0.8, 0.33, 0.0}
\definecolor{stanfordred}{rgb}{0.55, 0.08, 0.08}
\begin{document}


 \RUNAUTHOR{Junyu Cao and Mohsen Bayati}

\RUNTITLE{A Probabilistic Approach for Model Alignment with Human Comparisons}

\TITLE{A Probabilistic Approach for Model Alignment with Human Comparisons}


 \ARTICLEAUTHORS{%
\AUTHOR{Junyu Cao}

\AFF{McCombs School of Business, The University of Texas at Austin, \EMAIL{junyu.cao@mccombs.utexas.edu}}

\AUTHOR{Mohsen Bayati}
    \AFF{
        Graduate School of Business, Stanford University, \EMAIL{bayati@stanford.edu}}
}

\ABSTRACT{
A growing trend involves integrating human knowledge into learning frameworks, leveraging subtle human feedback to refine AI models. While these approaches have shown promising results in practice, the theoretical understanding of when and why such approaches are effective remains limited. This work takes steps toward developing a theoretical framework for analyzing the conditions under which human comparisons can enhance the traditional supervised learning process. Specifically, this paper studies the effective use of noisy-labeled data and human comparison data to address challenges arising from noisy environment and high-dimensional models. We propose a two-stage ``Supervised Learning+Learning from Human Feedback'' (SL+LHF) framework that connects machine learning with human feedback through a probabilistic bisection approach. The two-stage framework first learns low-dimensional representations from noisy-labeled data via an SL procedure and then uses human comparisons to improve the model alignment. To examine the efficacy of the alignment phase, we introduce a concept, termed the ``label-noise-to-comparison-accuracy'' (LNCA) ratio. This paper identifies from a theoretical perspective the conditions under which the ``SL+LHF'' framework outperforms the pure SL approach; we then leverage this LNCA ratio to highlight the advantage of incorporating human evaluators in reducing sample complexity. We validate that the LNCA ratio meets the proposed conditions for its use through a case study conducted via Amazon Mechanical Turk (MTurk).

}%


\KEYWORDS{ model alignment, supervised learning, probabilistic bisection, human-AI interaction.}

\maketitle


\section{Introduction}

In light of the increasing availability of data, the demand for data-driven algorithmic solutions has surged. Machine learning has emerged as the standard technique for numerous tasks, including natural language processing and computer vision, among others. Various streams of operations research also demonstrate its importance.

To model increasingly complex environments, machine learning models have grown in complexity, often leading to over-parameterization. While this increased complexity allows models to capture intricate patterns, it has revealed fundamental limitations in the standard training and testing pipeline. Specifically, the traditional supervised learning (SL) approach of training on noisy-labeled data faces significant challenges due to model misspecification \citep{d2022underspecification}. This issue becomes particularly acute in high-complexity scenarios, such as when the feature dimension is comparable to or exceeds the number of training data points. In such high-dimensional settings, fundamentally different models may achieve similar performance metrics during testing, despite having  different underlying structures and behaviors. For instance, as demonstrated by \cite{overman2024aligning}, the trained model that minimizes the empirical risk may violate fundamental principles, such as the natural expectation of negative demand elasticity—indicating that the model is not \emph{aligned} with its expected behavior. Consequently, decision makers must consider strategic approaches to achieve better model alignment across different environments.

One promising approach has been the incorporation of human feedback into the learning framework \citep{li2016dialogue,christiano2017deep, alurhuman}. These studies demonstrate that using human feedback for alignment—refining model parameters that are based on human-labeled data—can effectively address the limitations of traditional training approaches. For instance, in the design of chatbots, performance can be adaptively improved through the chatbots' interactions with humans \citep{ziegler2019fine,ouyang2022training}. \cite{ouyang2022training} exemplify this approach in their work on fine-tuning language models, where InstructGPT is fine-tuned with human feedback. The effectiveness of human feedback stems from evaluators' domain expertise that extends beyond what can be encoded into a dataset or estimation method. Humans' deep understanding of the task domain enables them to make informed comparisons that consider context, relevance, and real-world applicability \citep{deng2020integrating}.

Among various forms of human feedback, comparisons between alternatives have emerged as a particularly effective approach due to their cognitive simplicity. As Carl Jung observes, ``Thinking is difficult, therefore let the herd pronounce judgment''—similarly, humans find it easier to select between alternatives than to generate absolute evaluations. Recent work has demonstrated the practical value of such comparative feedback: \cite{zhou2020learning} use pair-wise comparisons for story generation, while \cite{xu2023shattering} leverage human comparisons between large language models (LLMs) for fine-tuning. Although these applications show promising results, we lack a theoretical framework to understand when and why human comparisons enhance traditional supervised learning. This gap motivates our paper's central question:
\begin{quote}
\centering{
\emph{How can we effectively combine supervised learning and human feedback to achieve model alignment?}}
\end{quote}

To make progress towards answering this question, we consider a two-stage probabilistic model that helps us analyze and understand the interplay between supervised learning and human feedback. In the first stage, noisy labeled data are fed into the SL oracle to obtain low-dimensional representations (or embeddings). For example, selecting a small set of features through Lasso is a way to represent high-dimensional feature spaces; similarly, low-rank matrix estimation is an example of representation of high-dimensional matrices. The second stage uses human comparisons to improve model alignment within this low-dimensional space. Our framework uses the random utility model to describe humans' comparison behavior. Practically, humans may provide incorrect responses, so we develop a probabilistic bisection approach for the alignment phase. We call this two-stage process the Supervised Learning+Learning from Human Feedback (SL+LHF) model.

The rationale behind this model is rooted in compelling evidence that representation learning techniques, such as deep learning architectures \citep{zhou2017anomaly,akrami2019robust}, can effectively learn low-dimensional features even from highly noisy data \citep{li2021learning,taghanaki2021robust,mousavi2024robust}. For instance, \cite{tsou2023foundation} recently showed that linear probes trained on the final embedding of pre-trained GPT-2 are surprisingly robust, sometimes even outperforming full-model fine-tuning. However, while supervised learning excels at learning these representations, the challenge lies in effectively fine-tuning the final layer—the stage where the model's predictions are aligned with desired behaviors. For this alignment phase, we leverage human expertise through comparisons rather than direct labels, building on the cognitive advantage of comparative judgments discussed above. Our approach uses a sequential design of comparisons and active data acquisition to maximize information gain and minimize sample complexity, particularly effective when label noise is high.

 To describe the effectiveness of the refinement phase involving human comparison, we introduce a new concept called the ``label-noise-to-comparison-accuracy'' (LNCA) ratio. We offer a theoretical characterization of the  condition in which the SL+LHF framework performs better than the pure SL algorithm based on this LNCA ratio. In a large-noise scenario, human evaluators with high selection accuracy can significantly reduce the sample complexity.

\subsection{Our Contributions}

This paper makes contributions in four areas: theoretical modeling, characterization of sample complexity, practical implementation, and empirical validation.

\paragraph{Modeling.} We develop a theoretical framework that explains when and why human comparisons can substantially enhance performance of SL in high-noise environments. Our model captures the interaction between SL and human feedback through a utility-based comparative judgment approach.

\paragraph{Theoretical contributions.}  Our main theoretical results consist of two components. First, we develop a probabilistic bisection method—an efficient sequential decision-making process that alternates between ``vertical moves'' (multiple evaluations of the same comparison) and ``horizontal moves'' (selection of new comparison pairs) to acquire human comparisons. Second, focusing on settings where the SL stage utilizes Lasso for learning low-dimensional representations, we introduce the label noise-to-comparison-accuracy ratio (LNCA) that quantifies the trade-off between label noise and human comparison accuracy. Our goal is to refine the model within an $\varepsilon$-distance with confidence at least  $1-\delta$, which $\varepsilon$ represents the learning precision and $1-\delta$ denotes the confidence level. When human selection accuracy uniformly exceeds random guessing, we prove the refinement stage has sample complexity $O(s\log(1/(\delta\varepsilon)))$, where $s$ is the number of non-zero coefficients in Lasso.  We characterize conditions under which our SL+LHF framework outperforms pure SL.

\paragraph{Towards practical implementations.} While our theoretical analysis examines human comparisons of model coefficients for analytical tractability, direct coefficient comparison is impractical in real applications. To bridge this gap, we develop an Active Sample Selection (ASS) framework that translates our theoretical insights into implementable procedures by enabling humans to compare meaningful data points rather than abstract model parameters. This framework provides a systematic approach for selecting the most informative samples for human comparison, applicable across a broad set of data distributions.

\paragraph{Empirical analysis.} Through Amazon Mechanical Turk (MTurk) experiments to acquire real human feedback on a stylized task, we validate our theoretical findings about the LNCA ratio and demonstrate the superior performance of SL+LHF over pure SL under conditions of higher observational noise, higher human comparison accuracy, and lower-dimensional feature spaces.

The remainder of this paper is organized as follows: Section \ref{sec: literature} reviews related literature. Section \ref{sec: model} presents our model setup, followed by Section \ref{sec: human}, which introduces human comparison modeling and characterizes sample complexity in one dimension. Section~\ref{sec: human-AI} develops our two-stage framework integrating SL and human comparisons, with particular focus on sample complexity when using Lasso in the first stage. Section \ref{sec: practical implementations} details practical implementations for active sample selection, while Section~\ref{sec:  numerical} presents numerical experiments and managerial insights. Section~\ref{sec: conclusion} concludes. All proofs and parameter calibrations appear in the appendices.

\section{Literature Review}\label{sec: literature}

\emph{Alignment and Learning from Human Feedback.} 
The alignment challenge seeks to synchronize human values with machine learning systems and direct learning algorithms toward the objectives and interests of humans. Machine learning models frequently demonstrate unforeseen deficiencies in their performance when they are implemented in practical contexts; this phenomenon has been identified as \emph{underspecification} \citep{d2022underspecification}. Predictors that perform similarly in the training dataset can exhibit significant variation when they are deployed in other domains. The presence of ambiguity in a model might result in instability and suboptimal performance in real-world scenarios. Recent papers have focused on aligning models with human intentions, including training simple robots in simulated environments and Atari games \citep{christiano2017deep,ibarz2018reward} and fine-tuning language models to summarize text \citep{stiennon2020learning} and to optimize dialogue \citep{jaques2019way}. Studies have demonstrated that using reinforcement learning from human feedback (RLHF) significantly boosts performance \citep{bai2022training,glaese2022improving,stiennon2020learning,dwaracherla2024efficient}. \cite{ouyang2022training} use RLHF \citep{stiennon2020learning} to fine-tune GPT-3 to follow a broad class of written instructions. In particular, \cite{zhou2020learning} use pair-wise comparisons for story generation. They find that, when comparing two natural language generation (NLG) models, asking a human annotator to assign scores separately for samples generated by different models, which resembles the case in the ADEM model \citep{lowe2017towards}, is more complicated. The much easier approach is for human annotators to directly compare one sample generated by the first model against another sample from the second model in a pair-wise fashion and then to compute the win/loss rate. \cite{zhu2023principled} provide a reinforcement learning framework that includes human feedback -- specifically, for function approximation in $K$-wise comparisons, with policy learning as the target. \cite{xu2023shattering} propose a fine-tuning algorithm by inducing a complementary distribution over the two sampled LLM responses to model human comparison processes, in accordance with the Bradley-Terry model for pairwise comparisons. In contrast to all these works, we provide a theoretical, two-stage framework for integrating supervised learning with actively acquired human comparison labels, and we characterize the conditions where the two-stage framework outperforms pure SL algorithms.

\emph{Transfer Learning and Task Alignment.} Model alignment generally refers to the process of ensuring that machine learning models are tailored to meet specific tasks and objectives. A related paradigm is transfer learning, where a model trained on one task is adapted for use in another task through additional fine-tuning \citep{pan2009survey,xu2021group,bastani2022meta,du2024transfer}. While our work shares some conceptual similarities with transfer learning, we focus specifically on the impact of noise in both supervised learning and human feedback phases. Our framework's applicability to transfer learning scenarios depends crucially on the relationship between the initial and target tasks' representations. When the target task shares the same low-dimensional representation as the initial task, requiring only updates to the final layer or coefficients, our theoretical results apply directly—the human comparison phase can effectively refine these parameters regardless of whether the comparison objective exactly matches the initial task. However, when the target task requires different representations, transfer learning typically demands an additional supervised learning phase with new labeled data to first refine these representations before any human-guided fine-tuning can begin. In such cases, our framework's human comparison stage remains relevant for the final alignment when label noise is high, but it must be preceded by this representation learning phase.

\emph{Active Learning.} The core concept behind active learning in machine learning theory is the ability of a model to interactively query a human or an oracle to obtain labels for new data points. Instead of passively learning from a fixed dataset, an active learning algorithm intelligently selects the most informative learning instances to query for labels. We refer readers to \cite{settles2009active} for a comprehensive review of the active learning algorithm literature. The literature has concentrated on studying label complexity to demonstrate the benefit of active learning, which refers to the number of labels requested to attain a specific accuracy. For example, \cite{cohn1994improving} demonstrate that active learning can have a substantially lower label complexity than supervised learning in the noiseless binary classification issue. In the presence of noise, other papers focus on the classification problems \citep{hanneke2007bound,balcan2006agnostic,hanneke2011rates} and the regression problems \citep{sugiyama2009pool,beygelzimer2009importance}. Although our study has a similar goal of using the fewest possible samples to achieve the target accuracy, we concentrate on the human-AI collaboration framework and try to reduce the sample complexity of human comparisons.

\emph{Data-Driven Decision Making and Humans in the Loop.} A large body of recent literature on operations research or operations management has studied how to effectively use data to improve decisions \citep{mivsic2020data}, including pricing, assortment, and recommendation. Motivated by the observation that humans may sometimes have ``private'' information to which an algorithm does not have access but that can improve performance, \cite{balakrishnan2022improving} examine new theory that describes algorithm overriding behavior. They designed an experiment to test feature transparency as an implementable approach that can help people better identify and use their private information. \cite{ibrahim2021eliciting} study how to address the shared information problem between humans and algorithms in a setting where the algorithm makes the final decision using the human's prediction as input.  \cite{dietvorst2018overcoming,dietvorst2015algorithm} study the algorithm aversion phenomenon, in which people fail to use algorithms after learning that they are imperfect, even though evidence-based algorithms consistently outperform human forecasters. Other papers study human-artificial intelligence (AI) collaboration on a large variety of topics \citep{bastani2021improving,wu2022survey,deng2020integrating,chen2022algorithmic, caro2023believing,ye2024lola}, including ride-hailing \citep{benjaafar2022human} and health care \citep{reverberi2022experimental}.

\emph{Probabilistic Bisection.} Another stream of literature related to our work, which is independent of the previously identified human-AI ML topics, is on probabilistic bisection algorithms (PBAs) \citep{horstein1963sequential}. Algorithms using the PBA framework have been used in several contexts --  notably, including the tasks of target localization \citep{tsiligkaridis2014collaborative}, scanning electron microscopy \citep{sznitman2013optimal}, topology estimation \citep{castro2008active}, edge detection \citep{golubev2003sequential}, and value function approximation for optimal stopping problems \citep{rodriguez2015information}. From a computational perspective, \cite{frazier2019probabilistic} have shown that probabilistic bisection converges almost as quickly as stochastic approximation. In this work, we design the model alignment algorithm by using the PBA approach, where  randomness is involved because of the inherent variability in the human judgment process.

\section{Model}\label{sec: model}
In this section, we introduce our model setup. 
We first introduce the supervised learning oracle that potentially is used during the \emph{initial learning stage}, and then we present the fine-tuning procedure involving human feedback during the \emph{refinement stage}. The two alternative ways of collecting human feedback are either to collect human labels directly or to ask humans to compare two answers. The former approach, which continues the label collecting procedure following the initial learning stage, is called \emph{pure supervised learning}. As previously noted, we call the latter approach the \emph{Supervised Learning+Learning from Human Feedback} framework.

\subsection{Supervised Learning and Underspecifications}

Given the data $\{(X_i,Y_i)\}_{i\leq n}$, where $X_i\in \cX$ and $Y_i\in \cY\subseteq \mathbb{R}$, the goal is to learn the true function that is parameterized by $\boldsymbol{\vartheta}^*\subseteq\mathbb{R}^d$, $h_{\boldsymbol{\vartheta}^*}\in \cH:\cX\rightarrow \cY$, where $\cH$ is the function class. Each data point is generated from a stochastic response:
\[
Y_i = h_{\boldsymbol{\vartheta}^*}(X_i) + \epsilon_i\,,
\]
where $\epsilon_i$ is the noise with mean 0 and variance $\sigma^2$, and where $\sigma$ could potentially be very large.

We specifically consider a high-dimensional problem that can be represented by the low-dimensional generalized linear model; that is, 
\[
h_{\boldsymbol{\vartheta}^*}(x)=f(\inner{\btheta^*}{\varphi(x)})\,,
\]
where $f(\cdot)$ is a link function. Here, $\btheta^*\in \Theta\in \mathbb{R}^s$ is the unknown parameter\footnote{We denote the parameter vector using the bold symbol.}; and $\varphi(x)\in \mathbb{R}^s$ is the kernel function, which is the low-dimensional representation of feature $x$. Our goal is to learn both the parameter $\btheta^*$ and representation $\varphi(x)$. We assume that $s\ll d$.

We provide the following two examples as illustrations.
\begin{example}[Sparse Linear Models]\label{example: sparse linear}
Suppose $x$, $\boldsymbol{\vartheta}^*\in \mathbb{R}^d$, and $h_{\boldsymbol{\vartheta}^*}(x)=\inner{\boldsymbol{\vartheta}^*}{x}$.
The parameter $\boldsymbol{\vartheta}^*$ has nonzero entries in the first $s$ dimensions and has zero entries in the rest of the dimensions. In this case, $h_{\boldsymbol{\vartheta}^*}(x)=f(\inner{\btheta^*}{\varphi(x)})$ where $\btheta^*=(\vartheta^*_1,\cdots, \vartheta^*_s)$, $\varphi(x)=(x_1,\cdots, x_s)$, and $f(\cdot)$ is an identity function.
\end{example}

\begin{example}[Generalized Low-rank Models]
Suppose $x\in \mathbb{R}^{d\times d}$. 
In the generalized low-rank models, we assume that 
$h_{\boldsymbol{\vartheta}^*}(x)=f(\inner{\bTheta^*}{x})$
where $\bTheta^*\in \mathbb{R}^{d\times d}$ is a low-rank matrix (i.e., $\bTheta^*=\sum_{i=1}^s \sigma_i u_i v_i^\top$ where $u_i$ and $v_i \in \mathbb{R}^{d})$; the trace inner product is defined as $\inner{\bTheta^*}{x}=tr(\bTheta^* x^\top)$. Thus, 
\[
h_{\boldsymbol{\vartheta}^*}(x)=f(\sum_{i=1}^s \sigma_i \cdot tr(u_i v_i^\top x^\top))=f(\inner{\btheta^*}{\varphi(x)}),
\]
where $\varphi(X)=[tr(u_i v_i^\top x^\top)]_{i=1}^s$ is the projection of $x$ onto the low-rank space, and $\btheta^*=(\sigma_1,\cdots, \sigma_s)$ is the $s$-dimensional vector of singular values.
\end{example}

Suppose that $h_{\bvartheta}$ can be estimated through a statistical loss function $\ell: \cY\times \mathbb{R}\rightarrow\mathbb{R}$ and that the total loss is
\[
L_n(h_{\bvartheta}) = \sum_{i=1}^{n} \ell(Y_i, h_{\bvartheta}(X_i)).
\]
For example, $\ell$ can represent the squared loss and the regularized squared loss, among others. To deal with the high-dimensional issue, the algorithm learns the low-dimensional representation $\varphi(x)$, as well as the parameter $\btheta^*$.
 For example, to learn the sparse linear models, one can apply Lasso to select important features and to learn the model parameters. To learn the low-rank models, one can use trace regression to learn the low-dimensional representations.

Define $h_{\hat{\bvartheta}_n}$ as the function contained in $\cH$ that minimizes empirical risk $L_n$. In the existing literature, the decision maker very commonly employs the predictor $h_{\hat{\bvartheta}_n}$ directly. However, some issues may arise. In high-dimensional problems, where significant noise is present, approaches striving to balance the variance--bias trade-off may violate fundamental principles, such as
the natural expectation of negative demand elasticity in economics. 
We use the following example to illustrate this potential issue.

\begin{example}\label{example: pricing}
Consider a linear pricing model with high-dimensional contexts -- that is, $Y_i = \vartheta_1 \cdot p_i + \vartheta_2^\top X_i + \epsilon_i$, where $Y_i$ is the demand, $p_i$ is the price, and the vector $X_i$ represents features of product $i$. Also, assume that $\vartheta_1<0$. However, in a high-noise regime, we may incorrectly estimate a positive value for $\vartheta_1$. Please see Appendix \ref{appendix: pricing} for numerical illustrations.
\end{example}

In Example \ref{example: pricing}, the negative coefficient of the price in the demand function is misinterpreted because of model underspecification. The requirement of $\vartheta_1<0$ should be specified based on human knowledge. In the face of the challenge that underspecification poses in a potentially very noisy environment, we take a critical step toward refinement; we call this step ``model alignment through human comparisons.'' Humans can do comparisons with much greater precision. For example, on the basis of their  knowledge, humans can immediately correct the negative correlation between demand and price. This particular stage comes into play after our machine learning pipelines generate and produce their output $\hat{\varphi}(\cdot)$ and $\hat{\btheta}_0$ (which denotes the estimator of $\varphi$ and $\btheta$) by training on the noisy-labeled data. During the second stage, we refine the model in a lower dimensional space. Throughout the paper, we  focus our analysis on sparse linear models.

\subsection{Utility Model of Human Comparisons}
In this section, we introduce the human comparison model. Given two potential models, $f_{\btheta_1}$ and $f_{\btheta_2}$, the system asks humans to compare and select the better prediction model; we call this step the model-level comparison. If $f_{\btheta_1}$ is preferable to $f_{\btheta_2}$, we denote the result as $f_{\btheta_1}\succeq f_{\btheta_2}$. 

\begin{remark}[Model-level Comparison and Sample-level Comparison]\label{remark: practical compare}
In practice, the comparison between two models can be conducted in various ways. For example, the models can be evaluated on a single data point (either from the humans' past experience or their observation), denoted as $\{x,y\}$. When comparing two models in this case, the one with a smaller squared loss is preferred; that is: 
\[
\begin{aligned}
\Pr(\btheta_1 \succeq \btheta_2) &= \Pr\left( (y-f_{\btheta_1}(x))^2\leq (y- f_{\btheta_2}(x))^2\right).
\end{aligned}
\]
Alternatively, models also can be assessed on a batch of samples. We call this approach the sample-level comparison. Note that a model-level comparison is a broader concept. We elaborate on how to translate the model-level comparison to the sample-level comparison in Section \ref{sec: practical implementations}. In this section, our focus is on a framework for undertaking a more general model-level comparison.
\end{remark}

The preference over the model $f_{\btheta}$ is captured by a utility function $u(\btheta)$, where $u(\btheta)$ depicts the distance between the model parameterized by $\btheta$ and the true model. That is, we assume $u(\btheta)= -d(\btheta,\btheta^*)$, where $d(\cdot, \cdot)$ is a distance function.
The larger utility value indicates the better model. The true model parameter $\btheta^*$ achieves the highest utility. For example, it can be the negative two-norm distance -- that is, $u(\btheta)=-\|\btheta-\btheta^*\|_2$. 
 We formally state the assumption about the utility function as follows:

\begin{assumption}[Utility function]\label{assump: utility}
Assume $u(\btheta)=-d(\btheta,\btheta^*)$, where $d(\cdot,\cdot)$ is a distance function. 
\end{assumption}
The utility function is unimodal in each coordinate. That is, for each coordinate $\theta_i$, when fixing all other coordinates, $u(\btheta)$ is monotonically increasing when $\theta_i<\theta_i^*$ and is monotonically decreasing when $\theta_i>\theta_i^*$. Moreover, the utility gets lower when $\btheta$ gets far away from the true parameter. The utility function can be either bounded or unbounded (e.g., the Euclidean distance). We give another example of the bounded utility function as an illustration.
    \begin{example}\label{example: bounded utility}
    Consider $\theta^*\in \mathbb{R}$. The prior of the model parameter $\theta$ is characterized by the probability density function $\upsilon(\cdot)$. The utility function is $u(\theta)=-d(\theta,\theta^*):=-|\int_{\theta}^{\theta^*} \upsilon(t) dt|$. In this case, the utility function is bounded.
    \end{example}

The comparison between $f_{\btheta_1}$ and $f_{\btheta_2}$ provides information on which side is closer to the true parameter $\btheta^*$. Define the hyperplane $H(\btheta_1,\btheta_2)=\{\btheta: d(\btheta,\btheta_1)=d(\btheta,\btheta_2)\}$, as well as half-space $H^-(\btheta_1,\btheta_2)=\{\btheta: d(\btheta,\btheta_1)<d(\btheta,\btheta_2)\}$ and half-space $H^+(\btheta_1,\btheta_2)=\{\btheta: d(\btheta,\btheta_1)>d(\btheta,\btheta_2)\}$. The condition $u(\btheta_1)\geq u(\btheta_2)$ implies that $\btheta^*\in H^-(\btheta_1,\btheta_2)$. Assumption \ref{assump: utility} implies that $H(\theta_1,\theta_2)$ contains a unique point when $\theta_1$ and $\theta_2$ are in a one-dimensional space.

The overall goal is to find the $\btheta$ that achieves near the highest utility or that, equivalently, gets close to the true model parameter $\btheta^*$ within a certain precision.
Suppose that the human selection process follows a random utility model (RUM). 
 In the process of querying, the human's opinion regarding $f_{\btheta}$ is
\begin{equation}\label{eq: rum}
U(\btheta) = u(\btheta) +  \xi,
\end{equation}
where 
$\xi$ is the unobserved noise.  
We assume that the noise $\xi$ follows a Gumbel distribution with parameter $\gamma$, which represents the expert level of the human. We say human evaluators make the correct choice if they select $\btheta_1$ ($\btheta_2$) when $u(\btheta_1)$ ($u(\btheta_2)$) is higher than $u(\btheta_2)$ ($u(\btheta_1)$). A smaller value of $\gamma$ corresponds to a higher certainty of the selection's correctness. 
When $\gamma$ approaches 0, humans can select the better one with almost 100\% accuracy.

\begin{remark}
    Our model can also incorporate the human bias in the following way: $U(\btheta) = u(\btheta) + \beta_h+ \xi$, where the human bias $\beta_{h}$ can vary between individuals. For simplicity, we focus on the utility model \eqref{eq: rum}.
\end{remark}

\begin{lemma}[Precision]\label{lemma: precision}
For any two models $f_{\btheta_1}$ and $f_{\btheta_2}$, the probability that a human will make the right selection is
\[
\frac{1}{1+ \exp(-|u(\btheta_2)-u(\btheta_1)|/\gamma)}\,,
\]
which is strictly greater than 1/2.
\end{lemma}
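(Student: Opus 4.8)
The plan is to recognize Lemma~\ref{lemma: precision} as the classical logit choice probability that emerges from a random utility model with independent Gumbel shocks, and then to establish it by a short change-of-variables computation. The starting point is to make explicit the probabilistic mechanism behind \eqref{eq: rum}: when the human compares $f_{\btheta_1}$ and $f_{\btheta_2}$, each alternative receives its own independent noise draw, so the perceived utilities are $U(\btheta_1)=u(\btheta_1)+\xi_1$ and $U(\btheta_2)=u(\btheta_2)+\xi_2$ with $\xi_1,\xi_2$ i.i.d.\ Gumbel of scale $\gamma$. The human selects $\btheta_1$ precisely when $U(\btheta_1)>U(\btheta_2)$, i.e.\ when $\xi_2-\xi_1<u(\btheta_1)-u(\btheta_2)$.

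First I would reduce to the case where one alternative is strictly better; since the claimed formula depends only on $|u(\btheta_2)-u(\btheta_1)|$, we may take $u(\btheta_1)>u(\btheta_2)$ without loss of generality, so that choosing $\btheta_1$ is correct and $u(\btheta_1)-u(\btheta_2)=|u(\btheta_2)-u(\btheta_1)|>0$. The correct-selection probability is then $\Pr\big(\xi_2-\xi_1<|u(\btheta_2)-u(\btheta_1)|\big)$, and the entire lemma collapses to identifying the law of the difference $\xi_2-\xi_1$ of two independent Gumbel variables.

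The core step is to show that this difference is logistic with scale $\gamma$, namely $\Pr(\xi_2-\xi_1<c)=1/(1+e^{-c/\gamma})$. I would prove this directly by conditioning on $\xi_1$ and integrating the Gumbel CDF against the Gumbel density,
\[
\Pr(\xi_2-\xi_1<c)=\int_{-\infty}^{\infty}\exp\!\big(-e^{-(x+c)/\gamma}\big)\,\tfrac{1}{\gamma}e^{-x/\gamma}\exp\!\big(-e^{-x/\gamma}\big)\,\d x,
\]
and then substituting $z=e^{-x/\gamma}$ to collapse the integrand into $\int_0^\infty e^{-(1+e^{-c/\gamma})z}\,\d z=1/(1+e^{-c/\gamma})$. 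Setting $c=|u(\btheta_2)-u(\btheta_1)|$ yields exactly the stated expression.

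Finally, the strict inequality is immediate: because the utilities differ, $|u(\btheta_2)-u(\btheta_1)|>0$, hence $e^{-|u(\btheta_2)-u(\btheta_1)|/\gamma}<1$ and the probability exceeds $1/2$ (the degenerate case of equal utilities gives exactly $1/2$ and is excluded by the strictness claim). I expect the only genuine subtlety to be the modeling convention—whether the two comparison shocks are drawn independently—since that independence is precisely what licenses the Gumbel-to-logistic reduction; the change of variables itself is routine once this is settled.
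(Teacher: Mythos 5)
Your proposal is correct and follows essentially the same route as the paper: both reduce to the case where one model has strictly higher utility and then obtain the logit choice probability from the Gumbel random-utility model. The only difference is one of completeness—the paper invokes the standard logit formula $\Pr(U(\btheta_1)>U(\btheta_2))=\exp(u(\btheta_1)/\gamma)/(\exp(u(\btheta_1)/\gamma)+\exp(u(\btheta_2)/\gamma))$ in a single line, whereas you derive it from first principles by showing the difference of two independent Gumbel shocks is logistic with scale $\gamma$, which also makes explicit the independence assumption that the paper's notation $U(\btheta)=u(\btheta)+\xi$ leaves implicit.
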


Lemma \ref{lemma: precision} implies that under a RUM, the selection accuracy is always higher than 50\%, even with the nonzero human bias. Intuitively, the differentiation would become harder when $\btheta_1$ and $\btheta_2$ both get closer to the true parameter. However, how fast it converges to 1/2 when $\btheta_1$ and $\btheta_2$ approach $\btheta^*$ depends on the distance function. 
Under this probabilistic selection model, we design the algorithm and investigate the sample complexity. 
The parameter $\btheta$ may lie in a multi-dimensional space. 
To illustrate our algorithm, we first describe the one-dimensional human comparison procedure, and then we extend to the multi-dimensional scenario.

\section{One-dimensional Human Comparison}\label{sec: human}

The low-dimensional representation $\hat{\varphi}$ is established in the initial stage, as we discuss in Section \ref{sec: human-AI}. Following this stage, our framework calls for model alignment through human comparisons. At this point, the parameter $\btheta$ may lie in a multi-dimensional space.
To illustrate our algorithm, we first describe the one-dimensional human comparison procedure and then extend it to the multi-dimensional scenario.

The primary challenge in designing the framework lies in strategically selecting pairwise comparisons to minimize the number of comparisons (or sample complexity), which resembles the principles of active learning. First, we consider a special case in which humans can select the better model with 100\% accuracy (deterministic selection). Second, we relax this to a probabilistic selection. For the probabilistic selection, we propose an algorithm we call Model Alignment through Probabilistic Bisection (MAPB) and characterize its sample complexity.

\subsection{Deterministic Bisection}
We first illustrate the algorithm in a one-dimensional space with 100\% selection accuracy. This scenario corresponds to $\gamma\to 0$ in RUM. In the one-dimensional case, suppose $\theta^*\in \Theta \subseteq\mathbb{R}$ and $|\theta|\leq \beta_{\Theta}$ for all $\theta\in \Theta$. 

The overall goal is to use the least number of samples to find a $\theta$ such that $d(\theta,\theta^*)\leq \varepsilon$. 
When a human is able to select a better model with 100\% accuracy, the deterministic bisection algorithm can be used to approach the true parameter.
The algorithm operates by maintaining a \searchint{} $[\theta^-,\theta^+]$ that contains the true parameter $\theta^*$. The interval is initialized by $[-\beta_{\Theta},\beta_{\Theta}]$, and at round $k$, the center of this interval with respect to the distance $d(\cdot,\cdot)$ is used as the \emph{query point}. Specifically, after defining the query point $\theta_k$ to be the single point in the set $H(\theta^-,\theta^+)$, we ask the human to compare the utility of two points, denoted by $c_{\Delta}^-(\theta_k)$ and $c_{\Delta}^+(\theta_k)$. These two points have equal distance $\Delta/2$ from $\theta_k$ in opposite directions, where $\Delta$ is called \compgran{} and is a tuning parameter. For example, for the Euclidean distance, at each round, $\theta_k=(\theta^-+\theta^+)/2$; also,  
$c_{\Delta}^-(\theta_k)=\theta_k-\Delta/2$, and $c_{\Delta}^+(\theta_k)=\theta_k+\Delta/2$.
If the human selects $c_{\Delta}^-(\theta_k)$, then we eliminate the upper half of the \searchint{} by updating $\theta^+$ to be equal to $\theta_k$; otherwise, we eliminate the bottom half of the \searchint{} by setting $\theta^-$ to be equal to $\theta_k$. After this bisection step, we continue the process by selecting $\theta_{k+1}$ to be $H(\theta^-,\theta^+)$ again, and repeat the above step until the length of the \searchint{} falls below the precision $\varepsilon$. The pseudo-code is presented in Algorithm \ref{Alg: DB}.

~\\

\begin{center}
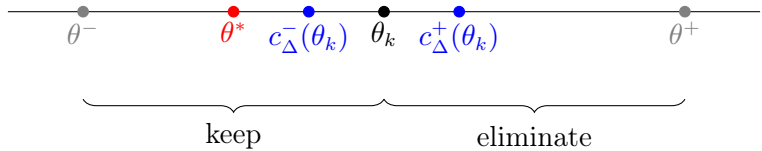

\begin{tikzpicture}
\draw (-5,0) -- (5,0);
\filldraw[black] (0,0) circle (2pt) node[anchor=north]{$\theta_k$};
\filldraw[red] (-2,0) circle (2pt) node[anchor=north]{$\theta^*$};
\filldraw[gray] (-4,0) circle (2pt) node[anchor=north]{$\theta^-$};
\filldraw[gray] (4,0) circle (2pt) node[anchor=north]{$\theta^+$};
\filldraw[blue] (1,0) circle (2pt) node[anchor=north]{$c_\Delta^+(\theta_k)$};
\filldraw[blue] (-1,0) circle (2pt) node[anchor=north]{$c_\Delta^-(\theta_k)$};

 \draw [decorate,decoration={brace,amplitude=5pt,mirror,raise=4ex}]
  (-4,-0.5) -- (0,-0.5) node[midway,yshift=-3em]{keep};

  \draw [decorate,decoration={brace,amplitude=5pt,mirror,raise=4ex}]
  (0,-0.5) -- (4,-0.5) node[midway,yshift=-3em]{eliminate};

\end{tikzpicture}
\captionof{figure}{An illustrative example. Among two choices $c_{\Delta}^-(\theta_k)$ and $c_{\Delta}^+(\theta_k)$, $c_{\Delta}^-(\theta_k)$ would be selected because it is closer to the true parameter. In this case, the interval $(\theta_k, \theta^+]$ is eliminated.}\label{fig: outline}
\end{center}

\begin{algorithm}
   \caption{\textnormal{\textsf{DB}} (Deterministic Bisection)}\label{Alg: DB}
\begin{algorithmic}[1]
\State \textbf{input}:  precision $\varepsilon$; \compgran{} $\Delta$; $k=0$; $\theta_k = 0$; $\theta^+ =\beta_{\Theta}$; $\theta^- = -\beta_{\Theta}$; 
   \Repeat
  \State Ask human to evaluate $c_\Delta^-(\theta_k)$ and $c_\Delta^+(\theta_k)$
  \If{$c_\Delta^+(\theta_k)$ is chosen}
  \State $\theta^- = \theta_k$;
  \Else{ $\theta^+=\theta_k$;}
  \EndIf
  \State $k=k+1$; $\theta_k=H(\theta^-,\theta^+)$;
    \Until{$d(\theta^+,\theta^-)\leq \varepsilon$.}
\end{algorithmic}
\end{algorithm}

We can show the following result for the performance of Algorithm \textsf{DB}.

\begin{proposition}\label{prop: bisection}
Suppose the human makes the selection with 100\% accuracy $(\gamma\to 0)$. Fix $\varepsilon>0$. After $O(\log_2 \left(\frac{\beta_\Theta}{\varepsilon}\right))$ rounds, Algorithm \ref{Alg: DB} reaches a point $\theta_k$ such that $d(\theta_k,\theta^*)\leq \varepsilon$.
\end{proposition}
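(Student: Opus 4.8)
The plan is to run the standard bisection analysis, split into a correctness invariant and a geometric-contraction count. First I would establish, by induction on the round $k$, that the invariant $\theta^*\in[\theta^-,\theta^+]$ holds at the start of every round. The base case is immediate since $|\theta^*|\le\beta_\Theta$. For the inductive step, note that the query point $\theta_k=H(\theta^-,\theta^+)$ is the common $d$-equidistant point of $c_\Delta^-(\theta_k)$ and $c_\Delta^+(\theta_k)$, so by the uniqueness clause of Assumption \ref{assump: utility} we have $H(c_\Delta^-(\theta_k),c_\Delta^+(\theta_k))=\{\theta_k\}$, and the two half-spaces $H^{\pm}$ coincide with the two sides of $\theta_k$. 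With $100\%$ accuracy $(\gamma\to 0)$ the human selects whichever of $c_\Delta^\pm(\theta_k)$ has the larger utility; invoking the stated implication $u(\btheta_1)\ge u(\btheta_2)\Rightarrow\theta^*\in H^-(\btheta_1,\btheta_2)$ then places $\theta^*$ on the selected side of $\theta_k$, which is exactly the half the algorithm retains, so the invariant is preserved. (If $\theta^*=\theta_k$ the conclusion $d(\theta_k,\theta^*)=0\le\varepsilon$ already holds, so this degenerate tie can be ignored.)

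Second, I would quantify the contraction of the search interval. In one dimension I would use that $d(\theta,\theta')=|G(\theta)-G(\theta')|$ for a strictly increasing $G$ — this is the form taken by the paper's examples (the Euclidean distance, $G=\mathrm{id}$, and Example \ref{example: bounded utility}, $G(\theta)=\int_0^\theta\upsilon$) and is precisely what makes the $d$-midpoint and the half-spaces well-defined. Under the reparametrization $z=G(\theta)$ the distance becomes additive along the line and the $d$-midpoint $\theta_k$ becomes the arithmetic midpoint of $[G(\theta^-),G(\theta^+)]$. Consequently each round exactly halves the $d$-length $d(\theta^-,\theta^+)$, so after $k$ rounds it equals $2^{-k}D_0$, where $D_0:=d(-\beta_\Theta,\beta_\Theta)$.

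Third comes the termination count and the accuracy conclusion. The loop stops once $2^{-k}D_0\le\varepsilon$, i.e.\ after $k=\lceil\log_2(D_0/\varepsilon)\rceil$ rounds; since $D_0=O(\beta_\Theta)$ (it equals $2\beta_\Theta$ in the Euclidean case, and is bounded by a constant multiple of $\beta_\Theta$ whenever $d$ is comparable to the Euclidean metric), this is $O(\log_2(\beta_\Theta/\varepsilon))$. Finally I would combine the two parts: at termination $\theta^*\in[\theta^-,\theta^+]$ and $d(\theta^-,\theta^+)\le\varepsilon$, so by additivity any point of the interval — in particular the returned $\theta_k$ — satisfies $d(\theta_k,\theta^*)\le d(\theta^-,\theta^+)\le\varepsilon$.

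The main obstacle is the treatment of the general distance $d$ rather than the bisection bookkeeping itself: for an arbitrary metric the $d$-midpoint need not bisect the $d$-length, and the two ``sides'' of $\theta_k$ need not coincide with the half-spaces $H^\pm$. The monotone-reparametrization viewpoint is what resolves this cleanly and simultaneously reduces the whole argument to textbook bisection; the only additional ingredient is a mild regularity bound relating $D_0=d(-\beta_\Theta,\beta_\Theta)$ to $\beta_\Theta$ so that the logarithm can be written as $O(\log_2(\beta_\Theta/\varepsilon))$.
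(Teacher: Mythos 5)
Your proof is correct, and its core is the same as the paper's: the search interval contracts by a factor of two in $d$-length each round, so $\lceil \log_2(D_0/\varepsilon)\rceil$ rounds suffice, where $D_0$ is the initial $d$-length. The paper's own proof, however, is a two-line Euclidean version of this argument: each round ``filters a half of the space,'' the length after $k$ rounds is $2\beta_\Theta/2^k$, and the count follows by setting this below $\varepsilon$; it neither verifies the invariant $\theta^*\in[\theta^-,\theta^+]$ nor says what halving means under a general distance $d$. You supply both missing pieces. The induction via the half-space implication of Assumption \ref{assump: utility} (together with your dismissal of the tie $\theta^*=\theta_k$) is routine, but your treatment of general $d$ is a genuine refinement: you correctly observe that for an arbitrary metric the equidistant point need not bisect the $d$-length --- the triangle inequality only yields that the retained half has $d$-length \emph{at least} half the old one, which is the wrong direction for contraction --- so exact halving requires $d$ to be additive along the line, which you encode as $d(\theta,\theta')=|G(\theta)-G(\theta')|$ with $G$ strictly increasing. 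That structural property holds in both of the paper's examples (Euclidean distance and Example \ref{example: bounded utility}) and is implicitly assumed in the paper's proof; making it explicit is what lets your argument cover the general $d$ appearing in the statement of Proposition \ref{prop: bisection}. The only price is the mild extra hypothesis $D_0=d(-\beta_\Theta,\beta_\Theta)=O(\beta_\Theta)$ needed to rewrite $\log_2(D_0/\varepsilon)$ as $O(\log_2(\beta_\Theta/\varepsilon))$; the paper sidesteps this only because it silently works with the Euclidean metric, for which $D_0=2\beta_\Theta$.
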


Note that the comparison granularity has no impact on Algorithm \ref{Alg: DB} and its computational complexity because the deterministic model assumes 100\% accuracy for any $\Delta$. However, this deterministic model of human selection as described exhibits two significant limitations: First, in practical scenarios, humans may not always be able to identify the correct choice, implying that $\gamma>0$; second, the accuracy of human selections is influenced by the \compgran{} $\Delta$. To address these limitations, we introduce a more general probabilistic model in the following section.

\subsection{Probabilistic Bisection}\label{subsec: PB}

First, we enrich the model to capture the probabilistic nature of human comparisons by noting that, in the RUM, when $\gamma$ is strictly positive, the human's selection is random, with a certain distribution that favors the correct answer, parameterized by $\gamma$. Precisely, as shown in Lemma \ref{lemma: precision}, the human selects $c_\Delta^-(\theta)$ with probability
\[
p^-_{\gamma}(\theta) := \left[1+\exp\left\{\frac{u\Big(c_\Delta^+(\theta)\Big)-u\Big(c_\Delta^-(\theta)\Big)}{\gamma}\right\}\right]^{-1}\,,
\]
and the human selects $c_\Delta^+(\theta)$ with probability $p^+_{\gamma}(\theta)$, which is equal to $1-p^-_{\gamma}(\theta)$.

We note that, in this equation, the selection probability depends on the \compgran{}. This dependence naturally captures the difficulty of the selection for the human. On the one hand, when $\Delta$ is too small, $c_\Delta^-(\theta)$ and $c_\Delta^+(\theta)$ converge toward each other; hence, $p^-_{\gamma}(\theta)$ and $p^+_{\gamma}(\theta)$ converge to $1/2$, representing the difficulty of distinguishing between two highly similar choices. On the other hand, when $\Delta$ is too large, the selection can  become difficult as well because both options are far away from $\theta^*$. To illustrate, this difficulty can be captured when the distance function converges to a finite value for far away points from $\theta^*$, as in Example \ref{example: bounded utility}. In this case, $p^-_{\gamma}(\theta)$ and $p^+_{\gamma}(\theta)$ could also converge to 1/2 (see Example \ref{example: far-away answer}). As a result of this phenomenon, and for brevity, we exclude $\Delta$ in the notations $p^-_{\gamma}(\theta)$ and $p^+_{\gamma}(\theta)$ in the remainder of the paper.

\begin{example}\label{example: far-away answer}
Consider 
\[
u(\theta)=-\frac{1}{1+\exp(-|\theta|)}\,.
\]
Here, the highest utility is achieved at $\theta=0$. For any finite $\theta$, when $\Delta$ is too large, both $u(c_\Delta^+(\theta))$ and $u(c_\Delta^-(\theta))$ are close to -1, and thus $p^-_{\gamma}(\theta)$ and $p^+_{\gamma}(\theta)$ are both close to 1/2.
\end{example}

\begin{definition}[$(\varepsilon,\delta)$-alignment problem]\label{definition: precision and confidence}
Let $\varepsilon>0$ denote the precision level and let $\delta>0$ represent the confidence level. The $(\varepsilon,\delta)$-alignment problem aims to identify a $\theta$ such that $\Pr(d(\theta,\theta^*)\leq \varepsilon)\geq 1-\delta$.
\end{definition}

Our objective is to solve the $(\varepsilon,\delta)$-alignment problem with small sample complexity. By selecting appropriate values for these parameters, we can also control other metrics such as risk. The algorithm design is inspired by the probabilistic bisection developed by \cite{frazier2019probabilistic}. Although their work primarily focuses on the analysis of convergence rates, our emphasis is on conducting a sample complexity analysis for an algorithm designed to terminate when both the precision level and confidence level criteria are met.

If $\theta<\theta^*$, then $p_\gamma^-(\theta) <1/2$ and $p_\gamma^+(\theta)>1/2$; and vice versa. 
We introduce several notations to enhance simplicity. Let $Y(\theta)$ denote the selection: $Y(\theta)=c_\Delta^-(\theta)$ with probability $p_\gamma^-(\theta) $ and $Y(\theta)=c_\Delta^+(\theta)$ with probability $p_\gamma^+(\theta)$. Define $\tilde{Z}(\theta) = 2\cdot\1(Y(\theta)=c_\Delta^+(\theta))-1$, so that $\tilde{Z}(\theta)=1$ if $Y(\theta)=c_\Delta^+(\theta)$ (when we believe $\theta^*$ is more likely to be to the right of $\theta$) and $\tilde{Z}(\theta)=-1$ if $Y(\theta)=c_\Delta^-(\theta)$ (when we believe $\theta^*$ is more likely to be to the left of $\theta$). Note that we have $\tilde{Z}(\theta)=1$ with probability $p_\gamma^+(\theta)$. Also, define 
\[
\tilde{p}(\theta) = \Pr\big(\tilde{Z}(\theta)\text{ correctly identifies the direction}\big) = \begin{cases} \Pr(\tilde{Z}(\theta)=1), & \mbox{if } \theta \leq \theta^*  \\ \Pr(\tilde{Z}(\theta)=-1), & \mbox{if } \theta>\theta^* \end{cases}\,.
\]

Figure \ref{fig: proofoutline} shows the structure of introducing our main algorithm. Algorithm \ftpb{}, described as follows, involves two types of moves: the \emph{horizontal moves} and the \emph{vertical moves}. For any fixed $\theta$, we ask the human to repeatedly evaluate $c_\Delta^-(\theta)$ and $c_\Delta^+(\theta)$.
We refer to this as the vertical moves. After the vertical stopping criteria are satisfied, the algorithm decides the next $\theta$ to evaluate according to a certain distribution that will be defined. We call this step the horizontal move. The horizontal moves continue until the horizontal stopping criteria are satisfied. In the following two sections, we discuss the vertical moves and horizontal moves, as well as their stopping criteria, in sequence. Then, as shown in Figure 2, we introduce the two-stage Human-AI collaboration framework for multiple dimensions in Section \ref{sec: human-AI} and practical implementations in Section \ref{sec: practical implementations}.\\

\begin{center}
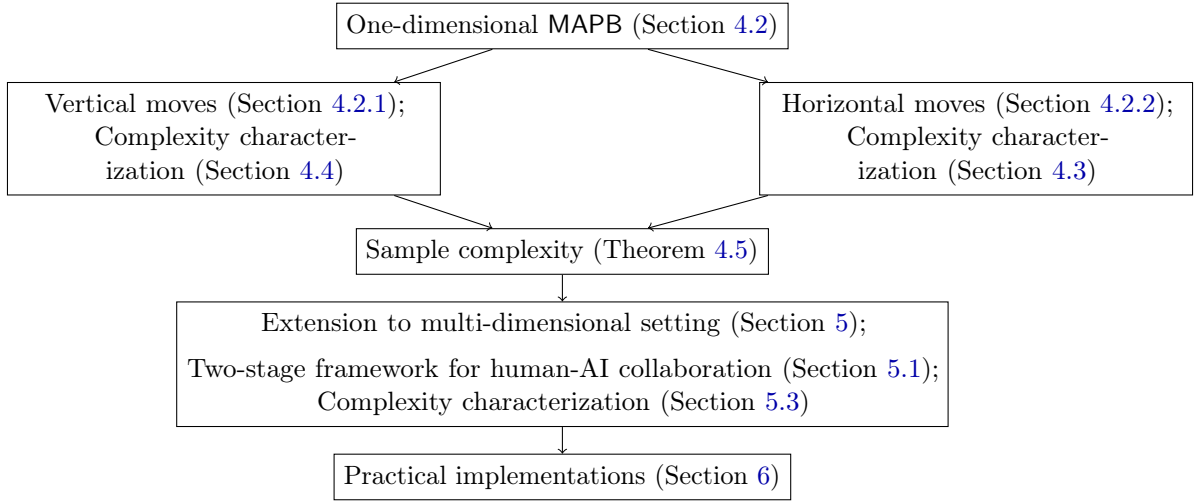

\begin{tikzpicture}
    \node (1) [draw] at (6.5,0)    { {\small One-dimensional \ftpb{} (Section \ref{subsec: PB})}};
    \node (2) [draw] at (12,-1.5)  [text width= 5.5cm, align=center]    {{\small Horizontal moves (Section \ref{subsubsec: horizontal moves});\\ Complexity characterization (Section \ref{subsec: horizontal move})}};
     \node (3) [draw] at (6.5, -3)    {{\small Sample complexity (Theorem \ref{subsec: sample complexity})}};
     \node (4) [draw] at (6.5, -4.5) [text width=10cm, align=center]    {{\small 
     Extension to multi-dimensional setting (Section \ref{sec: human-AI});\\
     Two-stage framework for human-AI collaboration (Section \ref{subsec: two-stage});\\ Complexity characterization (Section \ref{subsec: SFT+FTPB})}};
     \node (5) [draw] at (6.5, -6)    {{\small Practical implementations (Section \ref{sec: practical implementations})}};

      \node (6) [draw] at (2, -1.5) [text width= 5.5cm, align=center]   {{\small Vertical moves (Section \ref{subsubsec: vertical moves});\\Complexity characterization (Section \ref{subsec: vertical move})}};
    \draw [->] (1) -- node [sloped, above] {} (2);
    \draw [->] (1) -- node [sloped, above] {} (6);
    \draw [->] (2) -- node [sloped, above] {} (3);
    \draw [->] (3) -- node [sloped, above] {} (4);
    \draw [->] (6) -- node [sloped, above] {} (3);
    \draw [->] (4) -- node [sloped, above] {} (5);
\end{tikzpicture}
\captionof{figure}{Roadmap for introducing Algorithm \ref{Alg: RTB}.}\label{fig: proofoutline}
\end{center}

\subsubsection{Vertical moves}\label{subsubsec: vertical moves}
Fix some $\theta$. 
We repeatedly ask for human evaluations of the query point $\theta$, achieved through comparisons of $c^+_\Delta(\theta)$ and $c^-_\Delta(\theta)$. 
We observe a sequence of signs $\{\tilde{Z}_i(\theta)\}_i$. If $\theta<\theta^*$, then the expectation $\E[\tilde{Z}_i(\theta)]=2\tilde{p}(\theta)-1>0$; if $\theta>\theta^*$, then the expectation $\E[\tilde{Z}_i(\theta)]=1-2\tilde{p}(\theta)<0$. From the response $\tilde{Z}_i(\theta)$, we define the simple random walk $\{S_m(\theta):m\geq 0\}$  with $S_0(\theta) = 0$ and $S_m(\theta) = \sum_{i=1}^{m}\tilde{Z}_i(\theta)$ for $m\geq 1$. Sequential test of power one indicates whether the drift $\omega$ of a random walk satisfies the hypothesis $\omega<0$ versus $\omega>0$.

Define the stopping time $\tau^{\uparrow}_1(\theta)= \inf\{m\in \mathbb{N}: |S_m(\theta)|\geq \hbar_m\}$, where $\{\hbar_m\}_m$ is a positive sequence. Here, ``$\uparrow$'' and ``$\rightarrow$'' represent vertical and horizontal, respectively. The test decides that $\theta<\theta^*$, if $S_{\tau_1^\uparrow(\theta)}(\theta)\geq \hbar_{\tau^{\uparrow}_1(\theta)}$; it decides that $\theta>\theta^*$ if $S_{\tau_1^\uparrow(\theta)}(\theta)\leq -\hbar_{\tau^{\uparrow}_1(\theta)}$; and the test does not make a decision if $\tau^{\uparrow}_1(\theta)=\infty$. 

Assume now that the algorithm queries a point $\theta_k$ (not equal to $\theta^*$) at the $(k+1)^{th}$ iteration. We then observe a random walk with the $m^{th}$ term $S_{k,m}=S_{k,m}(\theta_k)=\sum_{i=1}^m Z_{k,i}(\theta_k)$ until we reach the end of the test, which is called the power-one test \citep{robbins1974expected}. 
Define the new signal:
\[
Z_k(\theta_k) = \begin{cases} +1, & \mbox{if } S_{k,\tau_{1}^{\uparrow}(\theta_k)}>0\,,  \\ -1, & \mbox{if } S_{k,\tau_{1}^{\uparrow}(\theta_k)}<0 \,. \end{cases}
\]

\begin{lemma}\label{lemma: detection accuracy}
Define $\hbar_m = (2m(\ln(m+1)-\ln \eta))^{1/2}$.
In the event that $\theta_k>\theta^*$, $\Pr(Z_k(\theta_k)=+1|\theta_k)\leq \eta/2$. In the event that $\theta_k<\theta^*$, $\Pr(Z_k(\theta_k)=-1|\theta_k)\leq \eta/2$. That is, the detection accuracy $p(\theta)$ when the vertical move stops is bounded by $p(\theta)\geq 1-\eta/2$.
\end{lemma}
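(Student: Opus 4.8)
The plan is to analyze the one-sided error of the sequential power-one test applied to the simple random walk $S_m(\theta_k)$. Consider the case $\theta_k > \theta^*$, so that the true drift is negative: $\E[\tilde Z_i(\theta_k)] = 1 - 2\tilde p(\theta_k) < 0$. The event $\{Z_k(\theta_k) = +1\}$ is the event that the walk crosses the \emph{upper} boundary first, i.e.\ $S_{\tau_1^\uparrow(\theta_k)} > 0$, which is a ``wrong-direction'' crossing against the drift. I would bound the probability of this wrong crossing by constructing a nonnegative supermartingale (or martingale) from the random walk and the boundary sequence $\hbar_m$, then applying a maximal inequality or the optional stopping theorem. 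The choice $\hbar_m = (2m(\ln(m+1) - \ln\eta))^{1/2}$ is precisely the boundary that makes a suitable exponential/likelihood-ratio object a supermartingale with the right normalizing constant, yielding the bound $\eta/2$.

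**Key steps in order.**

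First, I would fix $\theta_k > \theta^*$ and focus on bounding $\Pr(\exists\, m : S_m \ge \hbar_m \mid \theta_k)$, since a wrong decision $Z_k = +1$ requires the upper boundary to be hit at the stopping time. Second, I would introduce an exponential martingale of the form $M_m = \exp(\lambda S_m - m\psi(\lambda))$, where $\psi(\lambda) = \log \E[e^{\lambda \tilde Z_i}]$ is the log-moment-generating function of a single step; because the drift is negative, for a suitable $\lambda > 0$ one can control the upward excursions. Third, I would apply Ville's maximal inequality (or optional stopping at $\tau_1^\uparrow$) to $M_m$ and optimize the bound over the curved boundary $\hbar_m$; the $\ln(m+1)$ term in $\hbar_m$ is what makes the union over all $m$ summable, so that the tail probability telescopes to a bound proportional to $\eta$. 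Fourth, matching constants gives the clean bound $\eta/2$. The symmetric argument for $\theta_k < \theta^*$ (wrong downward crossing against an upward drift) follows identically by reflection, giving $\Pr(Z_k = -1 \mid \theta_k) \le \eta/2$. The final detection-accuracy claim $p(\theta) \ge 1 - \eta/2$ is then immediate, since the probability of a correct decision is at least $1 - \eta/2$ in either case.

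**Anticipated main obstacle.**

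The technical crux will be handling the \emph{curved} boundary $\hbar_m$ rather than a flat one: a straightforward exponential-martingale bound controls crossings of a \emph{linear} boundary, but here the boundary grows like $\sqrt{m\log m}$, so I expect to need a peeling/stratification argument (partitioning the time axis into geometric blocks $m \in [2^j, 2^{j+1})$ and bounding the crossing probability on each block) or, more cleanly, to invoke a known power-one test result from the sequential-analysis literature such as \citet{robbins1974expected}, which is cited exactly for this purpose. A subtlety I would flag is that the steps $\tilde Z_i$ are bounded $(\pm 1)$ but not mean-zero, so the exponential bound must be taken relative to the drifted increments; the fact that $\tilde p(\theta_k)$ is bounded away from $1/2$ by Lemma~\ref{lemma: precision} is what guarantees the drift is genuinely nonzero and keeps the wrong-crossing probability controlled uniformly, and I would make sure the bound does \emph{not} secretly depend on the unknown magnitude of the drift — only the boundary sequence and $\eta$ should enter, which is the hallmark of a power-one test.
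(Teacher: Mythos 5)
Your high-level plan---bound the wrong-direction crossing of the curved boundary, make the bound drift-free, handle the mirror case by reflection---matches the shape of the paper's argument, and your fallback option of importing a known power-one guarantee is in fact exactly what the paper does: it invokes \cite{robbins1970statistical} (the 1974 reference you name is used later, in Lemma~\ref{prop: complexity micro}, for the \emph{expected sample size} of the test, not for its error probability). The genuine gap is in your primary technique. A fixed-$\lambda$ exponential supermartingale plus Ville's inequality controls crossings of \emph{linear} boundaries only, and the geometric-block peeling you propose to repair this does not recover the constant $\eta/2$: with doubling blocks the per-block bound is $\Pr\bigl(\max_{m\le 2^{j+1}}S_m\ge \hbar_{2^j}\bigr)\le \exp\bigl(-\hbar_{2^j}^2/2^{j+2}\bigr)=\bigl(\eta/(2^j+1)\bigr)^{1/2}$, which sums over $j$ to $O(\sqrt{\eta})$; refining to blocks of ratio $1+\epsilon$ and optimizing $\epsilon$ still leaves $O(\eta\log(1/\eta))$. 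The boundary $\hbar_m=(2m(\ln(m+1)-\ln\eta))^{1/2}$ is not generic: it is calibrated to a \emph{mixture} martingale. Under zero drift, writing $Q_m$ for the number of $+1$ steps, Hoeffding's inequality shows that $|S_m|\ge\hbar_m$ forces the binomial pmf to satisfy $\binom{m}{Q_m}2^{-m}\le\eta/(m+1)$, which is equivalent to the Beta-mixture likelihood ratio $L_m=2^m/\bigl((m+1)\binom{m}{Q_m}\bigr)$ exceeding $1/\eta$; since $L_m$ is a nonnegative mean-one martingale, Ville's inequality applied to $L_m$ (not to $e^{\lambda S_m-m\lambda^2/2}$) gives crossing probability at most $\eta$ with no loss. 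That is Robbins' theorem, which the paper cites rather than reproves, and it is the only step in the proof where the specific form of $\hbar_m$ is actually used.

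Two further points you leave unresolved. First, the factor $1/2$: the crossing bound above is two-sided ($\le\eta$), and the paper obtains $\eta/2$ for the wrong side by first reducing the drifted walk to the zero-drift walk via a sample-path (coupling) argument---a wrong upper crossing by the negative-drift walk forces one by the dominating symmetric walk---and then splitting $\eta$ equally between the two boundaries by the symmetry of the zero-drift walk. Your ``reflection'' remark only exchanges the two cases of the lemma; it does not produce the $1/2$. (Your implicit drift reduction, $pe^{\lambda}+(1-p)e^{-\lambda}\le\cosh\lambda$ for $p\le 1/2$ and $\lambda\ge 0$, is also a valid route to drift-freeness, but it feeds back into the peeling bound and inherits its loss.) Second, a conceptual slip: you assert that $\tilde{p}(\theta_k)$ being bounded away from $1/2$ is ``what keeps the wrong-crossing probability controlled uniformly.'' The opposite is the point of a power-one test: the bound $\eta/2$ must, and does, hold uniformly over all drifts of the correct sign, including drifts vanishing as $\theta_k\to\theta^*$; the drift magnitude enters only the expected stopping time in Lemma~\ref{prop: complexity micro}. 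Indeed, Lemma~\ref{lemma: precision} gives no uniform gap---uniformity holds only outside a neighborhood of $\theta^*$, by the compactness argument of Lemma~\ref{lemma: finite cover}---so an error-probability proof that leaned on such a gap would fail exactly in the regime where the algorithm spends most of its queries.
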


Lemma \ref{lemma: detection accuracy} states that the detection accuracy is lower bounded by $p_c:=1-\eta/2$, where the accuracy can be chosen through the selection of $\eta$. As $\theta$ approaches $\theta^*$, differentiating between choices becomes increasingly challenging because the drift tends to diminish, potentially requiring more steps in the testing process. However, our objective, you must recall, is to locate a $\theta$ such that $|\theta-\theta^*|\leq \varepsilon$, with a confidence level of $1-\delta$. In the vicinity of $\theta^*$, waiting until $|S_m(\theta)|\geq \hbar_m$ to terminate the test may not be necessary. A high volume of queries at a particular point $\theta_k$ suggests close proximity between $\theta_k$ and $\theta$, so that considering a earlier termination of the test may be advisable.

Based on this insight, we introduce a stopping time $\tau^\uparrow(\delta,\varepsilon)$, which is determined by the precision and confidence levels. This stopping time regulates the maximum number of queries carried out at a single point as $\theta$ approaches the local neighborhood of the true parameter (a concept we elaborate on in Section \ref{subsec: vertical move}). Specifically, $\theta$ converges toward $\theta^*$ in a defined local neighborhood, denoted as $\{\theta': d(\theta,\theta')\leq a\}$ for some positive constant $a$; here, surpassing the threshold $\tau^\uparrow(\delta,\varepsilon)$ signifies the difficulty in distinguishing between the parameters because of their proximity, and
the algorithm terminates.

\subsubsection{Horizontal moves}\label{subsubsec: horizontal moves}
Algorithm \ref{Alg: RTB} begins with a prior density $\rho_0$ on $[-\beta_{\Theta}, \beta_{\Theta}]$ that is positive everywhere. For example, $\rho_0$ can be chosen as a uniform distribution. Choose some $p\in (1/2, p_c)$, and let $q=1-p$. When the algorithm queries at the point $\theta_k$ to obtain $Z_k$, we update the density $\rho_k$:
\begin{enumerate}
    \item If $Z_k(\theta_k)=+1$, then 
    \[
\rho_{k+1}(\theta') = \begin{cases} 2p \rho_{k}(\theta'), & \mbox{if } \theta'\geq\theta_k;\\ 2q \rho_{k}(\theta'), & \mbox{if } \theta'<\theta_k. \end{cases}
\]
\item If $Z_k(\theta_k)=-1$, then 
    \[
\rho_{k+1}(\theta') = \begin{cases} 2q \rho_{k}(\theta'), & \mbox{if } \theta'\geq\theta_k;\\ 2p \rho_{k}(\theta'), & \mbox{if } \theta'<\theta_k. \end{cases}
\] 
\end{enumerate}
The next query point $\theta_{k+1}$ is determined as the median of $\rho_{k+1}$. The probability density on the correct side would be increased while the probability density on the wrong side would be reduced. Define $\tau^\rightarrow(\delta,a)$ as the number of horizontal moves to reach $a$-neighborhood of $\theta^*$ with confidence level $1-\delta$ (which will be specifically defined in the paper later). That is, parameter $\theta$ moves to the local $a$-neighborhood of $\theta^*$ with probability of at least $1-\delta/2$ when the number of steps exceeds $\tau^\rightarrow(\delta/2,a)$. The horizontal move stops when the total number of moves reaches $\tau^{\rightarrow}(\delta/2,\varepsilon)$ because at this point the precision level $\varepsilon$ has been achieved.

In summary, there are two scenarios:
\begin{enumerate}
    \item If the horizontal move is less than $\tau^{\rightarrow}(\delta/2,a)$ (indicating it may not yet reach the local neighborhood of $\theta^*$ within a distance of $a$), the vertical test stops when the movement reaches $\tau_1^\uparrow(\theta)$. 
    Then it moves to the next query point $\theta$.
    \item If the horizontal move is more than $\tau^{\rightarrow}(\delta/2,a)$, signifying entry into the local neighborhood of $\theta^*$ within a distance of $a$, there are two possible options: 
  \begin{enumerate}
    \item   When  the vertical move first reaches $\tau^\uparrow(\delta/2,\varepsilon)$, the algorithm terminates, having identified a $\theta$ within the $\varepsilon$ range of $\theta^*$.
    \item When the vertical move first reaches $\tau_1^\uparrow(\theta)$, the vertical test stops and then transitions to the next query point $\theta$.
  \end{enumerate}
\end{enumerate}

The rationale behind distinguishing between local and non-local scenarios lies in the observation that two distinct situations can result in a significant number of vertical moves: (1) when both compared models are performing well (i.e., $\theta_1$ and $\theta_2$ are close to $\theta^*$), or (2) when both models are performing poorly (i.e., $\theta_1$ and $\theta_2$ are far from $\theta^*$). In the latter case, that the algorithm not terminate prematurely is crucial because $\theta$ has not yet gotten sufficiently close to $\theta^*$. Therefore, the stopping criterion $\tau^\uparrow(\delta,\varepsilon)$ is only applied when $\theta$ transitions into the local neighborhood of $\theta^*$.

Algorithm \ref{Alg: RTB} is described as follows. We initialize the distribution as $\mu_0$. Without any historical data, $\mu_0$ can be initialized as the uniform distribution over the confidence region. Alternatively, $\mu_0$ can be approximated by the empirical distribution of the estimator using the bootstrap method and applying it to the first phase.
At step $k$, the algorithm tests $\theta_k$ by collecting $Z_{k,s}(\theta_k)$. If $\theta$ is located at the local neighborhood of $\theta^*$ ($k\geq \tau^{\rightarrow}(\delta/2,a)$) and the current step label $s$ is larger than $\tau^\uparrow(\delta/2,\varepsilon)$, then the algorithm stops and $\theta=\theta_k$. Otherwise, the vertical move stops when $s$ is larger than $\tau_1^\uparrow(\theta_k)$. Once the test on $\theta_k$ stops, we update the prior distribution $\rho_k$ and select the next $\theta_{k+1}$ as the median of the updated distribution.

\begin{algorithm}
   \caption{\textsf{MAPB} (Model Alignment through Probabilistic Bisection)}\label{Alg: RTB}
\begin{algorithmic}[1]
\State \textbf{input}: confidence level $\delta$, precision $\varepsilon$; initial distribution $\mu_0$; local parameter $a>0$; $k=0$; $\eta$;
   \Repeat
  \State pick $\theta_k$ as the median of $\mu_k$ 
  \State $s=0$
  \Repeat
  \State test $\theta_{k}$ by collecting $Z_{k,s}(\theta_k)$
  \State $s = s+1$
  \If{$k\geq \tau^{\rightarrow}(\delta/2,a)$ and step $s$ satisfies stopping criteria $\tau^\uparrow(\delta/2,\varepsilon)$}
  \State Algorithm terminates and $\theta = \theta_k$
\EndIf
  \Until{vertical step $s$ satisfies stopping criteria $\tau_1^\uparrow(\theta_k)$}
  \State $k=k+1$; update prior distribution $\rho_{k}$
    \Until{horizontal step $k$ satisfies stopping criteria $\tau^\rightarrow(\delta,\varepsilon)$
    }
    \State pick $\theta_k$ as the median of $\mu_k$ 
\end{algorithmic}
\end{algorithm}

\subsection{Complexity of Horizontal Moves}\label{subsec: horizontal move}

Let $T^{\uparrow}(\theta)$ denote the number of comparisons that are collected at $\theta$ and let $T^{\rightarrow}$ denote the number of horizontal moves; let $T_k^+$ denote the total number of comparisons that are collected after $k$ horizontal moves, where ``$+$'' denotes both horizontal and vertical moves. Note that $T^\uparrow(\theta)$, $T^\rightarrow$, and $T_k^+$ are all random variables. 
In the following paragraphs, we discuss the complexity of vertical and horizontal moves, and we characterize its sample complexity. 

We first characterize the complexity of horizontal moves. 
Fix some $a\in (0,\theta^*+\beta_\Theta)$. Define three regions: $A=[-\beta_\Theta, \theta^*-a]$, $B=(\theta^*-a,\theta^*]$, and $C=(\theta^*,\beta_\Theta]$. Define $T(a)=\inf\{k'\geq 0: \theta_k\in B\cup C, \forall k\geq k'\}$ to be the time required until the sequence of medians never reenters $A$. Let $\nu_k$ denote the (random) measure corresponding to the conditional posterior distribution, conditional on lying to the left of $\theta^*$, so that for any measurable $D$, $\nu_k(D)=\mu_k(D\cap [-\beta_\Theta,\theta^*])/\mu_k([-\beta_\Theta,\theta^*])$. Next, fix $\Delta\in(0,1/2)$ and define the stopping time $\tau(a)$ to be the first time that the conditional mass in $B$ lies above $1-\Delta$ and the median $\theta_k$ lies to the left of $\theta^*$; that is, 
\[
\tau(a)=\inf\{k\geq 0: \nu_k(B)>1-\Delta, \theta_k<\theta^*\}.
\]

In line with the approach taken by \cite{frazier2019probabilistic} in establishing a connection between $T(a)$ and $\tau(a)$, we present a similar result in Lemma \ref{lemma: T(a) and tau(a)}.

\begin{lemma}\label{lemma: T(a) and tau(a)}
It holds that
\begin{equation}\label{eq: Ta decomposition1}
T(a)\leq_{s.t.} \tau(a) + R,
\end{equation}
where $R$ is a random variable that is independent of $\tau(a)$ and $a$.
\end{lemma}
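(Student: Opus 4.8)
The plan is to reduce $T(a)$ to a last-passage time of the scalar process $M_k:=\mu_k(A)$ above the level $1/2$, and then to dominate the dynamics after $\tau(a)$ by a negative-drift random walk whose law involves only universal constants. Since $\mu_k$ admits a positive density (the prior $\rho_0$ is positive everywhere and the multiplicative updates preserve this), the median $\theta_k$ is the unique point with $\mu_k((-\infty,\theta_k])=1/2$, and because $A=[-\beta_\Theta,\theta^*-a]$ is an initial segment of the support we have $\theta_k\in A\iff M_k\geq 1/2$. Hence $\theta_k\in B\cup C\iff M_k<1/2$, and
\[
T(a)=\Big(1+\sup\{k\geq 0: M_k\geq 1/2\}\Big)^{+},
\]
so $T(a)$ is controlled by the excursions of $M_k$ into $[1/2,1]$. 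At $\tau(a)$ the state is favorable: since $A\cup B=[-\beta_\Theta,\theta^*]$ and $\nu_k$ is $\mu_k$ conditioned on this set, $\nu_{\tau(a)}(A)=1-\nu_{\tau(a)}(B)<\Delta$, and as the conditioning denominator is at most $1$ we get $M_{\tau(a)}\leq \nu_{\tau(a)}(A)<\Delta<1/2$. Thus at $\tau(a)$ the $A$-mass already sits strictly below the critical level, with a gap $\Delta$ that does not depend on $a$.

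Next I would track $M_k$ for $k\geq\tau(a)$. Whenever the median lies in $B\cup C$, all of $A$ is to the left of $\theta_k$, so each update multiplies $M_k$ by $2p$ on a left-signal and by $2q<1$ on a right-signal. By Lemma \ref{lemma: detection accuracy}, whenever the median is to the left of $\theta^*$ the correct (right) signal — the one shrinking $M_k$ by the factor $2q$ — is received with probability at least $p_c>p>1/2$; only sojourns in $C$ can inflate $M_k$, and there the correct (left) signal, received with probability $\geq p_c$, pushes the median back toward $B$. The plan is to build, by a monotone coupling, a multiplicative walk $\widetilde M_k$ started at $\widetilde M_0=\Delta$ with log-increments in $\{\log 2p,\log 2q\}$ and up-probability bounded by a universal $p'<1/2$ (depending only on $p,q,\eta$) such that $M_{\tau(a)+k}\leq_{s.t.}\widetilde M_k$. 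Replacing the true configuration $\mu_{\tau(a)}$ by the worst admissible one (mass exactly $\Delta$ on $A$) is where the strong Markov property at the stopping time $\tau(a)$ enters: conditionally on $\mathcal{F}_{\tau(a)}$ the remaining run is a fresh bisection whose law depends on $\mu_{\tau(a)}$ only, and domination bounds it by the $a$-free worst case.

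I would then set $R:=\big(1+\sup\{k\geq 0:\widetilde M_k\geq 1/2\}\big)^{+}$, the last-exit time of the dominating walk above $1/2$. Since $\log\widetilde M_k$ is a random walk with strictly negative drift started below $\log(1/2)$, the set $\{k:\widetilde M_k\geq 1/2\}$ is a.s.\ finite by standard last-passage estimates, so $R<\infty$ almost surely. By construction the law of $R$ depends only on $(p,q,\eta,\Delta)$, hence not on $a$, and, being a functional of the post-$\tau(a)$ randomness alone, $R$ is independent of $\tau(a)$. Combining the three steps, on $\{\theta_{\tau(a)}\leq\theta^*\}$ the last index with $M_k\geq 1/2$ after $\tau(a)$ is dominated by $R$, which yields $T(a)\leq_{s.t.}\tau(a)+R$.

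The main obstacle is the stochastic domination of the second step, made uniform in $a$. One must show that the up-moves of $M_k$ (the factor-$2p$ events) occur with probability uniformly below $1/2$ over all reachable configurations — in particular controlling the excursions of the median into $C$, where correct signals temporarily grow the $A$-mass — and that a single worst-case initialization together with a single universal up-probability $p'$ dominates the genuine, $a$-dependent and history-dependent transition kernel. Verifying that this coupling is monotone (so that starting from the larger value $\Delta\geq M_{\tau(a)}$ produces a stochastically larger exit time) and that it decouples cleanly from $\mathcal{F}_{\tau(a)}$ is the delicate part; the finiteness and $a$-independence of $R$ then follow from the negative-drift last-passage bound.
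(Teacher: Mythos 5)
Your reduction of $T(a)$ to a last-passage problem is fine: since $A=[-\beta_\Theta,\theta^*-a]$ is an initial segment of the support and $\mu_k$ has a positive density, the equivalence $\theta_k\in A\iff \mu_k(A)\geq 1/2$ holds, and the bound $M_{\tau(a)}=\mu_{\tau(a)}(A)\leq \nu_{\tau(a)}(A)<\Delta$ is correct. The proof breaks, however, at exactly the step you flag as ``the delicate part,'' and it is not merely delicate—the claim you would need is false. When the median $\theta_k$ lies in $C=(\theta^*,\beta_\Theta]$, the \emph{correct} signal is $Z_k=-1$, which by Lemma \ref{lemma: detection accuracy} occurs with probability at least $p_c=1-\eta/2>1/2$; this signal multiplies all mass to the left of $\theta_k$—in particular all of $A$—by $2p>1$. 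So during sojourns of the median in $C$, the up-moves of $M_k$ occur with probability at least $p_c$, not ``uniformly below $1/2$.'' No monotone coupling with an i.i.d.\ multiplicative walk of up-probability $p'<1/2$ can stochastically dominate $M_k$ through such a sojourn, and these sojourns are not rare events: they are the typical behavior whenever the posterior mass has overshot to the right of $\theta^*$. Your proposed $R$ therefore does not dominate the post-$\tau(a)$ excursions of $M_k$, and the inequality $T(a)\leq_{s.t.}\tau(a)+R$ does not follow from your construction.

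The repair is precisely the device the paper (following \cite{frazier2019probabilistic}) builds the proof around: track the \emph{conditional} mass $\nu_k(A)=\mu_k(A)/\mu_k(A\cup B)$ rather than the unconditional mass $\mu_k(A)$. When the median is in $C$, both $\mu_k(A)$ and $\mu_k(B)$ are multiplied by the same factor, so $\nu_k(A)$ is invariant; it can move only when the median is in $A\cup B$, i.e.\ to the left of $\theta^*$, where the correct signal (probability $\geq p_c$) does push it down. Moreover, re-entry of the median into $A$ from $C$ requires $\mu_k(A)\geq 1/2$, which forces $\nu_k(A)\geq 1/2$ since $\mu_k(A)\leq \nu_k(A)$; so the re-entries that define $T(a)$ are governed entirely by $\nu_k$, and the $C$-sojourns that wreck your domination are harmless for $\nu_k$. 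The paper then decomposes the time after $U_0=\tau(a)$ into renewal cycles, $V_j=\inf\{i>U_{j-1}:\nu_i(A)\geq 1/2\}$ and $U_j=\inf\{i>V_j:\nu_i(B)\geq 1-\Delta\}$, writes $T(a)=U_0+\sum_{j=1}^{\Gamma}[(V_j-U_{j-1})+(U_j-V_j)]$ with $\Gamma$ the number of cycles, and obtains the $a$-independent dominating variable $R$ from the cycle structure (Equation (6) of that reference). If you want to salvage your negative-drift last-passage picture, run it on $\log\nu_k(A)$ restricted to the steps at which the median lies in $A\cup B$; that is essentially the supermartingale argument underlying the cited cycle bounds.
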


From \eqref{eq: Ta decomposition1}, to bound $T(\cdot)$, we characterize the distribution of $\tau(\cdot)$. Algorithm \ref{Alg: RTB} starts with an initial distribution $\mu_0$. Without any prior information, $\mu_0$ can be set as a uniform distribution; otherwise, we can use the prior information to update $\mu_0$. First, we make the following mild assumption regarding the initial prior distribution.

\begin{assumption}[Prior distribution]\label{assump: prior}
The initial prior distribution satisfies that
$\mu_0(B) \geq a^{\varsigma}$ for some $\varsigma\leq 1$.
\end{assumption}

The uniform distribution satisfies Assumption \ref{assump: prior} because $\mu_0(B)= \mu_0((\theta^*-a,\theta^*])=a$. With additional information, the prior distribution would be more centered around the true parameter $\theta^*$. Thus, $\varsigma$ is assumed to be less than or equal to 1. Under this assumption of prior distribution, we analyze the distribution of $\tau(\cdot)$ in Lemma \ref{lemma: prior}.

\begin{lemma}\label{lemma: prior}
Under Assumption \ref{assump: prior}, by setting $r=r_2\alpha/(4\varsigma)$,
\[
 \Pr(\tau(\omega e^{-r k})>k/2)\leq e^{-r_2 \alpha k/4} \cdot \frac{\theta^*}{\omega^\varsigma} +\beta e^{-r_1 k/2},
\]
for some $r_1, r_2, \alpha >0$ and any $\omega>0$.
\end{lemma}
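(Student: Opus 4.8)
The plan is to bound $\tau(\omega e^{-rk})$ by controlling how quickly the conditional posterior mass concentrates in the shrinking band $B = (\theta^*-a, \theta^*]$ with $a = \omega e^{-rk}$. The driving force is the multiplicative update on the density: each horizontal move in the correct direction multiplies the density on the favorable side by $2p > 1$ and on the unfavorable side by $2q < 1$. Because the detection accuracy satisfies $p(\theta) \geq p > 1/2$ (from Lemma~\ref{lemma: detection accuracy} with $p < p_c = 1 - \eta/2$), the log-likelihood ratio tracking the posterior mass on the correct half-line has a strictly positive drift. I would first set up this drift: define a suitable log-ratio or potential function measuring the conditional mass $\nu_k(B)$ relative to its complement, and show its expected increment per step is bounded below by a constant proportional to $\alpha := \ln(2p)$ (or equivalently the KL-type gap $p\ln(2p) + q\ln(2q)$).

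The next step is to convert this drift into an exponential concentration statement. The event $\{\tau(a) > k/2\}$ means that after $k/2$ horizontal moves the conditional mass in $B$ has \emph{not} risen above $1-\Delta$ (with the median still left of $\theta^*$). I would split this failure into two contributions matching the two terms in the bound. The first term, $e^{-r_2\alpha k/4}\cdot \theta^*/\omega^\varsigma$, should arise from applying Assumption~\ref{assump: prior}: the initial mass $\mu_0(B) \geq a^\varsigma = (\omega e^{-rk})^\varsigma = \omega^\varsigma e^{-r\varsigma k}$, so with $r = r_2\alpha/(4\varsigma)$ we get $r\varsigma k = r_2\alpha k/4$, and the reciprocal initial mass $a^{-\varsigma} = \omega^{-\varsigma} e^{r_2\alpha k/4}$ must be overcome by the exponential growth from the drift; balancing the growth rate against this small-prior penalty yields precisely the $e^{-r_2\alpha k/4}\cdot\theta^*/\omega^\varsigma$ factor (with $\theta^*$ entering through the normalization of the initial conditional measure on $[-\beta_\Theta,\theta^*]$). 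The second term, $\beta e^{-r_1 k/2}$, I expect to come from the probability that the median wanders or that the sign observations $Z_k$ are wrong sufficiently often — a large-deviation / Azuma–Hoeffding bound on the number of ``incorrect'' horizontal moves over $k/2$ steps, giving the clean geometric decay in $k$.

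I would carry out the argument by following the structure of \cite{frazier2019probabilistic}, adapting their posterior-concentration analysis to our stopping-time formulation. Concretely: (i) establish the per-step multiplicative update and the induced submartingale (or supermartingale for the failure event) structure on $\log \nu_k(B)$; (ii) invoke a maximal inequality or Chernoff bound to control the probability that the accumulated drift has not pushed $\nu_k(B)$ past $1-\Delta$ by step $k/2$; (iii) separately bound the probability that the median exits to the correct side fails, using the fact that incorrect signals occur with probability at most $q = 1-p$ independently across moves; and (iv) combine via a union bound and substitute $a = \omega e^{-rk}$ together with $r = r_2\alpha/(4\varsigma)$ to match the two stated terms.

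The main obstacle I anticipate is handling the \emph{interaction} between the shrinking neighborhood radius $a = \omega e^{-rk}$ and the concentration rate simultaneously: the target band is not fixed but contracts geometrically at exactly the rate the posterior concentrates, so the two exponential scales must be matched rather than treated independently. Getting the constant $r = r_2\alpha/(4\varsigma)$ right requires tracking how the factor-of-two in the density update, the drift constant $\alpha$, and the prior exponent $\varsigma$ combine, and ensuring that the initial-mass penalty $a^{-\varsigma}$ is dominated by the drift-driven growth over $k/2$ steps with room to spare. A secondary technical point is justifying that conditioning on ``lying to the left of $\theta^*$'' (the measure $\nu_k$) does not destroy the martingale structure — I would need to verify that the normalization $\mu_k([-\beta_\Theta,\theta^*])$ stays bounded away from degeneracy over the relevant horizon, which is where the $\theta^*$ and $\beta$ constants enter.
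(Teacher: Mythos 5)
Your overall architecture points in the right direction---multiplicative posterior updates, a drift/supermartingale argument for the conditional mass $\nu_k(B)$, a Markov/Chernoff step, and balancing the prior penalty $a^{-\varsigma}=\omega^{-\varsigma}e^{r_2\alpha k/4}$ against the concentration rate (your accounting of the first term, including how $\theta^*$ enters through the normalization of the initial conditional measure, matches the paper's calculation). But there is a genuine gap in your step (i): it is \emph{not} true that $\log\nu_k(B)$ has a strictly positive expected increment on every step. Whenever the median $\theta_k$ lies to the right of $\theta^*$, the whole half-line $[-\beta_\Theta,\theta^*]$ sits on one side of the query point, so the numerator $\mu_k(B)$ and the normalizer $\mu_k([-\beta_\Theta,\theta^*])$ are multiplied by the \emph{same} factor ($2q$ if $Z_k=+1$, $2p$ if $Z_k=-1$), and hence $\nu_{k+1}(B)=\nu_k(B)$ exactly: the drift is zero on such steps regardless of how accurate the human signals are. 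Because the posterior median can spend many steps to the right of $\theta^*$, you cannot run the Chernoff/Azuma bound of your step (ii) uniformly over $k/2$ steps; the drift is only available on steps where the median is strictly left of $\theta^*$ with a quantitative margin.

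The paper resolves this with a device your proposal lacks: it counts the ``active'' steps $\tilde{N}(k)=\sum_{j=1}^{k-1}\1(\mu_j([-\beta_\Theta,\theta^*])\geq 1/2+\iota)$ (on such steps the median is forced strictly left of $\theta^*$), shows via Lemma 2 of \cite{frazier2019probabilistic} that $M_k=e^{r_2\tilde{N}(k\wedge\tau)}/\nu_{k\wedge\tau}(B)$ is a supermartingale---so the exponential growth is credited only to active steps---and then invokes a separate, nontrivial result (Lemma 1 of \cite{frazier2019probabilistic}) asserting that active steps accumulate linearly: $\Pr(\tilde{N}(k/2)\leq \alpha k/2)\leq \beta_1 e^{-r_1 k/2}$. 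That result is the true source of the second term $\beta e^{-r_1 k/2}$ in the lemma, not, as your step (iii) suggests, an Azuma--Hoeffding bound on the number of incorrect signals. This is where your plan underestimates the difficulty: the time the median spends on the wrong side of $\theta^*$ is not a sum of (conditionally) independent error indicators; it is a functional of the history-dependent mass process $L_k=\mu_k([-\beta_\Theta,\theta^*])$, whose multiplicative dynamics must themselves be analyzed by a drift argument. Also note that in the paper $\alpha$ is this linear density of active steps while the drift rate is $r_2$; your identification $\alpha:=\ln(2p)$ conflates the two. Without the active-step counting device and the linear-growth lemma (or an equivalent substitute), your steps (ii)--(iv) cannot be completed as written.
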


Lemma \ref{lemma: prior} shows the light tail distribution of $\tau(\cdot)$. For small $\omega$, note that $\omega^\varsigma$ decreases in $\varsigma$. That is, when the initial prior distribution is centered more closely around the true parameter, $\varsigma$ is smaller; thus, the right-hand-side tail distribution is lighter. 

We next characterize the complexity of horizontal moves.

\begin{theorem}[Sample complexity of horizontal moves]\label{thm: complexity}
Under Assumption \ref{assump: prior}, by setting 
\[
\tau^\rightarrow(\delta,\varepsilon) = \max \left\{\frac{4\varsigma(1+\varsigma)\log(\frac{8\theta^*}{\delta \varepsilon^{\varsigma}})}{r_2\alpha},  \frac{2\log(8\beta_2/\delta)}{r_1}, \frac{\log(8 \beta_2/\delta)}{r_R}\right\}.
\]
 we have that 
\[
\Pr(|\theta_{k}-\theta^*|\leq \varepsilon)>1-\delta, \quad \forall k\geq  \tau^{\rightarrow}(\delta,\varepsilon)\,.
\]
\end{theorem}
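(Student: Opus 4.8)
The plan is to combine the two supporting lemmas with a union bound over the two failure modes (failing to reach the $a$-neighborhood, and failing to reach precision $\varepsilon$). The quantity $\tau^\rightarrow(\delta,\varepsilon)$ is defined as a maximum of three terms, so the natural strategy is to show that each term controls the tail of $\tau(\cdot)$ coming from a distinct piece of the bound in Lemma \ref{lemma: prior}, and then to invoke Lemma \ref{lemma: T(a) and tau(a)} to transfer control from $\tau(a)$ to $T(a)$ (the actual number of horizontal moves after which the medians never reenter $A$).

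First I would recall from Lemma \ref{lemma: prior} that
\[
\Pr\big(\tau(\omega e^{-rk})>k/2\big)\leq e^{-r_2\alpha k/4}\cdot\frac{\theta^*}{\omega^\varsigma}+\beta e^{-r_1 k/2},
\]
with $r=r_2\alpha/(4\varsigma)$. The key observation is that choosing $\omega e^{-rk}=\varepsilon$, i.e. setting the target neighborhood radius equal to the desired precision, converts this into a tail bound on the event that the median has not reached the $\varepsilon$-neighborhood after $k$ steps. With that substitution, $\omega^\varsigma = \varepsilon^\varsigma e^{r\varsigma k}$, and since $r\varsigma = r_2\alpha/4$, the first term becomes $\theta^* e^{-r_2\alpha k/4}e^{-r_2\alpha k/4}\varepsilon^{-\varsigma}=\theta^* \varepsilon^{-\varsigma}e^{-r_2\alpha k/2}$. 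I would then impose that $k\geq \tau^\rightarrow(\delta,\varepsilon)$ forces each of the two resulting exponential terms below $\delta/4$. Matching the first term against the first entry of the max gives the requirement $k\geq \frac{4\varsigma(1+\varsigma)\log(8\theta^*/(\delta\varepsilon^\varsigma))}{r_2\alpha}$ up to the constant bookkeeping, and matching $\beta e^{-r_1 k/2}\leq \delta/4$ gives the second entry $\frac{2\log(8\beta_2/\delta)}{r_1}$. The third entry $\frac{\log(8\beta_2/\delta)}{r_R}$ is there to control the residual term $R$ from Lemma \ref{lemma: T(a) and tau(a)}, which I would bound via its own (geometric-type) tail so that $\Pr(R>k/2)$ is likewise at most $\delta/4$; summing $\tau(a)\leq k/2$ and $R\leq k/2$ yields $T(a)\leq k$.

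Putting the pieces together: with $k\geq \tau^\rightarrow(\delta,\varepsilon)$, Lemma \ref{lemma: T(a) and tau(a)} gives $T(a)\leq_{s.t.}\tau(a)+R$, and the union bound over $\{\tau(\varepsilon e^{rk})>k/2\}$ (the two exponential pieces) and $\{R>k/2\}$ shows that with probability at least $1-\delta$ the medians have entered and stay within the $\varepsilon$-neighborhood of $\theta^*$ for all subsequent steps; since $\theta_k$ is the current median, this is exactly $\Pr(|\theta_k-\theta^*|\leq\varepsilon)>1-\delta$ for every $k\geq\tau^\rightarrow(\delta,\varepsilon)$. The factor-of-two in the $k/2$ thresholds and the three-way split of $\delta$ into pieces of size $\delta/4$ (with slack for the constant $8$ inside the logarithms) is exactly what makes the stated constants work out.

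The main obstacle I anticipate is the careful alignment of the radius-versus-time substitution $\omega e^{-rk}=\varepsilon$ with the definition $r=r_2\alpha/(4\varsigma)$: one must verify that the two exponential factors genuinely collapse to a clean $e^{-r_2\alpha k/2}$ rate and that the $\varepsilon^{-\varsigma}$ prefactor is absorbed by the logarithmic term in the first entry of the max, which forces the extra $(1+\varsigma)$ factor appearing there. A secondary subtlety is handling the residual variable $R$ from Lemma \ref{lemma: T(a) and tau(a)} — its independence from $\tau(a)$ and $a$ is what licenses the clean product/union-bound decomposition, but I would need its explicit (at least geometric) tail to justify the third term $\frac{\log(8\beta_2/\delta)}{r_R}$, so the precise form of $R$'s distribution is the piece I would pin down most carefully.
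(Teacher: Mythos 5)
Your proposal follows essentially the same route as the paper's proof: both rest on Lemma~\ref{lemma: T(a) and tau(a)} to write $T(a)\leq_{s.t.}\tau(a)+R$, on Lemma~\ref{lemma: prior} for the tail of $\tau(\cdot)$, on an exponential tail for $R$ (the paper imports $\Pr(R>k/2)\leq \beta_2 e^{-r_R k}$ from Lemma~9 of \cite{frazier2019probabilistic}), and on splitting the horizon as $\tau(a)\leq k/2$, $R\leq k/2$. Your substitution $\omega e^{-rk}=\varepsilon$ is a mild variant of the paper's bookkeeping: the paper keeps $\omega$ fixed, uses the first entry of its max to guarantee $\omega e^{-rk}\leq\varepsilon$, and only at the end balances $\omega=\left(8\theta^*\varepsilon/\delta\right)^{1/(1+\varsigma)}$. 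Both give the same qualitative threshold, and your admitted hand-wave ``up to the constant bookkeeping'' is in fact where the paper itself is loose — its merged first entry $4\varsigma(1+\varsigma)\log(\cdot)/(r_2\alpha)$ does not dominate the term $4\log(\cdot)/\bigl((1+\varsigma)r_2\alpha\bigr)$ arising in its own derivation when $\varsigma$ is small.

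The one genuine gap is that $\tau(a)$ and $T(a)$ are \emph{one-sided} objects. $T(a)$ is the time after which the medians never reenter $A=[-\beta_\Theta,\theta^*-a]$, so controlling $\tau(\varepsilon)+R$ only rules out left deviations, i.e. it yields $\theta_k>\theta^*-\varepsilon$; the median can still sit far to the right of $\theta^*$ inside $C=(\theta^*,\beta_\Theta]$. Your conclusion $\Pr(|\theta_k-\theta^*|\leq\varepsilon)>1-\delta$ is two-sided, so you must also control the mirror-image stopping time associated with $[\theta^*+\varepsilon,\beta_\Theta]$, which doubles the number of failure events in the union bound. This is exactly why the paper splits $\Pr(e^{rk}|\theta_k-\theta^*|>\omega)$ into the parts with $\1(\theta_k\leq\theta^*)$ and $\1(\theta_k>\theta^*)$, picks up a factor of $2$, and therefore drives each of the three tail terms below $\delta/8$ rather than your $\delta/4$: with your allocation the two-sided bound would total $2\cdot(3\delta/4)=3\delta/2$, which proves nothing. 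The fix is immediate — the constant $8$ inside the logarithms of $\tau^\rightarrow(\delta,\varepsilon)$ is precisely the room needed for six events at $\delta/8$ each — but as written your argument establishes only the one-sided statement.
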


Theorem \ref{thm: complexity} states that after $\tau^\rightarrow(\delta,\varepsilon)$ horizontal moves, $\theta_k$ reaches the $\varepsilon$-neighborhood of the true parameter with probability at least $1-\delta$. The complexity of this process depends on $\log(1/\delta)$, $\log(1/\varepsilon)$, and $\varsigma(1+\varsigma)$. When the initial prior distribution is more closely centered around the true parameter $\theta^*$, fewer horizontal moves are needed.

Define the local region of $\theta^*$ as $[\theta^*-a,\theta^*+a]$.
We divide the horizontal moves into two phases: Phase I includes all steps until the sequence of the medians never reenters $[-\beta_\Theta,\theta^*-a]\cup[\theta^*+a,\beta_\Theta]$; and Phase II includes all steps after Phase I. Similar to the definition of $T(a)$, we define $T'(a)$ as the stopping time, which is when the medians never reenter $A'=[\theta^*+a, \beta_\Theta]$. Let $\psi(a)$ be the time required until the sequence of the medians never reenters $A \cup A'$ -- that is, $\psi(a)=\max\{T(a), T'(a)\}$. Proposition \ref{prop: exp stopping} provides an upper bound for the expectation of moves outside of $[\theta^*-a,\theta^*+a]$.

\begin{proposition}\label{prop: exp stopping}
The expectation of $\psi(a)$ is bounded by
\[
\E[\psi(a)] \leq \frac{4\theta^*}{r_2 \alpha a^\varsigma}+\frac{4\beta_1}{r_1}+\frac{2\beta_2}{r_R}.
\]
\end{proposition}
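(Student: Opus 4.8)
The plan is to split $\psi(a)=\max\{T(a),T'(a)\}$ via the crude but sufficient bound $\max\{T(a),T'(a)\}\le T(a)+T'(a)$, so that $\E[\psi(a)]\le \E[T(a)]+\E[T'(a)]$. Since the horizontal update rule is symmetric about $\theta^*$ (the far-left region $A=[-\beta_\Theta,\theta^*-a]$ and the far-right region $A'=[\theta^*+a,\beta_\Theta]$ play interchangeable roles, and $T'(a)$ satisfies the analogue of Lemma~\ref{lemma: T(a) and tau(a)} with the right-conditioned posterior), it suffices to bound $\E[T(a)]$ and then double the result. This reduction already explains the shape of the target: a single $\tfrac{4\theta^*}{r_2\alpha a^\varsigma}$ term arising from two copies of a $\tfrac{2\theta^*}{r_2\alpha a^\varsigma}$ bound, and likewise for the other two terms.

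To bound $\E[T(a)]$, I would first invoke Lemma~\ref{lemma: T(a) and tau(a)}, which gives $T(a)\le_{s.t.}\tau(a)+R$ and hence $\E[T(a)]\le \E[\tau(a)]+\E[R]$ by monotonicity of expectation under stochastic dominance together with linearity. The heart of the argument is turning Lemma~\ref{lemma: prior}, whose bound is stated for the $k$-dependent argument $\tau(\omega e^{-rk})$, into a clean geometric tail bound for $\tau(a)$ at \emph{fixed} $a$. The key step: since Lemma~\ref{lemma: prior} holds for \emph{every} $\omega>0$, for each integer $m\ge0$ set $k=2m$ and choose $\omega=\omega_m:=a\,e^{2rm}$, so that $\omega_m e^{-rk}=a$ exactly and $k/2=m$. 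Plugging in $r=r_2\alpha/(4\varsigma)$, the factor $\omega_m^{\varsigma}=a^\varsigma e^{m r_2\alpha/2}$ cancels against part of $e^{-r_2\alpha k/4}$, collapsing the first summand into a single clean rate and yielding
\[
\Pr(\tau(a)>m)\;\le\;\frac{\theta^*}{a^\varsigma}\,e^{-r_2\alpha m}+\beta_1\,e^{-r_1 m},\qquad m\ge 0.
\]

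With this geometric tail established, apply $\E[\tau(a)]=\sum_{m\ge0}\Pr(\tau(a)>m)$, sum the two geometric series, and use $\tfrac{1}{1-e^{-x}}\le\tfrac{2}{x}$ for the relevant small rates $x=r_2\alpha,\,r_1$ to obtain $\E[\tau(a)]\le \tfrac{2\theta^*}{r_2\alpha a^\varsigma}+\tfrac{2\beta_1}{r_1}$. Combining with the tail bound on $R$ (geometric with rate $r_R$ and prefactor $\beta_2$, giving $\E[R]\le \tfrac{\beta_2}{r_R}$) gives $\E[T(a)]\le \tfrac{2\theta^*}{r_2\alpha a^\varsigma}+\tfrac{2\beta_1}{r_1}+\tfrac{\beta_2}{r_R}$, and doubling for the $T'(a)$ contribution produces the stated bound.

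The one genuinely non-routine step—and thus the main obstacle—is the change of variables that converts Lemma~\ref{lemma: prior} into a fixed-$a$ tail bound. I would take care to verify that the $m$-dependent choice $\omega_m=a\,e^{2rm}$ is legitimate (it is, precisely because the lemma is quantified over all $\omega>0$) and that the exponent arithmetic with $r=r_2\alpha/(4\varsigma)$ makes the growing $\omega_m^\varsigma$ factor combine with $e^{-r_2\alpha k/4}$ into the decaying rate $e^{-r_2\alpha m}$ rather than leaving a residual exponentially growing term. Secondary points to confirm are that the constant in $\tfrac{1}{1-e^{-x}}\le\tfrac{2}{x}$ is valid for the actual magnitudes of $r_2\alpha$ and $r_1$, and that $T'(a)$ genuinely inherits the $T(a)$ bound from the left/right symmetry of the posterior update.
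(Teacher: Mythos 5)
Your overall route is the paper's route: the same stochastic-dominance decomposition $T(a)\le_{s.t.}\tau(a)+R$ from Lemma~\ref{lemma: T(a) and tau(a)}, the same $k$-dependent substitution into Lemma~\ref{lemma: prior} (the paper's proof takes $\omega=a e^{rk}$, which is exactly your $\omega_m=a e^{2rm}$ under the reparametrization $k=2m$), and the same left/right symmetry to absorb $T'(a)$. Your exponent arithmetic is also correct: with $r=r_2\alpha/(4\varsigma)$, the factor $\omega_m^{-\varsigma}=a^{-\varsigma}e^{-r_2\alpha m/2}$ combines with $e^{-r_2\alpha(2m)/4}$ to give $\Pr(\tau(a)>m)\le \frac{\theta^*}{a^\varsigma}e^{-r_2\alpha m}+\beta_1 e^{-r_1 m}$, which is precisely the tail bound the paper works with (stated there as a bound on $\Pr(\tau(a)>k/2)$).

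The gap is in the step you labeled ``secondary'': it is in fact the only place your derivation can break, and it does break without an assumption the paper never makes. Summing the tail over integers gives $\E[\tau(a)]\le \frac{\theta^*}{a^\varsigma}\cdot\frac{1}{1-e^{-r_2\alpha}}+\frac{\beta_1}{1-e^{-r_1}}$, and the inequality $\frac{1}{1-e^{-x}}\le\frac{2}{x}$ fails once $x$ exceeds roughly $1.6$ (the left side tends to $1$ while the right side tends to $0$); likewise $\frac{1}{1-e^{-2r_R}}\le\frac{1}{r_R}$, which your claim $\E[R]\le\beta_2/r_R$ requires, fails for $r_R$ above roughly $0.8$. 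The constants $r_1,r_2,\alpha,r_R$ are opaque quantities inherited from \cite{frazier2019probabilistic}, and nothing in the paper bounds their magnitude. Moreover, no repair is possible within the integer-indexed framework: a random variable identically equal to $1$ satisfies $\Pr(\tau>m)\le C e^{-xm}$ at every integer $m\ge 0$ for any $C\ge 1$ and any rate $x$, yet its expectation is $1$, not $O(1/x)$, so the integer tail alone cannot imply a bound with $r_2\alpha$ in the denominator. The paper's proof sidesteps this entirely by writing $\E[\psi(a)]=\int_0^\infty \Pr(\psi(a)>s)\, ds$ and applying the tail bound at every \emph{real} argument $s$, so each exponential integrates exactly to its inverse rate, e.g.\ $\int_0^\infty e^{-r_2\alpha s/2}\,ds = 2/(r_2\alpha)$, with no smallness condition. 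The fix to your argument is correspondingly small: carry out your change of variables $\omega=a e^{2rs}$ for every real $s\ge 0$ (legitimate for exactly the reason you gave, namely that Lemma~\ref{lemma: prior} is quantified over all $\omega>0$) and integrate the resulting tail instead of summing it; this reproduces the stated constants, in fact with slack, since the integral gives $\E[R]\le \beta_2/(2r_R)$.
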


\subsection{Complexity of Vertical Moves}\label{subsec: vertical move}

The vertical move has two stopping criteria that depend on whether $\theta_k$ has moved to the local neighborhood of $\theta^*$. If $\theta_k$ is still outside of the local area, the algorithm would finish with the power-one test; otherwise, the algorithm may terminate earlier, before stopping the power-one test.

Lemma \ref{prop: complexity micro} characterizes the expected number of queries needed for testing $\theta$.

\begin{lemma}[Power-one test]\label{prop: complexity micro}

For any $\theta\neq \theta^*$,
\[
\E[\tau_1^{\uparrow}(\theta)]\leq c_1|2\tilde{p}(\theta)-1|^{-2} \ln(|2\tilde{p}(\theta)-1|^{-1}) + c_2\,,
\]
for some $c_1, c_2>0$.
\end{lemma}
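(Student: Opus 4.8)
The plan is to recognize $\{S_m(\theta)\}$ as a simple random walk whose i.i.d.\ increments $\tilde{Z}_i(\theta)\in\{-1,+1\}$ have mean $\E[\tilde{Z}_i(\theta)]=2\tilde{p}(\theta)-1$, so that the drift has magnitude $\mu:=|2\tilde{p}(\theta)-1|>0$ (strictly positive since $\theta\neq\theta^*$). This is exactly the setting of a power-one test against the curved boundary $\hbar_m=(2m(\ln(m+1)-\ln\eta))^{1/2}\asymp\sqrt{m\log m}$, so the statement is a finite-sample version of the expected-sample-size bounds of \cite{robbins1974expected}; I would, however, give a short self-contained argument. Without loss of generality assume $\mu=2\tilde{p}(\theta)-1>0$ (the case $\theta>\theta^*$ is symmetric after flipping signs), and observe that $\tau_1^\uparrow(\theta)=\inf\{m:|S_m(\theta)|\ge\hbar_m\}\le\bar{\tau}:=\inf\{m:S_m(\theta)\ge\hbar_m\}$, so it suffices to bound $\E[\bar{\tau}]$ from above.

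First I would locate the crossover time. The typical value of $S_m(\theta)$ is $\mu m$, while the boundary is only of order $\sqrt{m\log m}$, so I would define $m_0=m_0(\mu,\eta)$ as the least integer for which $\mu m/2\ge\hbar_m$ holds for all $m\ge m_0$. Squaring, this is the inequality $\ln(m+1)+\ln(1/\eta)\le\mu^2 m/8$, whose solution gives $m_0=O\big(\mu^{-2}(\ln(\mu^{-1})+\ln(\eta^{-1}))\big)$; because the left side is logarithmic and the right side linear in $m$, the inequality is preserved for all larger $m$, which I would confirm by a monotonicity check on the gap $\mu m/2-\hbar_m$.

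Next I would convert this into a geometric tail for $\bar{\tau}$. For any $m\ge m_0$ the event $\{\bar{\tau}>m\}$ forces $S_m(\theta)<\hbar_m\le\mu m/2$, i.e.\ a downward deviation of $S_m(\theta)$ from its mean by at least $\mu m/2$. A Hoeffding bound for a sum of $m$ independent $\{-1,+1\}$-valued increments then gives $\Pr(\bar{\tau}>m)\le\Pr(S_m(\theta)-\mu m\le-\mu m/2)\le\exp(-\mu^2 m/8)$. Summing through $\E[\bar{\tau}]=\sum_{m\ge0}\Pr(\bar{\tau}>m)$ yields
\[
\E[\tau_1^\uparrow(\theta)]\le\E[\bar{\tau}]\le m_0+\sum_{m\ge m_0}e^{-\mu^2 m/8}\le m_0+\frac{1}{1-e^{-\mu^2/8}}.
\]

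Finally I would assemble the stated form. The second term is $O(\mu^{-2})$ (using $1-e^{-x}\ge x/2$ for small $x$), and $m_0=O(\mu^{-2}\ln(\mu^{-1}))+O(\mu^{-2})$, so the dominant contribution is $\mu^{-2}\ln(\mu^{-1})$. I expect the main bookkeeping obstacle to be fitting the residual $O(\mu^{-2})$ pieces into the claimed shape $c_1\mu^{-2}\ln(\mu^{-1})+c_2$: for $\mu\le 1/e$ one has $\mu^{-2}\le\mu^{-2}\ln(\mu^{-1})$, so these terms are absorbed into $c_1$, while for $\mu$ bounded away from $0$ the whole expectation is $O(1)$ (the drift is $\Theta(1)$ and the walk crosses an $O(\sqrt{m\log m})$ boundary in $O(1)$ expected steps), which is captured by $c_2$. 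Splitting into these two regimes and choosing $c_1,c_2$ accordingly completes the argument; the one genuinely delicate point is the crossover estimate for $m_0$, where the $\ln(1/\eta)$ term must be tracked to verify that it only inflates the constants and leaves the $\mu$-dependence intact.
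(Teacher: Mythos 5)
Your proposal is correct, but it takes a genuinely different route from the paper. The paper's proof is a citation-based argument: it invokes the asymptotic expected-sample-size results for power-one tests from \cite{robbins1974expected} and \cite{lai1977power}, namely $\limsup_{q\to 0}\E[N^\uparrow(q)]\,q^2(\ln(q^{-1}))^{-1}<\infty$, extracts a constant $c_1$ valid for all drifts $q\leq q_0$, and then handles the large-drift regime $q>q_0$ by asserting that $\E[N^\uparrow(q)]$ is decreasing in $q$, which supplies the additive constant $c_2$. You instead give a self-contained elementary argument: dominate the two-sided stopping time by the one-sided crossing time $\bar\tau$, locate the crossover index $m_0=O\bigl(\mu^{-2}(\ln(\mu^{-1})+\ln(\eta^{-1})+1)\bigr)$ past which the linear drift $\mu m/2$ clears the $\sqrt{m\log m}$ boundary, apply Hoeffding at each fixed $m\geq m_0$ to get the geometric tail $\Pr(\bar\tau>m)\leq e^{-\mu^2 m/8}$, and sum. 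Your route buys three things the paper's does not: explicit, trackable constants (including the $\ln(1/\eta)$ dependence, which indeed only inflates $c_1,c_2$ since $\eta$ is a fixed algorithm parameter); no reliance on external asymptotic results, whose limsup form only guarantees the bound in a neighborhood of $q=0$; and no need for the monotonicity claim $\E[N^\uparrow(q)]$ decreasing in $q$, which the paper states without proof and which would itself require a coupling argument. The paper's route buys brevity. The two arguments converge in the final bookkeeping step, where both split into the small-drift regime (where $\mu^{-2}\leq\mu^{-2}\ln(\mu^{-1})$ absorbs the residual terms into $c_1$) and the large-drift regime (where everything is $O(1)$ and lands in $c_2$). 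Your one flagged concern—the monotonicity of the gap $\mu m/2-\hbar_m$—is handled correctly by your definition of $m_0$ as the least index past which the inequality holds for \emph{all} larger $m$; since $\mu^2m/8-\ln(m+1)-\ln(1/\eta)$ has positive derivative once $m+1>8/\mu^2$, such an $m_0$ exists with the claimed order.
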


Lemma \ref{prop: complexity micro} implies that the expected number of vertical moves increases in the order of $O((2\tilde{p}(\theta)-1)^{-2})$ when $\tilde{p}(\theta)$ approaches 1/2. When $\tilde{p}(\theta)$ is bounded away from $1/2$, the expected vertical move is also bounded. As a result of Lemma \ref{lemma: precision}, $\tilde{p}(\theta)$ is strictly larger than 1/2 for any $\theta\neq \theta^*$; thus, according to Heine–Borel theorem, there exists $\delta'>0$ such that $\tilde{p}(\theta)>1/2+\delta'$ for all $\theta\in [-\beta_\Theta, \theta^*-a]\cup [\theta^*+a,-\beta_\Theta]$. (See the proof of Lemma \ref{lemma: finite cover} in Appendix \ref{appendix: proof for human sec}.)
For the ease of notation, we define the bound of vertical moves:
\[
\phi(\tilde{p}) =  c_1|2\tilde{p}-1|^{-2} \ln(|2\tilde{p}-1|^{-1}) + c_2.
\]
The selection accuracy's being bounded away from $1/2$ implies that $\max_{a\leq \|\theta-\theta^*\|\leq \beta_\Theta}\phi(\tilde{p}(\theta))$ also is bounded. Thus, we can infer the expected number of steps—comprising both horizontal and vertical movements—outside of $[\theta^*-a,\theta^*+a]$ as follows.
\begin{proposition}[Sample complexity of Phase I]\label{prop: hori-verti outside}
The expected steps outside of $[\theta^*-a,\theta^*+a]$ is bounded by
\[
\E\left[\sum_{k=1}^{\psi(a)} T^{\uparrow}(\theta_k) \1(\|\theta_k-\theta^*\|\geq a)\right]\leq \left(\frac{4\theta^*}{r_2 \alpha a^\varsigma}+\frac{4\beta_1}{r_1}+\frac{2\beta_2}{r_R}\right) \max_{a\leq \|\theta-\theta^*\|\leq \beta_\Theta} \phi(\tilde{p}(\theta))\,.
\]

\end{proposition}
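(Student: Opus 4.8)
The plan is to factor the left-hand side into a ``number of outside horizontal moves'' part, which is governed by $\psi(a)$ and already controlled by Proposition \ref{prop: exp stopping}, and a ``per-query vertical cost'' part, which is uniformly bounded by $\max_{a\leq\|\theta-\theta^*\|\leq\beta_\Theta}\phi(\tilde{p}(\theta))$ via Lemma \ref{prop: complexity micro}. The first step is to observe that, regardless of which branch of Algorithm \ref{Alg: RTB} triggers, the power-one stopping time $\tau_1^\uparrow(\theta_k)$ is always an available option for the vertical test, while the early-termination criterion $\tau^\uparrow(\delta/2,\varepsilon)$ can only make the test stop sooner. Hence $T^{\uparrow}(\theta_k)\leq \tau_1^\uparrow(\theta_k)$ surely, and
\[
\sum_{k=1}^{\psi(a)} T^{\uparrow}(\theta_k)\,\1(\|\theta_k-\theta^*\|\geq a)\;\leq\;\sum_{k=1}^{\psi(a)} \tau_1^\uparrow(\theta_k)\,\1(\|\theta_k-\theta^*\|\geq a).
\]

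Next I would remove the random upper limit $\psi(a)$ by extending the sum to $k=\infty$. By definition $\psi(a)$ is the last horizontal step at which a median lies outside $[\theta^*-a,\theta^*+a]$, so for every $k>\psi(a)$ the indicator $\1(\|\theta_k-\theta^*\|\geq a)$ vanishes; therefore the finite and the infinite sums coincide. Since all summands are nonnegative, Tonelli's theorem permits interchanging expectation and summation, reducing the task to bounding $\sum_{k\geq1}\E[\tau_1^\uparrow(\theta_k)\,\1(\|\theta_k-\theta^*\|\geq a)]$.

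The core estimate then conditions on the history at the moment $\theta_k$ is selected, i.e.\ before the fresh human comparisons at $\theta_k$ are drawn. The event $\{\|\theta_k-\theta^*\|\geq a\}$ and the query point $\theta_k$ are measurable with respect to this history, whereas the vertical test at $\theta_k$ uses independent randomness; hence $\E[\tau_1^\uparrow(\theta_k)\mid \theta_k]\leq \phi(\tilde{p}(\theta_k))$ by Lemma \ref{prop: complexity micro} (the indicator guarantees $\theta_k\neq\theta^*$). On $\{\|\theta_k-\theta^*\|\geq a\}$ we have $\theta_k\in[-\beta_\Theta,\theta^*-a]\cup[\theta^*+a,\beta_\Theta]$, where, by the Heine--Borel argument preceding this proposition, $\tilde{p}$ is bounded away from $1/2$, so $\phi(\tilde{p}(\theta_k))\leq \max_{a\leq\|\theta-\theta^*\|\leq\beta_\Theta}\phi(\tilde{p}(\theta))=:\bar{\phi}<\infty$. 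Taking conditional expectation and pulling out $\bar{\phi}$ yields $\E[\tau_1^\uparrow(\theta_k)\,\1(\|\theta_k-\theta^*\|\geq a)]\leq \bar{\phi}\,\Pr(\|\theta_k-\theta^*\|\geq a)$.

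Summing over $k$ and recognizing $\sum_{k\geq1}\Pr(\|\theta_k-\theta^*\|\geq a)=\E[\sum_{k\geq1}\1(\|\theta_k-\theta^*\|\geq a)]$, I would finish by noting that every outside horizontal move occurs at an index $\leq\psi(a)$, so the number of such moves is at most $\psi(a)$ and this expectation is $\leq\E[\psi(a)]$; substituting the explicit bound from Proposition \ref{prop: exp stopping} for $\E[\psi(a)]$ gives the claim. The main obstacle is the dependency bookkeeping: $\psi(a)$ is a last-entrance time, not a stopping time, so Wald's identity does not apply directly. The resolution is exactly the two observations above---that the indicator automatically truncates the sum at $\psi(a)$, and that conditional on the history-measurable query point each vertical test is an independent fresh power-one test---which together decouple the random number of terms from the random per-term costs.
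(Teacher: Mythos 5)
Your proof is correct and arrives at the paper's bound through the same high-level factorization---(expected number of horizontal moves outside the local neighborhood) times (worst-case expected cost of one power-one test)---but the mechanism you use to justify that factorization differs from the paper's in a substantive way. The paper's proof introduces i.i.d. variables $W_k$ distributed as $N^\uparrow(2\tilde{p}_{\min}-1)$, where $\tilde{p}_{\min}$ is the minimal selection accuracy over the outside region, asserts the stochastic domination $T^{\uparrow}(\theta_k)\1(\|\theta_k-\theta^*\|\geq a)\leq_{s.t.} W_k$, and then applies Wald's equation to obtain $\E[\sum_{k=1}^{\psi(a)}W_k]=\E[\psi(a)]\,\E[W]$. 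That step is exactly where the delicacy you identify lives: $\psi(a)$ is a last-entrance time rather than a stopping time, and the vertical durations are correlated with the comparison outcomes $Z_k$ that drive the horizontal trajectory (hence with $\psi(a)$), so the domination-plus-Wald argument requires a coupling that the paper leaves implicit. Your route avoids this entirely: the indicator lets you extend the sum to infinity, Tonelli lets you exchange expectation and summation, conditioning on the pre-test history (with respect to which $\theta_k$ and the event $\{\|\theta_k-\theta^*\|\geq a\}$ are measurable, while the fresh comparisons are independent) lets you apply Lemma \ref{prop: complexity micro} conditionally to get $\E[\tau_1^\uparrow(\theta_k)\mid \mathcal{F}_k]\leq \phi(\tilde{p}(\theta_k))\leq \max_{a\leq\|\theta-\theta^*\|\leq\beta_\Theta}\phi(\tilde{p}(\theta))$ on that event, and the final count $\sum_{k\geq 1}\Pr(\|\theta_k-\theta^*\|\geq a)\leq\E[\psi(a)]$ plugs into Proposition \ref{prop: exp stopping}. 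What your approach buys is rigor on the dependency bookkeeping at no cost in the constant; what the paper's buys is brevity and a reusable ``dominate by a worst-case i.i.d. test'' picture. One caveat, shared by both proofs and therefore not a gap in yours relative to the paper's: the maximization region $a\leq\|\theta-\theta^*\|\leq\beta_\Theta$ does not exactly coincide with the set $[-\beta_\Theta,\theta^*-a]\cup[\theta^*+a,\beta_\Theta]$ in which $\theta_k$ actually lies when $\theta^*\neq 0$, and the finiteness of the maximum rests on the Heine--Borel argument of Lemma \ref{lemma: finite cover}, which you correctly invoke.
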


Proposition \ref{prop: hori-verti outside} highlights that the sample complexity of Phase I hinges on the choice of $a$ and the span of $\theta$ from the prior distribution. If $\beta_\Theta$ is overly expansive, there is a high probability that both selected parameters $\theta_1$ and $\theta_2$ are distant from the true parameter. Consequently, the accuracy of human selection may diminish considerably -- and may even resemble random selection, as illustrated in Example \ref{example: far-away answer}. Thus, by narrowing the range of $\theta$ based on supplementary information, the sample complexity of Phase I can be notably diminished.

Once Phase I ends, $\theta_k$ moves to the $a$-neighborhood of $\theta^*$. In scenarios where both options are close to $\theta^*$, discerning the superior parameter becomes more challenging. Depending on whether the selection accuracy converges toward $1/2$ as the detected parameter approaches the true parameter, we classify the problem into two cases: 1) The selection accuracy is bounded away from 1/2 for all $\theta\neq \theta^*$; and 2) the section accuracy approaches $1/2$. The second case is harder and requires more delicate analysis. We first characterize the sample complexity for Case 1 and then analyze Case 2.

For the scenario where $\tilde{p}$ is bounded away from $\lowp>1/2$,   combining the bound on the vertical moves (Proposition \ref{prop: hori-verti outside}) with the complexity of the horizontal moves (Theorem \ref{thm: complexity}), Theorem \ref{thm: complexity pc} characterizes the total sample complexity.

\begin{theorem}[Sample complexity when $\lowp$ exists]\label{thm: complexity pc}
If $\tilde{p}(\theta)\geq \lowp>1/2$ for all $\theta\in \Theta$, then the expected number of samples collected before algorithm termination is
\[
\E[T^+_{T^{\rightarrow}}]\leq \tau^\rightarrow (\delta,\varepsilon) \cdot\phi(\lowp) = O(\log(1/(\delta \varepsilon))).
\]
\end{theorem}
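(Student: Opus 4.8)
The plan is to bound the total sample count $T^+_{T^\rightarrow}$ by decomposing it into the contributions from Phase~I (steps with $\theta_k$ outside the local region $[\theta^*-a,\theta^*+a]$) and Phase~II (steps inside it), and then controlling each phase separately using the results already established. Concretely, I would write
\[
T^+_{T^\rightarrow} = \sum_{k=1}^{\psi(a)} T^\uparrow(\theta_k)\1(\|\theta_k-\theta^*\|\geq a) + \sum_{k>\psi(a)}^{T^\rightarrow} T^\uparrow(\theta_k)\1(\|\theta_k-\theta^*\|< a),
\]
so that the first sum is exactly the object bounded in Proposition~\ref{prop: hori-verti outside}, and the second sum collects the vertical work done once $\theta_k$ has entered the local neighborhood.

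The key simplification in this theorem is the hypothesis $\tilde p(\theta)\geq\lowp>1/2$ for \emph{all} $\theta\in\Theta$, which means the selection accuracy never collapses to $1/2$, even arbitrarily close to $\theta^*$. First I would observe that under this uniform lower bound, the per-point vertical cost $\phi(\tilde p(\theta))$ is itself uniformly bounded by $\phi(\lowp)$, since $\phi$ is monotone in the gap $|2\tilde p-1|$ (larger gap means fewer queries) and the worst case is at $\tilde p=\lowp$. This collapses the two-phase distinction: \emph{every} query point, whether in Phase~I or Phase~II, costs at most $\phi(\lowp)$ vertical moves in expectation. Hence the total vertical work is at most $\phi(\lowp)$ times the total number of horizontal moves, and the total number of horizontal moves is controlled by the termination criterion $\tau^\rightarrow(\delta,\varepsilon)$ from Theorem~\ref{thm: complexity}. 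Combining, $\E[T^+_{T^\rightarrow}]\leq \tau^\rightarrow(\delta,\varepsilon)\cdot\phi(\lowp)$.

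To make the factorization rigorous rather than heuristic, I would condition on the sequence of query points and apply the tower property: given the realized horizontal trajectory $\{\theta_k\}$, the expected vertical cost at each $\theta_k$ is $\E[T^\uparrow(\theta_k)\mid\theta_k]\leq\phi(\tilde p(\theta_k))\leq\phi(\lowp)$ by Lemma~\ref{prop: complexity micro} (whose bound $\phi$ is exactly this expression) together with the uniform accuracy hypothesis. Summing over the (random) number $T^\rightarrow$ of horizontal steps and taking the outer expectation gives $\E[T^+_{T^\rightarrow}]\leq\phi(\lowp)\,\E[T^\rightarrow]\leq\phi(\lowp)\,\tau^\rightarrow(\delta,\varepsilon)$, where the last inequality uses that the algorithm terminates once the horizontal count reaches $\tau^\rightarrow(\delta,\varepsilon)$. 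The final asymptotic claim $O(\log(1/(\delta\varepsilon)))$ then follows by reading off the dominant term of $\tau^\rightarrow(\delta,\varepsilon)$ in Theorem~\ref{thm: complexity}, namely the $\log(8\theta^*/(\delta\varepsilon^\varsigma))$ piece, while $\phi(\lowp)$ is a constant independent of $\delta$ and $\varepsilon$.

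The main obstacle I anticipate is handling the dependence and stopping-time structure carefully: $T^\rightarrow$ is itself a random stopping time, the query points $\theta_k$ are adaptively generated from the posterior updates, and the vertical costs $T^\uparrow(\theta_k)$ are not independent of the horizontal evolution. The clean factorization above requires that, conditioned on $\theta_k$, the vertical test cost obeys the Lemma~\ref{prop: complexity micro} bound regardless of the history — essentially a conditional-independence/martingale-optional-stopping argument so that a Wald-type identity applies to the randomly-terminated sum. I would also need to confirm that the early-termination criterion $\tau^\uparrow(\delta/2,\varepsilon)$ used in Phase~II only \emph{decreases} the vertical cost relative to the power-one bound $\tau_1^\uparrow(\theta)$, so that $\phi(\lowp)$ remains a valid per-point upper bound throughout; this is where the ``bounded away from $1/2$'' hypothesis does the real work, since it guarantees the power-one test already terminates quickly and no delicate near-$\theta^*$ analysis (deferred to Case~2) is needed here.
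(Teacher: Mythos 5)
Your proposal is correct and follows essentially the same route as the paper: bound every vertical test cost uniformly by $\phi(\lowp)$ using the hypothesis $\tilde p(\theta)\geq\lowp>1/2$, then multiply by the deterministic cap $\tau^\rightarrow(\delta,\varepsilon)$ on horizontal moves imposed by the algorithm's stopping rule. The only cosmetic difference is that the paper handles the adaptivity you worry about via stochastic dominance, replacing each $T^\uparrow(\theta_k)$ by an i.i.d.\ copy $W_k$ of the power-one test count at drift $2\lowp-1$ (so no Wald-type identity is needed once $T^\rightarrow\leq\tau^\rightarrow(\delta,\varepsilon)$ holds almost surely), whereas you use an equivalent tower-property argument; your Phase~I/Phase~II decomposition is introduced and then correctly discarded as unnecessary.
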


Next, we address the more challenging scenario, in which the selection accuracy tends toward $1/2$ as the tested parameter approaches the true parameter.

\subsubsection{When the selection accuracy approaches $1/2$.}

As the comparison accuracy approaches $1/2$, the vertical move (based on the power-one test) requires the collection of more samples. The rate at which the vertical sample complexity increases depends on the rate at which the utility changes as $\theta$ converges to $\theta^*$. To account for this effect, we introduce the following assumption.

\begin{assumption}\label{assump: lip}
For any $\theta$ such that $\|\theta-\theta^*\|\leq a$, there exists $\lambda_\Delta>0$ and $0<\kappa\leq 1$ such that 
\[
|u(c_\Delta^+(\theta))-u(c_\Delta^-(\theta))|\geq \lambda_\Delta \|\theta-\theta^*\|^\kappa\,.
\]
\end{assumption}
The parameter $\kappa$ characterizes the speed of convergence, with a smaller value indicating a faster convergence rate. Assuming that $\kappa\leq 1$ is natural for several reasons. First, human decision making should not be inferior to learning from a random sample; otherwise, collecting random samples would suffice. We demonstrate this by examining the selection accuracy of learning from a single sample.  Suppose the sample $(x,y)$ is generated from $y=\theta^* x +\epsilon$ where $\epsilon\sim \cN(0,\sigma^2)$. When comparing $\theta+\Delta$ and $\theta-\Delta$ (without loss of generality, assuming $\theta<\theta^*$), $\theta+\Delta$ is preferable to $\theta-\Delta$ if the residual is smaller. That is, the probability that $\theta+\Delta$ is better is
\[
\begin{aligned}
\Pr(\theta+\Delta \succeq \theta-\Delta) &= \Pr\left((\theta^*x + \epsilon-(\theta+\Delta)x)^2\leq (\theta^*x + \epsilon-(\theta-\Delta)x)^2\right)\\
&=\Pr(\epsilon\geq (\theta-\theta^*)|x|)
= \frac{1}{2}+ \Omega((\theta^*-\theta)|x|).
\end{aligned}
\]

In instances where evaluation is conducted solely on a random sample, $\kappa$ is equal to 1. Considering that humans possess or have access to additional side information and thus can make better comparisons, we anticipate that $\kappa$ is, at most, 1.

\begin{lemma}[How selection accuracy converges]\label{lemma: convergence}
Under Assumption~\ref{assump: lip}, it holds that
\[
\tilde{p}(\theta)-1/2\geq c\|\theta-\theta^*\|^\kappa\,, \forall \theta\in [\theta^*-a,\theta^*+a], 
\]
where $c=\min\{\lambda_\Delta/(8\gamma \ln(2)), 1/(6a^\kappa)\}$.
\end{lemma}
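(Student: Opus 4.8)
The plan is to reduce the claim to an elementary inequality about the logistic (equivalently $\tanh$) function and then dispatch it with a two‑case split. First I would pin down $\tilde{p}(\theta)$ in closed form. By the unimodality in Assumption~\ref{assump: utility}, the point among $c_\Delta^+(\theta),c_\Delta^-(\theta)$ that is closer to $\theta^*$ lies on the correct side of $\theta$, so the event ``$\tilde{Z}(\theta)$ correctly identifies the direction'' coincides with the event that the human selects the higher‑utility option. Lemma~\ref{lemma: precision} then gives $\tilde{p}(\theta)=\bigl[1+\exp(-|u(c_\Delta^+(\theta))-u(c_\Delta^-(\theta))|/\gamma)\bigr]^{-1}$, which I would rewrite as $\tilde{p}(\theta)-\tfrac12=\tfrac12\tanh\!\bigl(|u(c_\Delta^+(\theta))-u(c_\Delta^-(\theta))|/(2\gamma)\bigr)$.

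Next I would invoke Assumption~\ref{assump: lip}, namely $|u(c_\Delta^+(\theta))-u(c_\Delta^-(\theta))|\ge \lambda_\Delta\|\theta-\theta^*\|^\kappa$, and use monotonicity of $t\mapsto\tfrac12\tanh(t/2)$ to obtain $\tilde{p}(\theta)-\tfrac12\ge \tfrac12\tanh(x)$ with $x:=\lambda_\Delta\|\theta-\theta^*\|^\kappa/(2\gamma)$. It then remains to show $\tfrac12\tanh(x)\ge c\|\theta-\theta^*\|^\kappa$, and since $\|\theta-\theta^*\|^\kappa=2\gamma x/\lambda_\Delta$, I would split at the threshold $x=\ln 2$. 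For $x\le\ln 2$, concavity of $\tanh$ on $[0,\infty)$ together with $\tanh(0)=0$ gives the chord bound $\tanh(x)\ge(\tanh(\ln 2)/\ln 2)\,x=(3/(5\ln2))\,x$, so that $\tfrac12\tanh(x)\ge(\lambda_\Delta/(8\gamma\ln2))\|\theta-\theta^*\|^\kappa$, which matches the first term of $c$. For $x>\ln 2$, monotonicity gives $\tfrac12\tanh(x)>\tfrac12\tanh(\ln2)=3/10$; since $\|\theta-\theta^*\|\le a$ on the local region, $c\|\theta-\theta^*\|^\kappa\le c\,a^\kappa\le 1/6<3/10$, which matches the second term. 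Taking the smaller of the two bounds yields exactly $c=\min\{\lambda_\Delta/(8\gamma\ln2),\,1/(6a^\kappa)\}$.

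The logistic algebra and the two $\tanh$ inequalities are routine; the main thing to get right is the bookkeeping of constants in the case split. One must check that the chord slope $3/(5\ln2)$ is large enough that the small‑$x$ bound dominates $\lambda_\Delta/(8\gamma\ln2)$ (this reduces to $1/8\le 3/20$), and that the large‑$x$ floor $3/10$ strictly exceeds the cap $1/6$ produced by $\|\theta-\theta^*\|\le a$. A secondary subtlety is justifying, through Assumption~\ref{assump: utility}, that the closer of the two comparison points is always the one on the $\theta^*$‑side of $\theta$, even when $c_\Delta^+(\theta)$ or $c_\Delta^-(\theta)$ overshoots $\theta^*$; this is precisely what lets me identify $\tilde{p}(\theta)$ with the select‑the‑better probability of Lemma~\ref{lemma: precision}, and hence with the logistic expression above.
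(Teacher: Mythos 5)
Your proposal is correct, and it is a genuinely sound alternative organization of the argument, though the underlying mathematical content is close to the paper's. The paper works with the same logistic formula for $\tilde{p}(\theta)$ but proceeds by \emph{inverting} the target inequality: it shows $\tilde{p}(\theta)-1/2\geq c r$ (with $r=\|\theta-\theta^*\|^\kappa$) is equivalent to $z/\gamma \geq -\ln\bigl(1-\tfrac{4cr}{1+2cr}\bigr)$, then applies the elementary bound $-\ln(1-x)\leq 2\ln(2)\,x$ for $x\leq 1/2$, so that the whole claim reduces to $z/\gamma\geq 8\ln(2)\,c\,r$, which Assumption \ref{assump: lip} delivers when $c\leq \lambda_\Delta/(8\gamma\ln 2)$. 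In the paper's chain there is no case split: the term $1/(6a^\kappa)$ in the minimum is used only to guarantee that the argument of the logarithm stays in the range $x\leq 1/2$, while the term $\lambda_\Delta/(8\gamma\ln 2)$ closes the inequality. Your proof instead lower-bounds the gap function directly, via the identity $\tilde{p}(\theta)-\tfrac12=\tfrac12\tanh(z/(2\gamma))$, and splits on the size of the utility gap: for small gaps you use the concavity chord bound $\tanh(x)\geq (3/(5\ln 2))x$ on $[0,\ln 2]$ (where the constant $\lambda_\Delta/(8\gamma\ln 2)$ does the work, since $1/8\leq 3/20$), and for large gaps the saturation floor $\tfrac12\tanh(\ln 2)=3/10$ together with the cap $c\,a^\kappa\leq 1/6$ (where $1/(6a^\kappa)$ does the work). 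So the two constraints in the minimum play disjoint roles in your argument but joint roles in the paper's; both yield exactly the stated constant. One point in your favor: you explicitly justify, via unimodality, that the correct-direction option is always the higher-utility option even when $c_\Delta^\pm(\theta)$ overshoots $\theta^*$, so that $\tilde{p}(\theta)$ equals the select-the-better probability of Lemma \ref{lemma: precision}; the paper takes this identification as immediate ``according to the definition,'' so your write-up closes a small gap the paper leaves implicit.
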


Under Assumption~\ref{assump: lip}, $\tilde{p}(\theta)$ converges to 1/2 at a speed no faster than the linear convergence. Therefore, when $\theta$ moves to the $a$-neighborhood of $\theta^*$, the distance between $\theta$ and $\theta^*$ can be bounded by $((\tilde{p}(\theta)-1/2)/c)^{1/\kappa}$. In other words, the closeness between $\tilde{p}(\theta)$ and 1/2 implies the closeness between $\theta$ and $\theta^*$. Thus, we can define $\tau^\uparrow(\delta,\varepsilon)$ as the threshold of the vertical test, such that if the vertical move $T^\uparrow(\theta)$ has reached $\tau^\uparrow(\delta,\varepsilon)$, then $\theta$ is close enough to $\theta^*$.

\begin{theorem}[Vertical moves in Phase II]\label{thm: micro stop}
Suppose that Assumption \ref{assump: lip} holds. When $T^{\rightarrow}\geq \tau^{\rightarrow}(\delta/2,a)$ and $T^{\uparrow}(\theta_{T^\rightarrow})\geq \tau^{\uparrow}(\delta/2,\varepsilon)$, where
\[
\tau^{\uparrow}(\delta,\varepsilon)=\max\left\{\tau_0,\frac{8\log(1/\delta)}{c^2\varepsilon^{2\kappa}}\right\}=O(\log(1/\delta)\varepsilon^{-2\kappa})\,,
\]
and $\tau_0 =\max \{s: \hbar_{s}/s\geq c\varepsilon^\kappa/2\}$, where $c$ is defined in Lemma \ref{lemma: convergence},
then we have 
\[
\Pr(\|\theta_{T^{\rightarrow}}-\theta^*\|\leq \varepsilon)\geq 1-\delta\,.
\]
\end{theorem}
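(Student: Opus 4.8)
The plan is to bound the failure probability $\Pr(\|\theta_{T^{\rightarrow}}-\theta^*\|>\varepsilon)$ by $\delta$ through a union bound over two error sources, each contributing at most $\delta/2$, matching the two ``$\delta/2$'' budgets in the stopping rules $\tau^\rightarrow(\delta/2,a)$ and $\tau^\uparrow(\delta/2,\varepsilon)$. Write $\theta:=\theta_{T^{\rightarrow}}$ for the final query point.

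First I would dispense with the event that $\theta$ has left the local neighborhood. The hypothesis $T^{\rightarrow}\geq \tau^\rightarrow(\delta/2,a)$ together with Theorem~\ref{thm: complexity} (applied with precision $a$ and confidence $\delta/2$ in place of $\varepsilon$ and $\delta$) gives $\Pr(\|\theta-\theta^*\|>a)\leq \delta/2$. It then remains to control the event $\{\varepsilon<\|\theta-\theta^*\|\leq a\}$ on which the algorithm nevertheless terminated, i.e.\ on which the vertical test at $\theta$ reached the cap $\tau^\uparrow(\delta/2,\varepsilon)$ without the power-one boundary ever being crossed.

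On this event Lemma~\ref{lemma: convergence} supplies a uniform lower bound on the drift: since $\|\theta-\theta^*\|>\varepsilon$, the walk $S_m(\theta)=\sum_{i\le m}\tilde Z_i(\theta)$ has $|\E[\tilde Z_i(\theta)]|=2\tilde p(\theta)-1\geq 2c\varepsilon^\kappa$, so a large-drift walk should cross $\hbar_m$ quickly; the purpose of the cap $\tau^\uparrow$ is to be large enough that failing to cross becomes exponentially unlikely. Concretely, the event $\{\tau_1^{\uparrow}(\theta)\geq \tau^\uparrow(\delta/2,\varepsilon)\}$ forces $|S_{m^*}(\theta)|<\hbar_{m^*}$ at $m^*=\tau^\uparrow(\delta/2,\varepsilon)$. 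Because $m^*\geq \tau_0=\max\{s:\hbar_s/s\geq c\varepsilon^\kappa/2\}$, the definition of $\tau_0$ yields $\hbar_{m^*}\leq m^* c\varepsilon^\kappa/2$, whereas the drift forces $\E[S_{m^*}]\geq 2m^* c\varepsilon^\kappa$. Hence $\{|S_{m^*}|<\hbar_{m^*}\}$ is a deviation of $S_{m^*}$ below its mean by at least $\tfrac{3}{2}m^* c\varepsilon^\kappa$, and Hoeffding's inequality for increments in $\{-1,+1\}$ bounds its probability by $\exp(-\Omega(m^* c^2\varepsilon^{2\kappa}))$; plugging in $m^*\geq 8\log(2/\delta)/(c^2\varepsilon^{2\kappa})$ makes this at most $\delta/2$. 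A union bound over the two pieces then gives $\Pr(\|\theta-\theta^*\|>\varepsilon)\leq \delta$, which is the claim.

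The step I expect to be the real obstacle is making the conditioning rigorous: $\theta=\theta_{T^{\rightarrow}}$ and the number of vertical samples are both data-dependent and adaptively chosen, so I would condition on the realized query point, given which the human responses $\tilde Z_i(\theta)$ are genuinely i.i.d.\ with the stated drift, and then integrate the Hoeffding bound over the value of $\theta$. This conditioning must be combined with the reduction of the ``test never stops'' event to the single-time-slice event $\{|S_{m^*}|<\hbar_{m^*}\}$, using that not stopping by step $\tau^\uparrow$ implies $|S_m|<\hbar_m$ for every earlier $m$, in particular at $m^*$. A secondary detail is verifying the boundary case $\tau^\uparrow=\tau_0$ so that the inequality $\hbar_{m^*}\leq m^* c\varepsilon^\kappa/2$ holds with the correct (non-strict) sense.
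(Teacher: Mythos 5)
Your proof is correct and follows essentially the same route as the paper's: the same $\delta/2+\delta/2$ decomposition (using Theorem~\ref{thm: complexity} with precision $a$ to control the event that $\theta_{T^\rightarrow}$ left the local neighborhood), the same drift lower bound from Lemma~\ref{lemma: convergence}, the same reduction of ``the power-one boundary was never crossed'' to the single-slice event at $m^*=\tau^\uparrow(\delta/2,\varepsilon)$ combined with $\hbar_{m^*}/m^*\leq c\varepsilon^\kappa/2$ from the definition of $\tau_0$, and the same Hoeffding bound yielding the final $\delta/2$. The only (immaterial) differences are that your constants are slightly sharper (drift $2c\varepsilon^\kappa$ and deviation $\tfrac{3}{2}c\varepsilon^\kappa$ versus the paper's $c\varepsilon^\kappa$ and $c\varepsilon^\kappa/2$), and that you make explicit the conditioning-on-the-realized-query-point step that the paper leaves implicit.
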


When Assumption \ref{assump: lip} holds and $\theta$ moves to the neighborhood of $\theta^*$, the sample complexity of the vertical move would not exceed $O(\varepsilon^{-2\kappa})$, where $\varepsilon$ is the precision. The vertical sample complexity increases as the value of $\kappa$ increases. The faster the selection accuracy converges to 1/2, the harder the selection problem, and thus the higher the vertical sample complexity. In the extreme case, where $\kappa=0$, $\tilde{p}(\theta)$ is bounded away from 1/2 so $\tau^\uparrow(\delta,\varepsilon)$ is bounded. This conclusion is consistent with what we have in Theorem \ref{thm: complexity pc}.

\subsection{Sample Complexity}\label{subsec: sample complexity}
Combining the sample complexity of horizontal moves and vertical moves, we conclude the total 
 sample complexity of \ftpb{}.

\begin{theorem}[Sample complexity of \ftpb{}]\label{thm: total complexity}
Under Assumption \ref{assump: lip}, the expected number of moves is bounded by 
\[
\E\left[\sum_{k=1}^{T^{\rightarrow}} T^{\uparrow} (\theta_k)\right]\leq H(\delta,\varepsilon;a)=\tilde{O}(\varepsilon^{-2\kappa}),
\]
where
\[
H(\delta,\varepsilon;a)= \left(\frac{4\theta^*}{r_2 \alpha a^\varsigma}+\frac{4\beta_1}{r_1}+\frac{2\beta_2}{r_R}\right) \max_{a\leq \|\theta-\theta^*\|\leq \beta_\Theta}\phi(\tilde{p}(\theta)) +  \tau^{\rightarrow}(\delta/2,\varepsilon)  \tau^{\uparrow}(\delta/2,\varepsilon).
\]
\end{theorem}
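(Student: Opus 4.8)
The plan is to decompose the total sample complexity into the two phases already analyzed and then sum the phase-wise bounds. Recall that Phase I consists of all horizontal steps $k$ for which the running median $\theta_k$ still lies outside the local neighborhood $[\theta^*-a,\theta^*+a]$, and Phase II consists of the remaining steps. Accordingly I would write
\[
\E\left[\sum_{k=1}^{T^{\rightarrow}} T^{\uparrow}(\theta_k)\right]
= \E\left[\sum_{k=1}^{T^{\rightarrow}} T^{\uparrow}(\theta_k)\,\1(\|\theta_k-\theta^*\|\geq a)\right]
+ \E\left[\sum_{k=1}^{T^{\rightarrow}} T^{\uparrow}(\theta_k)\,\1(\|\theta_k-\theta^*\|< a)\right],
\]
and bound the two terms separately. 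For the first (Phase I) term I would invoke Proposition \ref{prop: hori-verti outside} directly, which already gives the bound $\bigl(\tfrac{4\theta^*}{r_2\alpha a^\varsigma}+\tfrac{4\beta_1}{r_1}+\tfrac{2\beta_2}{r_R}\bigr)\max_{a\leq\|\theta-\theta^*\|\leq\beta_\Theta}\phi(\tilde p(\theta))$, matching the first summand of $H(\delta,\varepsilon;a)$.

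For the second (Phase II) term, I would bound the number of horizontal moves in this phase by the total horizontal budget $\tau^{\rightarrow}(\delta/2,\varepsilon)$ from Theorem \ref{thm: complexity}, and bound the number of vertical moves at each such query point by the early-termination threshold $\tau^{\uparrow}(\delta/2,\varepsilon)$ from Theorem \ref{thm: micro stop}. Since inside the local region the algorithm caps each vertical test at $\tau^{\uparrow}(\delta/2,\varepsilon)$ comparisons, the product $\tau^{\rightarrow}(\delta/2,\varepsilon)\,\tau^{\uparrow}(\delta/2,\varepsilon)$ controls the Phase II cost, giving the second summand of $H(\delta,\varepsilon;a)$. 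Adding the two contributions yields $H(\delta,\varepsilon;a)$. For the final rate, I would substitute the orders $\tau^{\rightarrow}(\delta/2,\varepsilon)=O(\log(1/(\delta\varepsilon)))$ and $\tau^{\uparrow}(\delta/2,\varepsilon)=O(\log(1/\delta)\varepsilon^{-2\kappa})$; the Phase II product is therefore $\tilde O(\varepsilon^{-2\kappa})$, while the Phase I term is a constant in $\varepsilon$ (depending on $a$), so the dominant contribution is $\tilde O(\varepsilon^{-2\kappa})$, where the tilde absorbs the logarithmic factors in $1/\delta$ and $1/\varepsilon$.

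The main subtlety to handle carefully is the interaction between the two stopping rules governing the split. In Phase I the vertical test always runs to the power-one stopping time $\tau_1^{\uparrow}(\theta_k)$ (no early termination applies until $k\geq\tau^{\rightarrow}(\delta/2,a)$), whereas in Phase II the cap $\tau^{\uparrow}(\delta/2,\varepsilon)$ can cut the test short; I must make sure the indicator split in the display above aligns with the algorithm's actual branching on whether $k\geq\tau^{\rightarrow}(\delta/2,a)$, rather than on the raw event $\|\theta_k-\theta^*\|\geq a$. Because $T(a)$ and $T'(a)$ (and hence $\psi(a)$) control precisely when the medians stop re-entering the non-local region, Proposition \ref{prop: hori-verti outside} is stated in terms of $\psi(a)$, so the Phase I sum is correctly captured; the only care needed is that the event of entering the local neighborhood occurs with the requisite confidence, which is exactly what the $\delta/2$ budget allocation in $\tau^{\rightarrow}(\delta/2,a)$ and $\tau^{\uparrow}(\delta/2,\varepsilon)$ secures via a union bound over the two phases. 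I expect this bookkeeping of the confidence budget and the phase boundary to be the principal obstacle, with the remaining arithmetic being a routine substitution of the established orders.
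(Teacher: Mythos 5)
Your proposal follows essentially the same route as the paper's own proof: the identical Phase I/Phase II indicator decomposition, with the Phase I term bounded via Proposition \ref{prop: hori-verti outside} (after replacing $T^{\rightarrow}$ by $\psi(a)$, which you correctly note is what makes that proposition applicable) and the Phase II term bounded by the product $\tau^{\rightarrow}(\delta/2,\varepsilon)\,\tau^{\uparrow}(\delta/2,\varepsilon)$, followed by the same substitution of orders to get $\tilde{O}(\varepsilon^{-2\kappa})$. The subtlety you flag---that the indicator split on $\|\theta_k-\theta^*\|\geq a$ does not literally coincide with the algorithm's branching on $k\geq\tau^{\rightarrow}(\delta/2,a)$, so the vertical cap need not apply to every summand in the Phase II term---is in fact glossed over in the paper's proof as well, so your treatment is, if anything, more careful than the published argument.
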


Theorem \ref{thm: total complexity} analyzes the (human) sample complexity in a one-dimensional space, which is in the order of $\tilde{O}(\varepsilon^{-2\kappa})$. The horizontal move is in the logarithm order, so the dominant part comes from the vertical moves in the neighborhood of $\theta^*$. If humans have the ability to differentiate between any two parameters $\theta_1$ and $\theta_2$, with the probability strictly higher than $1/2 + \delta'$ as long as $\theta_1\neq \theta_2$, then the total complexity is in the logarithm order of both confidence $1/\delta$ and precision $1/\varepsilon$.

In a $d$-dimensional space, if naively refining parameters along each dimension to make the parameter within an $\varepsilon$-ball (in $\ell_{\varrho}$-norm), then the total complexity is $dH(\delta/d,\varepsilon/d^{1/\varrho};a)$, which scales as $\tilde{O}(d^{1+\frac{2\kappa}{\varrho}}/\varepsilon^{2\kappa})$.  However, in high-dimensional spaces, $d$ can be significant even though the true parameter may lie within a low-dimensional space. In the following section, we explore leveraging a supervised learning oracle to reduce sample complexity.

\section{Model Alignment: Human-AI Interaction}\label{sec: human-AI}

In the previous section, we discussed the framework of using human feedback in a one-dimensional space. In practice, a noisy labeled dataset usually can be acquired easily. A noisy labeled dataset potentially can be used for learning the low-dimensional representation $\varphi(\cdot)$ and thus for reducing the complexity of the refining step. In this section, we propose a two-stage framework where the first stage learns the low-dimensional representation using noisy labels and the second stage refines the model using human feedback.

\subsection{Two-stage Framework}\label{subsec: two-stage}

Suppose the true parameter lies in an $s$-dimensional space where $s\ll d$. Given confidence $\delta$ and precision $\varepsilon$, we introduce a two-stage framework \twostage{}:
\begin{enumerate}
\item In Stage 1, the algorithm feeds $N_1^{HAI}(\delta,\varepsilon)$ noisy labeled data $(\bX,Y)$ to the supervised learning oracle (e.g., Lasso), where $\bX\in \mathbb{R}^{N_1^{HAI}\times d}$ is the design matrix and $Y$ is the noisy response. At the end, we learn the $s$-dimensional embedding $\varphi(\cdot)$.

\item In Stage 2, the algorithm asks a human to refine the estimator along $s$ dimensions. This stage requires $N_2^{HAI}(\delta,\varepsilon)$ data points.
\end{enumerate}

\begin{algorithm}
   \caption{\textnormal{\textsf{SL+LHF}} (Supervised Fine-Tuning+Human Comparison)}\label{Alg: SFT+FTPB}
\begin{algorithmic}[1]
\State \textbf{Input}: Confidence level $\delta$, precision $\varepsilon$; initial distribution $\mu_0$; local parameter $a>0$; $k=0$; $\eta$;
\State \textsf{Stage 1}: Learn the embedding $\varphi(\cdot)$ through SL oracle trained on data $(\bX,Y)$ with size $N_1^{HAI}(\delta,\varepsilon)$;
\State \textsf{Stage 2}: Execute Algorithm \ref{Alg: RTB} (\ftpb{}) by querying $N_2^{HAI}(\delta,\varepsilon)$ samples.
\end{algorithmic}
\end{algorithm}

The framework design is grounded in the insight that certain representation learning techniques, like autoencoders or deep learning architectures, are inherently robust to noisy labels \citep{li2021learning,taghanaki2021robust}. Stage 1 is dedicated to uncovering underlying patterns and structures within the data. Given the effectiveness of human expertise in model refinement, Stage 2 serves to enhance model alignment through targeted sample querying. In what follows, we specifically discuss sparse linear models (Example \ref{example: sparse linear}), where Stage 1 applies Lasso\footnote{OLS and thresholding may serve the same purpose as Lasso.} to select important features.

\subsection{Illustration on Linear Models}
To demonstrate the sample complexity reduction for the two-stage framework, compared to a framework trained purely by supervised learning, and to provide theoretical justifications, we focus on sparse linear models:  
\[
Y_i = \inner{\boldsymbol{\vartheta^*}}{X_i}+\epsilon_i=\inner{\btheta^*}{\varphi(X_i)}+\epsilon_i,
\]
where $\varphi(\cdot)$ projects $X_i$ to all important features; $S$ is the index set of all important features; and the parameter $\btheta^*$ satisfies that $|\theta^*_i|\geq \lowbeta>0$ for all $i\in S$.
In the initial stage, we use Lasso for the selection of significant features, followed by human comparisons for model alignment. To assess the sample complexity of our two-stage framework, we sequentially quantify the size of samples needed for the two stages.
First, we establish the following standard assumptions for the analysis of Lasso.

\begin{assumption}[sub-Gaussian noise]\label{assump: subGaussain}
The observational noise is zero-mean i.i.d. sub-Gaussian with parameter $\sigma$.
\end{assumption}

\begin{assumption}\label{assump: three assumptions}
For the random design matrix $\bX\in \mathbb{R}^{n\times d}$ input in the first stage, we assume that the following three conditions hold:
\begin{enumerate}
    \item Mutual incoherence: 
$\max_{j\in S^c} \ \|(\bX_S^\top\bX_S)^{-1}\bX_S^\top x_j\|_1 \leq \alpha_1;$\label{assum: MI}
    \item Lower eigenvalue: 
$\lambda_{\min} \left(V_S\right)\geq c_{\min}>0,$
    where $V_S = \frac{1}{n}\bX_S^\top \bX_S$; and
    \label{assum: LE}
    \item $\ell_\infty$-curvature condition:
$\|V z\|_\infty \geq \alpha_2 \|z\|_\infty\,, \text{ for all } z\in C_{\alpha'}(S),$
     where $C_{\alpha'}(S) = \{\Delta\in \mathbb{R}^d|\|\Delta_{S^c}\|_1\leq \alpha' \|\Delta_S\|_1\}$.
     \label{assum: CC}
\end{enumerate}
\end{assumption}

Assumption \ref{assump: three assumptions} is standard for the analysis of Lasso. For example, consider a random design matrix $\boldsymbol{X}\in \mathbb{R}^{n\times d}$ with i.i.d. $\mathcal{N}(0,1)$ entries; we can easily verify that it satisfies mutual incoherence, lower eigenvalue, and $\ell_{\infty}$-curvature condition.
Under Assumptions \ref{assump: subGaussain} and \ref{assump: three assumptions} and based on Corollary 7.22 in \cite{wainwright2019high}, we derive the following theorem that bounds the infinite distance between vector $\htheta_S$ and $\theta^*_S$.

\begin{theorem}\label{thm: lasso}
Under Assumptions \ref{assump: subGaussain} and \ref{assump: three assumptions}, suppose that we solve the Lagrangian Lasso with regularization parameter
\[
\lambda_n = \frac{2C\sigma}{1-\alpha_1}\left\{\sqrt{\frac{2\log(d-s)}{n}}+\zeta\right\}
\]
for $\zeta = \frac{1}{4}\min\left\{\frac{\sqrt{c_{\min}}}{\sigma \lowbeta},\frac{\alpha_2(1-\alpha_1)}{2C\sigma \lowbeta}\right\}$. Then, the optimal solution $\hat{\theta}$ is unique, with its support contained within $S$, and it satisfies the $\ell_\infty$-error bound
\[
\|\htheta_S-\theta^*_S\|_\infty \leq \underbrace{\frac{\sigma}{\sqrt{c_{\min}}}\left\{\sqrt{\frac{2\log s}{n}}+\zeta\right\}+\frac{1}{\alpha_2} \lambda_n}_{B_n}\,,
\]
all with probability at least $1-4e^{-\frac{n\zeta^2}{2}}$. Moreover, there is no false inclusion: The solution has its support set $\hat{S}$ contained within the true support set $S$.
\end{theorem}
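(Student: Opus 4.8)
The plan is to establish the result through the \emph{primal--dual witness} (PDW) construction and to check that Assumptions~\ref{assump: subGaussain} and~\ref{assump: three assumptions} supply precisely the hypotheses needed to invoke Corollary~7.22 of \cite{wainwright2019high} with the stated $\lambda_n$ and slack $\zeta$. First I would set up the PDW pair: restricting the Lagrangian Lasso to vectors supported on $S$ yields an oracle solution $\htheta_S$ (with $\htheta_{S^c}=0$); its subgradient is fixed on $S$ by $\hat z_S=\sgn(\htheta_S)$, and the coordinates $\hat z_{S^c}$ are solved from the zero-subgradient (KKT) stationarity conditions of the full program. The standard PDW lemma then certifies that this pair is the unique Lasso optimum with $\hat S\subseteq S$, provided the \emph{strict dual feasibility} condition $\|\hat z_{S^c}\|_\infty<1$ holds; uniqueness is supplied by the lower-eigenvalue condition (Assumption~\ref{assump: three assumptions}(\ref{assum: LE})), which makes the restricted Gram matrix $V_S$ invertible and the restricted objective strictly convex.

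Second, the key step is certifying strict dual feasibility. I would decompose $\hat z_{S^c}$ into a deterministic term $\bX_{S^c}^\top\bX_S(\bX_S^\top\bX_S)^{-1}\hat z_S$ and a noise term driven by $\bepsilon$. The mutual-incoherence condition (Assumption~\ref{assump: three assumptions}(\ref{assum: MI})) bounds the deterministic part by $\alpha_1$, leaving a margin of $1-\alpha_1$; the noise part is a maximum of $d-s$ sub-Gaussian variables, so a maximal tail inequality with Assumption~\ref{assump: subGaussain} controls it once $\lambda_n$ is taken to be at least a constant multiple of $\tfrac{\sigma}{1-\alpha_1}\{\sqrt{2\log(d-s)/n}+\zeta\}$, which is exactly the prescribed value. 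This is where the $\log(d-s)$ factor originates, and the $\zeta$ slack absorbs the deviation term; completing the inequality yields $\|\hat z_{S^c}\|_\infty<1$ and hence the no-false-inclusion claim $\hat S\subseteq S$.

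Third, I would derive the $\ell_\infty$ bound on the support by splitting $\htheta_S-\theta^*_S$ into a noise component $(\bX_S^\top\bX_S/n)^{-1}\bX_S^\top \bepsilon/n$ and a regularization component proportional to $\lambda_n(\bX_S^\top\bX_S/n)^{-1}\hat z_S$. The lower-eigenvalue bound $c_{\min}$ with a sub-Gaussian union bound over the $s$ active coordinates gives the $\tfrac{\sigma}{\sqrt{c_{\min}}}\{\sqrt{2\log s/n}+\zeta\}$ term, while the $\ell_\infty$-curvature condition (Assumption~\ref{assump: three assumptions}(\ref{assum: CC})) with constant $\alpha_2$ converts the regularization component into the $\tfrac{1}{\alpha_2}\lambda_n$ term, assembling exactly $B_n$. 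Finally I would collect the probability: each of the (at most four) concentration events fails with probability at most $e^{-n\zeta^2/2}$, thanks to the $\zeta$ slack built into $\lambda_n$, so a union bound delivers the advertised $1-4e^{-n\zeta^2/2}$.

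I expect the main obstacle to be the bookkeeping in the strict-dual-feasibility step: one must track the $(1-\alpha_1)$ margin against the random contribution carefully enough that the chosen $\lambda_n$ keeps the inequality strict while the \emph{same} $\lambda_n$ keeps $B_n$ small, and the role of $\zeta$---with its calibration against $\lowbeta$, $c_{\min}$, and $\alpha_2$---is precisely to reconcile these two competing requirements. Since Corollary~7.22 of \cite{wainwright2019high} already packages the heavy analytic lifting, the genuine work reduces to verifying that our three design conditions map onto its hypotheses and that the explicit $\lambda_n$ and $\zeta$ produce the stated constants.
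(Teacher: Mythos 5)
Your proposal is correct and takes essentially the same route as the paper: the paper likewise delegates the PDW/strict-dual-feasibility machinery to Corollary~7.22 (and Theorem~7.21 for no false inclusion) of \cite{wainwright2019high}, and its only additional work is a short lemma (Lemma~\ref{lemma: bound inverse}) showing that the $\ell_\infty$-curvature condition implies $\left\|\left(\tfrac{1}{n}\bX_S^\top\bX_S\right)^{-1}\right\|_\infty \leq \tfrac{1}{\alpha_2}$, which is precisely the conversion you invoke in your third step to turn the regularization component into the $\tfrac{1}{\alpha_2}\lambda_n$ term. The one step you treat as immediate does require a small argument---extend any $z_S$ with $\|z_S\|_\infty=1$ by zeros, observe the padded vector lies in the cone $C_{\alpha'}(S)$ so the curvature condition applies, and use the identity $\|A^{-1}\|_\infty = 1/\min\{\|Ax\|_\infty : \|x\|_\infty = 1\}$---but this is routine and is exactly what the paper's lemma supplies.
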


\begin{remark}
Lasso is not limited to the linear function but can also be applied to the nonlinear function class \citep{plan2016generalized} or to a nonparametric variable selection \citep{li2023dimension}. Moreover, there is a rich stream of literature on representation learning that can be applied in the first stage \citep{bengio2013representation}.
\end{remark}

Theorem \ref{thm: lasso} implies that as long as $B_n\leq \lowbeta$, the variable selection is consistent. Because $\zeta = \frac{1}{4}\min\left\{\frac{\sqrt{c_{\min}}}{\sigma \lowbeta},\frac{\alpha_2(1-\alpha_1)}{2C\sigma \lowbeta}\right\}$, we only need 
\[
\frac{\sigma}{\sqrt{c_{\min}}} \sqrt{\frac{2\log s}{n}}+\frac{1}{\alpha_2}\cdot \frac{2C\sigma}{1-\alpha_1}\sqrt{\frac{2\log(d-s)}{n}}\leq \frac{1}{2}\lowbeta.
\]
Thus, if there are $n$ noisy samples in which 
\[
n\geq \max\left\{\frac{32\sigma^2 \log s}{\lowbeta^2 c_{\min}},\frac{128C^2\sigma^2\log(d-s)}{(\lowbeta \alpha_2(1-\alpha_1))^2}\right\},
\]
 we are able to select $s$ variables with probability at least 
$1-4e^{-\frac{n\zeta^2}{2}}$. Define
\[
H_0(\delta,\varepsilon;\lowbeta) = \max\left\{\frac{32\sigma^2 \log s}{\lowbeta^2 c_{\min}},\frac{128C^2\sigma^2\log(d-s)}{(\lowbeta \alpha_2(1-\alpha_1))^2}, \frac{2\log(4/\delta)}{\zeta^2}\right\} = O(\sigma^2\log d/\lowbeta^2).
\]
Then, when we have more than $H_0(\delta,\varepsilon;\lowbeta)$ noisy labeled data at hand, we can have the correct feature selection with probability at least $1-\delta$. Moreover, if we want to undertake one more step of refinement to restrict the uncertainty set to be within $\botheta$-distance, where $\botheta<\lowbeta$, then we need $H_0(\delta,\varepsilon;\botheta)$ noisy labeled data. Note that $\botheta$, which represents the feasible range for refinement, is the parameter that we can choose.
The trade-off exists:
A smaller value of $\botheta$ demands more noisy labels in the initial stage. However, limiting the feasible region to a smaller area would conserve the samples that are required for human comparisons. As previously noted, when both options significantly deviate from the true model, distinguishing the superior one becomes challenging. This challenge implies a  need for more samples to ascertain the correct answer. Consequently, in certain scenarios, opting for $\botheta\leq \lowbeta$ to confine the refinement region may be optimal. Specifically, we select $\botheta$ to minimize the total sample complexity:
\begin{equation}\label{eq: optimal threshold}
\botheta^* = \min_{0<\botheta\leq \lowbeta}  H_0(\delta,\varepsilon;\botheta) + s H(\delta/s,\varepsilon/s^{1/\varrho};\botheta),
\end{equation}
where $H(\delta,\varepsilon;\botheta)$ is defined in Theorem \ref{thm: total complexity} and we measure the distance using $l_{\varrho}$-norm. Note that although the first part depends on the noise $\sigma$, the second part relies not on $\sigma$ but on the detection accuracy $\phi(\tilde{p}(\theta))$. This point is important to emphasize. Hence, the optimal $\botheta^*$ is contingent on the function of detection accuracy. For $\theta$ considerably distant from the true parameter $\theta^*$, $\tilde{p}(\theta)$ could potentially approach $1/2$, leading to a substantial value for $\phi(\tilde{p}(\theta))$.

\subsection{Complexity of \twostage{}}\label{subsec: SFT+FTPB}

The optimal value of $\botheta^*$ can be obtained numerically by enumerating and then comparing the objectives in Equation \eqref{eq: optimal threshold}.
To establish an upper bound on the sample complexity for the two-stage framework so that we analytically show its benefit, we set  $\botheta=\lowbeta$. In this case, we have
\[
H_0(\delta,\varepsilon;\lowbeta) = \tilde{O}(\sigma^2/\lowbeta^2) \quad \text{and} \quad  sH(\delta/s,\varepsilon/s^{1/\varrho};\lowbeta)=\tilde{O}\left(s\bar{\phi}(a,\lowbeta)+s^{1+\frac{2\kappa}{\varrho}}/\varepsilon^{2\kappa}\right),
\]
where $\bar{\phi}(a,D) = \max_{a\leq \|\theta-\theta^*\|\leq \lowbeta}\phi(\tilde{p}(\theta))$. Theorem \ref{thm: complexity of HAI} is a direct conclusion from Theorems \ref{thm: total complexity} and \ref{thm: lasso}.

\begin{theorem}[Computational complexity of \twostage{}]\label{thm: complexity of HAI}
Under Assumptions \ref{assump: subGaussain} and \ref{assump: three assumptions}, set $N_1^{HAI}(\delta)= H_0(\delta,\varepsilon;\lowbeta)$ and then run Algorithm \ref{Alg: RTB} with parameter range $[-\lowbeta,\lowbeta]$; then, it holds that $\|\htheta-\theta^*\|_{\varrho}\leq \varepsilon$ with probability at least $1-2\delta$. Moreover,
under Assumption \ref{assump: lip}, the expected sample complexity is
\[
N_1^{HAI}(\delta, \varepsilon)+N_2^{HAI}(\delta, \varepsilon) = \tilde{O}(\sigma^2/\lowbeta^2+s\bar{\phi}(a,\lowbeta)+s^{1+\frac{2\kappa}{\varrho}}/\varepsilon^{2\kappa}).
\]

\end{theorem}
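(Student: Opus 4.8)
The plan is to treat Theorem~\ref{thm: complexity of HAI} as a direct assembly of the two stages that have already been analyzed, so the work is bookkeeping rather than new estimation. First I would fix the target to be an $\ell_\varrho$-ball of radius $\varepsilon$ around $\theta^*$ with total confidence $1-2\delta$, and split this budget cleanly: allocate failure probability $\delta$ to Stage~1 (correct support recovery via Lasso) and $\delta$ to Stage~2 (the probabilistic-bisection refinement over the $s$ recovered coordinates). Setting $\botheta=\lowbeta$ as indicated, I would invoke Theorem~\ref{thm: lasso} with $n=N_1^{HAI}(\delta)=H_0(\delta,\varepsilon;\lowbeta)$ to conclude that, with probability at least $1-\delta$, the support $\hat S$ equals $S$ (no false inclusions, and no false exclusions because $B_n\le\lowbeta\le|\theta^*_i|$ for $i\in S$) and simultaneously $\|\htheta_S-\theta^*_S\|_\infty\le\lowbeta$. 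This last bound is what justifies initializing Algorithm~\ref{Alg: RTB} with parameter range $[-\lowbeta,\lowbeta]$ on each surviving coordinate: the true parameter is guaranteed to lie in the search interval.

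Conditioned on the Stage~1 success event, the problem reduces to refining an $s$-dimensional parameter, and here I would apply the coordinatewise reduction already spelled out at the end of Section~\ref{sec: human}: running \ftpb{} on each of the $s$ coordinates with per-coordinate precision $\varepsilon/s^{1/\varrho}$ and per-coordinate confidence $\delta/s$. A union bound over the $s$ coordinates gives overall confidence $1-\delta$ that every coordinate lands within $\varepsilon/s^{1/\varrho}$, and since $\sum_{i}|\theta_i-\theta^*_i|^\varrho \le s\cdot(\varepsilon/s^{1/\varrho})^\varrho=\varepsilon^\varrho$, this yields $\|\htheta-\theta^*\|_\varrho\le\varepsilon$. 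The expected number of human comparisons for this stage is then $N_2^{HAI}(\delta,\varepsilon)=s\,H(\delta/s,\varepsilon/s^{1/\varrho};\lowbeta)$ by Theorem~\ref{thm: total complexity} applied coordinatewise, and plugging in the stated expansion of $H$ gives the $\tilde O(s\bar\phi(a,\lowbeta)+s^{1+2\kappa/\varrho}/\varepsilon^{2\kappa})$ term.

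Finally I would combine the two guarantees via a union bound over the two failure events to get overall probability at least $1-2\delta$ of the event $\{\hat S=S\}\cap\{\|\htheta-\theta^*\|_\varrho\le\varepsilon\}$, and add the two expected sample counts: $N_1^{HAI}=\tilde O(\sigma^2/\lowbeta^2)$ from the simplified form of $H_0$ and $N_2^{HAI}$ as above, producing the claimed total $\tilde O(\sigma^2/\lowbeta^2+s\bar\phi(a,\lowbeta)+s^{1+2\kappa/\varrho}/\varepsilon^{2\kappa})$. The one place demanding genuine care, rather than mechanical substitution, is the interface between the stages: I must verify that the Stage~1 output legitimately supplies the hypotheses Algorithm~\ref{Alg: RTB} needs on each coordinate---in particular that restricting to $\hat S=S$ does not disturb the unimodality and Lipschitz-type conditions (Assumptions~\ref{assump: utility} and~\ref{assump: lip}) used to derive $H$, and that the human comparison accuracy $\tilde p(\theta)$ over the reduced range $[-\lowbeta,\lowbeta]$ is the same quantity controlling $\bar\phi(a,\lowbeta)$. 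The expectation in $N_2^{HAI}$ is over the randomness of human responses on the good event, so I would also note that the bound holds conditionally on $\{\hat S=S\}$ and that the (vanishing) contribution of the bad event can be absorbed, keeping the overall $\tilde O(\cdot)$ bound intact.
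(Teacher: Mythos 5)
Your proof is correct and follows essentially the same route as the paper's: the paper's own proof is a two-line combination of Theorem \ref{thm: lasso} (Stage 1 support recovery with probability at least $1-\delta$, justifying the search range $[-\lowbeta,\lowbeta]$) and Theorem \ref{thm: total complexity} (Stage 2 refinement), with the coordinatewise allocation $sH(\delta/s,\varepsilon/s^{1/\varrho};\lowbeta)$ already set up in the text of Sections \ref{subsec: sample complexity} and \ref{subsec: SFT+FTPB}. Your write-up simply makes explicit the failure-probability split, the union bound over coordinates, and the $\ell_\varrho$ accounting that the paper leaves implicit.
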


Theorem \ref{thm: complexity of HAI} demonstrates the sample complexity of the two-stage framework \twostage{}. When the distance is measured in two-norm (i.e., $\varrho=2$), the complexity is in the order of $\tilde{O}(\sigma^2/\lowbeta^2+s\bar{\phi}(a,\lowbeta)+s^{1+\kappa}/\varepsilon^{2\kappa})$. Instead, if Lasso is used exclusively for the supervised learning oracle to refine the estimator based on $n$ noisy labeled data (subject to certain regularity conditions), the estimation error can be bounded as  $\|\htheta_n-\theta^*\|_2\leq c'\sigma\sqrt{s\log d/n}$ with high probability for some constant $c'>0$. Equivalently, we need to acquire $\tilde{O}(\sigma^2 s\log d/\varepsilon^2)$ data points to ensure that the two-norm distance of the estimation error is within $\varepsilon$.

To compare the sample complexity of \twostage{} as indicated by Theorem \ref{thm: complexity of HAI} for $\varrho=2$, we characterize the condition of \twostage{} that requires less sample complexity than SL in Proposition \ref{prop: LNCA condition}.

\begin{proposition}\label{prop: LNCA condition}
For sufficiently small $\varepsilon$, \twostage{} requires less sample complexity than pure \textsf{SL} for solving the $(\varepsilon,\delta)$-alignment problem (ignoring the logarithm term) when:
\begin{equation}\label{eq: compare condition}
\frac{\sigma^2 }{s^\kappa} \gtrapprox \varepsilon^{2-2\kappa}.
\end{equation}
\end{proposition}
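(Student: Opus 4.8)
The plan is to carry out a direct, term-by-term asymptotic comparison of the two sample-complexity expressions, treating $\varepsilon$ as the variable tending to $0$ while holding $\sigma$, $s$, $\lowbeta$, and the local parameter $a$ fixed. From Theorem \ref{thm: complexity of HAI} specialized to $\varrho=2$, the two-stage framework \twostage{} has expected complexity
\[
N^{HAI}(\delta,\varepsilon) = \tilde{O}\!\left(\frac{\sigma^2}{\lowbeta^2} + s\bar{\phi}(a,\lowbeta) + \frac{s^{1+\kappa}}{\varepsilon^{2\kappa}}\right),
\]
whereas pure \textsf{SL} requires $\tilde{O}(\sigma^2 s/\varepsilon^2)$ after discarding the $\log d$ factor. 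The first step is to observe that the two leading summands of $N^{HAI}$ carry no $\varepsilon$-dependence: $\sigma^2/\lowbeta^2$ is a constant, and $\bar{\phi}(a,\lowbeta)=\max_{a\le\|\theta-\theta^*\|\le\lowbeta}\phi(\tilde p(\theta))$ is a maximum of the vertical-move bound over a fixed annulus, hence $O(1)$ in $\varepsilon$.

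Next I would isolate the dominant behavior as $\varepsilon\to 0$. Since Assumption \ref{assump: lip} guarantees $0<\kappa\le 1$, both $\varepsilon$-dependent terms diverge: $s^{1+\kappa}/\varepsilon^{2\kappa}\to\infty$ on the two-stage side and $\sigma^2 s/\varepsilon^2\to\infty$ on the pure-\textsf{SL} side. Therefore, for all sufficiently small $\varepsilon$, the two constant summands are swamped by either divergent term, so the leading order of $N^{HAI}$ is governed by $s^{1+\kappa}/\varepsilon^{2\kappa}$. The comparison then reduces to deciding when
\[
\frac{s^{1+\kappa}}{\varepsilon^{2\kappa}} \lessapprox \frac{\sigma^2 s}{\varepsilon^2}.
\]

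Finally I would rearrange this inequality: dividing both sides by $s$ and multiplying through by $\varepsilon^2$ yields $s^{\kappa}\varepsilon^{2-2\kappa}\lessapprox \sigma^2$, equivalently $\sigma^2/s^\kappa \gtrapprox \varepsilon^{2-2\kappa}$, which is precisely \eqref{eq: compare condition}. To close the argument cleanly I would verify consistency of the ``sufficiently small $\varepsilon$'' clause with this condition: because $2-2\kappa\ge 0$, the quantity $\varepsilon^{2-2\kappa}$ remains bounded as $\varepsilon\to0$, and whenever the condition holds the pure-\textsf{SL} bound $\sigma^2 s/\varepsilon^2$ is already of order at least $s^{1+\kappa}/\varepsilon^{2\kappa}\to\infty$, hence eventually exceeds the fixed constants $\sigma^2/\lowbeta^2+s\bar{\phi}(a,\lowbeta)$.

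I do not expect any deep obstacle here: the entire content is the asymptotic reduction to leading terms, and the only real care needed is bookkeeping. The one subtlety worth flagging is the boundary case $\kappa=1$, where $\varepsilon^{2-2\kappa}=1$ and both divergent terms scale as $\varepsilon^{-2}$; there the condition \eqref{eq: compare condition} degenerates to the genuinely $\varepsilon$-free requirement $\sigma^2\gtrapprox s$, so the ``for sufficiently small $\varepsilon$'' phrasing is doing real work only through the suppression of the constant terms, not through the main inequality. For $\kappa<1$, by contrast, $\varepsilon^{2-2\kappa}\to 0$, so the condition is automatically met for every fixed $\sigma,s$ once $\varepsilon$ is small enough, which matches the intuition that a slowly degrading selection accuracy makes \twostage{} strictly advantageous in the high-precision regime.
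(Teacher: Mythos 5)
Your proposal is correct and follows essentially the same route as the paper's own proof: both compare the pure-\textsf{SL} Lasso requirement $\tilde{O}(\sigma^2 s/\varepsilon^2)$ against the dominant term $s^{1+\kappa}/\varepsilon^{2\kappa}$ from Theorem \ref{thm: complexity of HAI} with $\varrho=2$, then rearrange to obtain \eqref{eq: compare condition}. Your write-up is somewhat more explicit than the paper's (justifying why the $\varepsilon$-free summands are swamped and flagging the $\kappa=1$ boundary case), but the underlying argument is identical.
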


We call the ratio $\sigma^2/s^\kappa$ the \emph{label noise-to-comparison-accuracy ratio} (LNCA ratio), and we call Condition \eqref{eq: compare condition} the \emph{LNCA condition}. This condition involves two important parameters: the observational noise $\sigma$ and the convergence rate of the accuracy $\kappa$. When the comparison accuracy is bounded away from 1/2, we have $\kappa=0$, and the LNCA condition reduces to $\sigma \gtrapprox \varepsilon$, which naturally holds  true. In the worst case scenario, where $\kappa=1$, the LNCA condition becomes $\sigma^2 \gtrapprox s $. When the observational noise of the sample is higher, the benefit of the human comparison step becomes more apparent.

As previously mentioned, our framework can be extended to scenarios where the tasks differ between the initial feature-learning stage and the final implementation stage, provided the low-dimensional representation remains consistent. This is because the sample complexity of the second refinement stage in our algorithm remains unaffected as long as the representation stays unchanged. Meanwhile, Lasso must relearn the varying parameters. Therefore, the sufficient condition outlined in Proposition \ref{prop: LNCA condition} continues to hold.

\section{Practical Implementations}\label{sec: practical implementations}
In the preceding sections, we established a framework that assumes the opportunity for humans to compare any two parametric models. However, such a comparison could be challenging when humans have access only to the models themselves. In practice, comparing models based on their performance with sampled data is often more useful because this engagement allows humans to discern which model aligns better with the true underlying model. To address this specific challenge, in this section, we present the practical aspects of comparing two models using selected samples.

\subsection{Active Learning: How to Select Samples for Comparison}

Active learning focuses on the strategic selection of samples to be labeled or compared in our context, with the aim of constructing prediction models in a resource-efficient manner. The key idea is to assess the significance of a sample to determine the value of acquiring its labels. 
Algorithm \ref{Alg: RTB} actively selects a pair of models for comparison. In practical scenarios, this choice often involves soliciting human judgment to compare the responses predicted by two different models for the given sample data, rather than directly comparing their model parameters.

We start with an example. The clinician needs to evaluate the atherosclerotic cardiovascular disease (ASCVD) risk of patients $f_{\btheta}$, which is a linear function of contextual information, denoted as $x$. Such information comprises demographics (age, gender, racial/ethnic group, and education level); baseline conditions (body mass index, history of ASCVD events, and smoking status); clinical variables (glycosylated hemoglobin (A1c) and systolic blood pressure (sbp)); anti-hypertensive agents; and anti-hyperglycemic agents.

Given a sample of patients' contexts $x\in \cX$, clinicians evaluate the ASCVD risk according to their expertise. The distance between models $f_{\btheta}$ and $f_{\btheta^*}$, evaluated on sample $x$, can be defined as the absolute value of the difference:
\[
d_x(\btheta,\btheta^*) = |\varphi(x)^\top \btheta - \varphi(x)^\top \btheta^*|.
\]
Then, the utility of $f_{\btheta}$ on sample $x$ is $u_x(\btheta)=-d(\btheta,\btheta^*|x)=-|\varphi(x)^\top \btheta - \varphi(x)^\top \btheta^*|$. 
When presented with risk levels predicted by $f_{\btheta_1}$ and $f_{\btheta_2}$, clinicians would typically opt for  the model that produces results closest to their assessment.

Let us first consider a simple scenario. When $\cX$ contains an adequate variety of samples, such that for any base vector $e_i$, there exists sample $x$ such that $\varphi(x)=e_i$, then by comparing $f_{\btheta_1}$ with $f_{\btheta_2}$ on this sample $x$, we have 
\[
d_x(\btheta_1,\btheta^*) = |\theta_{1i}-\theta_i^*| \quad \text{and} \quad d_x(\btheta_2,\btheta^*) = |\theta_{2i}-\theta_i^*|.
\]
It implies that the comparison of $\btheta_{1}$ and $\btheta_{2}$, when testing on $x$, would focus only on the $i^{th}$ dimension. Thus, the alignment process for the parameter in different dimensions can be decoupled, and \twostage{} can be directly applied, where the Stage 2 alignment proceeds along each dimension.

However, in practice, the dataset $\mathcal{X}$ may be limited, potentially leading to situations where there are no samples $x \in \mathcal{X}$ that satisfy the condition $\varphi(x) = e_i$. In such cases, refining the model parameter along each dimension separatelybecomes impractical. The question then arises: How can we efficiently select samples to refine the model parameter? To address this challenge, we propose a procedure for constructing a new basis of comparison using the available samples in $\mathcal{X}$. This procedure aims to optimize the alignment process by leveraging the existing data to guide the selection of samples for parameter enhancement.

\subsection{New Basis Construction through the Gram Schmidt Process}

To facilitate the independent alignment of each coordinate, we aim to identify a new set of orthogonal basis, which can be constructed using $\Psi := \{\varphi(x) : x \in \mathcal{X}\}$. Let $\balpha_1,\cdots, \balpha_s$ represent these new bases, and our objective is to learn $\boldsymbol{\theta}^* = \sum_{i=1}^s \omega_i^* \boldsymbol{\alpha}_i$, where $\omega_i^*$ are the corresponding coefficients. We describe the process of constructing the orthogonal basis $\{\balpha_1,\cdots, \balpha_s\}$ from the sample space.

First, we pick a set of samples that spans the space of $\Psi$, denoted as $\varphi(x_1),\cdots, \varphi(x_s)$. For simplicity, we define $\bz_i=\varphi(x_i)$. Then, we use the 
Gram Schmidt process to construct a set of orthogonal bases, based on $\bz_1,\cdots, \bz_s$:
\[
\balpha_k = \bz_k- \sum_{j=1}^{k-1} \frac{\bz_k^\top \balpha_j}{\balpha_j^\top \balpha_j} \balpha_j, \quad \forall 1\leq k\leq s.
\]

We sequentially align the coordinates as $\omega_1, \cdots, \omega_s$. When evaluating a model on sample $x_1$, the true value is represented by $\varphi(x_1)^\top \btheta^* = \bz_1^\top \btheta^* = \omega_1^*\bz_1^\top \bz_1$. Using a probabilistic bisection process in Algorithm \ref{Alg: RTB}, we actively select two answers for each query. Through this iterative refinement, the final value terminates at $y(x_1)$, which is guaranteed to be within $\varepsilon$-accuracy of the true parameter $\omega_1^*$ (with high probability). In the end, the first coordinate is refined to $\hat{\omega}_1=y(x_1)/(\bz_1^\top \bz_1)$.

Next, the algorithm applies testing to the sample  $\bz_2=\varphi(x_2)=\frac{\bz_2^\top \balpha_1}{\balpha_1^\top \balpha_1}\balpha_1+\balpha_2$. Note that the true value is given by
\[
\varphi(x_2)^\top \btheta^*=\bz_2^\top \btheta^*=\bz_2^\top \balpha_1\omega_1^* + \balpha_2^\top \balpha_2 \omega_2^*,
\] where $w_1$ is already refined. Once more, using the probabilistic bisection process, the value of $z_2^\top \btheta^*$ is iteratively refined to $y(x_2)$. Upon termination of the probabilistic bisection process, we obtain the aligned value of the second coordinate as
\[
\hat{\omega}_2=\frac{y(x_2) - \bz_2^\top \balpha_1 \hat{\omega}_1}{\balpha_2^\top \balpha_2}.
\]
Subsequently, at the $k^{th}$ iteration, we test on the sample $\bz_k = \balpha_k + \sum_{j=1}^{k-1} \frac{\bz_k^\top \balpha_j}{\balpha_j^\top \balpha_j} \balpha_j$, where the true value equals
\[
\bz_k^\top \btheta^* =   \sum_{j=1}^{k-1} \omega_j^* \bz_k^\top \balpha_j+\omega_k^*\balpha_k^\top \balpha_k.
\]
 After the value of $\bz_k^\top \btheta^*$ is refined to $y(x_k)$ through Algorithm~\ref{Alg: RTB}, we can obtain the aligned value of the $k^{th}$ coordinate as 
\[
\hat{\omega}_k=\frac{y(x_k)-\sum_{j=1}^{k-1}\hat{\omega}_j \bz_k^\top \balpha_j}{\balpha_k^\top \balpha_k}.
\]
By asking a human to evaluate $x_1,\cdots, x_s$ and by refining $\omega_1,\cdots, \omega_s$ in sequence, we can refine the parameter $\btheta:=\sum_{i=1}^s \hat{\omega}_i \balpha_i$ within a certain accuracy level. We include the pseudo code in Algorithm~\ref{Alg: ASS}.

Note that the set of orthogonal bases is not unique; multiple sets may exist, and recognizing this possibility is essential. Consequently, we can conduct multiple rounds of alignment, with the option to switch the basis from one round to another.

\begin{algorithm}
   \caption{\textnormal{\textsf{ASS}} (Active Sample Selection)}\label{Alg: ASS}
\begin{algorithmic}[1]
\State \textbf{input}: available set of covariate $\cX$
\State Pick a set of samples that spans the space of $\Psi$: $\varphi(x_1),\cdots, \varphi(x_s)$
\For{$k=1,\cdots, s$}
\[
\balpha_k = \varphi(x_k)- \sum_{j=1}^{k-1} \frac{\varphi(x_k)^\top \balpha_j}{\balpha_j^\top \balpha_j} \balpha_j
\]
\EndFor
\For{$k=1,\cdots, s$}
\State Query sample $x_k$ through Algorithm \ref{Alg: RTB} and get output $y(x_k)$
\State 
Set $\hat{\omega}_k=\frac{y(x_k)-\sum_{j=1}^{k-1}\hat{\omega}_j \varphi(x_k)^\top \balpha_j}{\balpha_k^\top \balpha_k}$
\EndFor
\end{algorithmic}
\end{algorithm}

\section{Numerical Experiment}\label{sec:  numerical}

\subsection{Synthetic Data Experiments}

In the first experiment, we compare the performance of \textsf{SL+LHF} and \textsf{SL} using the same sample size to assess how parameter variations affect the performance of the two algorithms.

\emph{Experiment setup.} Consider a high-dimensional linear regression problem where $d=100$ and $s=10$. The first ten coordinates of the true parameter are independently drawn from the uniform distribution, and the rest of the coordinates are zero. The observational noise is drawn from Gaussian distribution with a mean of zero and variance $\sigma\in\{1,2,5\}$; we choose $\kappa\in [0,0.95]$, where $\kappa$ comes from Assumption \ref{assump: lip}. Specifically, the difference of the utility in the two answers $c_\Delta^+(\theta)$ and $c_{\Delta}^-(\theta)$ is assumed to be $\|\theta-\theta^*\|^\kappa$. The selection precision (along each dimension) is set at $\varepsilon=0.1$, and the human expertise level in the choice model is $\gamma=1$. We conducted 30 repetitions for each experiment.

In the first stage of the two-stage framework, we selected a dataset with a sample size proportional to $\sigma^2\log d$ to input into Lasso. According to Theorem \ref{thm: lasso}, this selection ensures that important features are identified with high probability. We chose the regularization parameter using cross-validation. The second stage is then executed, requiring human comparison samples, to reach the desired precision threshold $\varepsilon$. Once the \twostage{} algorithm completes, the combined sample size for both stages is recorded as the sample size for \textsf{SL+LHF}. Then, for a fair comparison, we provided the same sample size to \textsf{SL} (to train a lasso with the same number of  observations) and computed the estimation error. We then looked at the ratio of the estimation errors of the two approaches, denoted by Err(\twostage{})/Err(\textsf{SL}).

\subsubsection{Experiment results.}
Figure \ref{fig:experiment_sigma1} shows the median of the ratio Err(\twostage{})/Err(\textsf{SL}) using different values of $\kappa$ when $\sigma$ takes values $1$, $2$, or $5$. For all values of $\sigma$, when $\kappa$ is smaller, the human's selection accuracy is higher as $\theta$ approaches $\theta^*$. In particular, when $\kappa=0$, the selection accuracy is bounded away from 50\%. Therefore, the task is harder as $\kappa$ increases. It is consistent with the increasing trend in the estimation error ratio as $\kappa$ grows from 0 to 1. In addition, a larger value of $\kappa$ leads to a larger sample size, which leads to a decrease in the estimation error of lasso. When $\sigma=1$, the intersection point of the curve, with the horizontal line of the ratio equal to $1$ (red dashed line), is around $\kappa=0.2$. That is, when $\kappa\leq 0.2$, \twostage{} outperforms pure \textsf{SL} in more than half of the cases. In particular, when $\kappa=0$, the median of the error ratio is close to 0.

\begin{figure}[!th]
    \centering
    {\includegraphics[width = .8\linewidth]{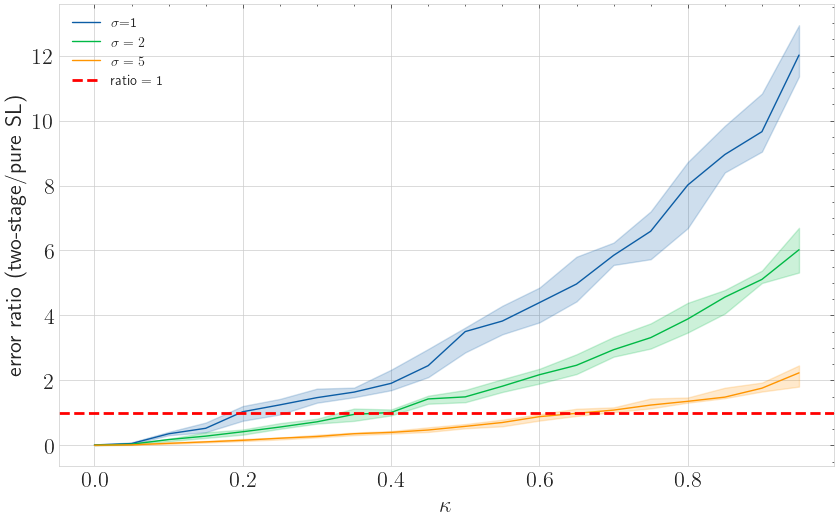}}
    \caption{Error ratio of the two-stage framework to the pure SL with different values of $\sigma$.}
    \label{fig:experiment_sigma1}
\end{figure}

The results for noise levels with $\sigma=2$ and $\sigma=5$ are also illustrated in Figure \ref{fig:experiment_sigma1}. Notably, the trends in the error curves are  similar. The threshold, defined as the intersection point with the red dashed line, is approximately 0.4 when $\sigma=2$ and approximately $0.65$ when $\sigma=5$.

Overall, the three curves  show that the superiority of \twostage{} becomes more pronounced in the presence of higher observational noise. In conclusion, \twostage{} is more effective under increased noise or reduced $\kappa$.

\subsubsection{The variation of expertise level $\gamma$.}  We vary the parameter $\gamma$ to model different levels of human expertise and to analyze how the accuracy of human comparisons affects the estimation error. For the experiment setup, we set $\sigma=2$ and $s=10$. The smaller value of $\gamma$ indicates higher accuracy in the comparison task. In particular, as $\gamma$ approaches 0, the human can always select the accurate answer. Figure \ref{fig:experiment_gamma} shows the performance of the two algorithms with $\gamma$ equal to $0.8$, $1$, and $1.1$. Comparing the three curves, when $\gamma=1.1$, \twostage{} performs better than pure \textsf{SL} if $\kappa$ is lower than approximately 0.3; when $\gamma=0.8$, \twostage{} performs better than pure \textsf{SL} if $\kappa$ is lower than approximately 0.5. Therefore, we find support for our theory that the performance of \twostage{} improves  when the human has a higher expertise level in the selection (i.e., the selection accuracy is higher).

\begin{figure}[!th]
    \centering
    {\includegraphics[width = .8\linewidth]{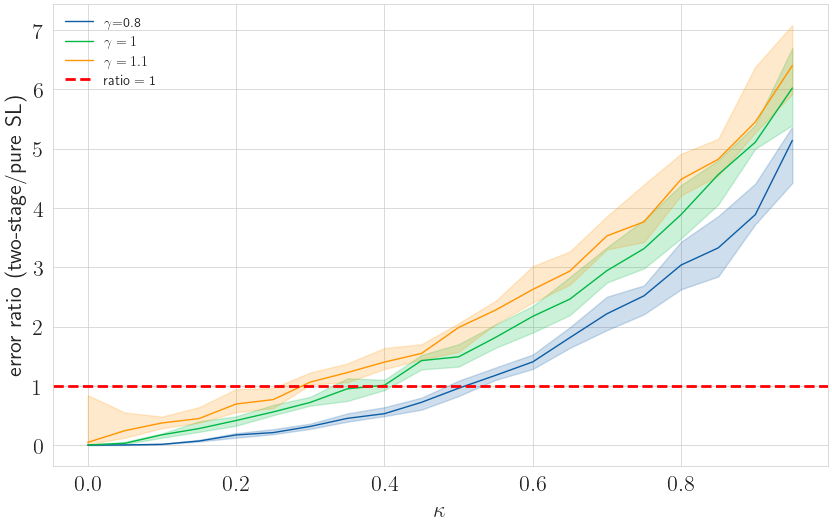}}
    \caption{Error ratio of the two-stage framework to the pure SL with different values of $\gamma$.}
    \label{fig:experiment_gamma}
\end{figure}

\subsubsection{The variation of important feature dimensions $s$.} The relative performance of two algorithms also depends on the dimension of the problem. For the experiment setup, we set $\gamma=1$ and $\sigma=2$. Figure \ref{fig:experiment_sigma2_gamma1_s20}  illustrates the algorithmic performance when important feature dimension $s$ takes values $10$, $20$, and $50$.  The figure reveals that the intersection point shifts to a smaller value as the important feature dimension value increases. This observation aligns with our theoretical condition \eqref{eq: compare condition}, indicating that as the value of the important feature dimension grows, the advantage of \twostage{} diminishes. This result highlights the importance of low-dimensional representations in the success of this method.

\begin{figure}[!th]
    \centering
    {\includegraphics[width = .8\linewidth]{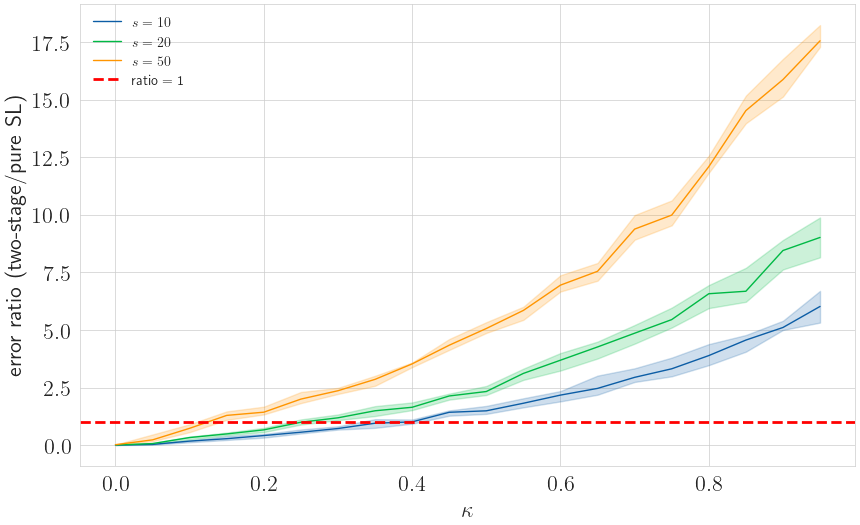}}
    \caption{Error ratio of the two-stage framework to the pure SL with different values of $s$.}
    \label{fig:experiment_sigma2_gamma1_s20}
\end{figure}

\subsection{MTurk Experiment on the LNCA Ratio}

In this section, we first outline the setup of our MTurk experiment\footnote{https://www.mturk.com/}. We then describe the estimation process for key parameters, including the accuracy convergence rate $\kappa$, expertise level $\gamma$, and noise level $\sigma$. After these parameters were estimated, we conducted simulations to compare the performance of the two-stage framework with that of the pure \textsf{SL} approach.

We designed a task in which participants are asked to estimate the number of points within a square. 
We divided participants into two distinct groups. The first group was presented with one of the squares and two options for the estimated number of points. The second group was shown the same square and asked to provide their estimated number of points.  (See Figures \ref{fig:experiment_scenario1} and \ref{fig: Comparison vs. Estimation}.) The task for the second group was to select the option they believed was closer to the actual number. For instance, in Figure \ref{fig:experiment_scenario1}(b), the true number is 32. Human participants were provided with two options: 55 and 35; the correct choice is 35.

With this dual-group approach, we aimed to explore the efficacy of human estimation when participants were asked to estimate the number of points versus to choose from comparative options. In the latter case, we varied the discrepancy between the two choices to assess how the ease or difficulty of comparison affected the accuracy of participants' selections. A larger disparity between the choices facilitated easier decision-making, while closer options posed a more challenging comparative task. Figure \ref{fig:experiment_scenario1} shows three testing samples. Notably, sample (c) presents the most challenging comparison because of the minimal disparity ($|u(c^+)-u(c^-)|)$ observed between them.\footnote{The gap between the first choice (45) and the true value (36) is 9 while the gap between the second choice (25) and the true value is 11. Thus, the difference between 9 and 11 is only 2.}

To compare the performance of the two-stage framework and the pure supervised learning, we first estimated $\kappa$ and $\sigma$ from the online experiment in Amazon MTurk. Then we ran the simulation to compare two methods under different precision levels. We launched 50 different tasks, all structured in the same way as the task shown in Figure \ref{fig: Comparison vs. Estimation}, and collected 500 data points in total. We estimated $\kappa$ and $\sigma$ as described in the following.

\begin{figure}[!th]
    \centering
    \subfigure[96 vs. 76. True: 64.]
    {\includegraphics[width = .3\linewidth]{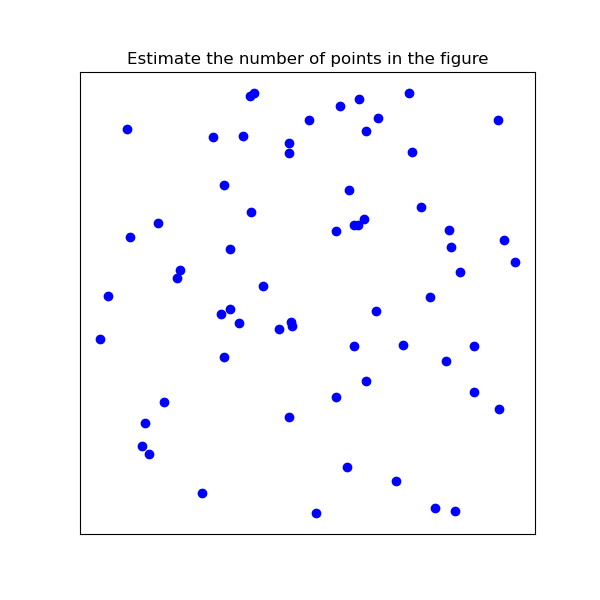}}
    \subfigure[55 vs. 35. True: 32.]
    {\includegraphics[width = .3\linewidth]{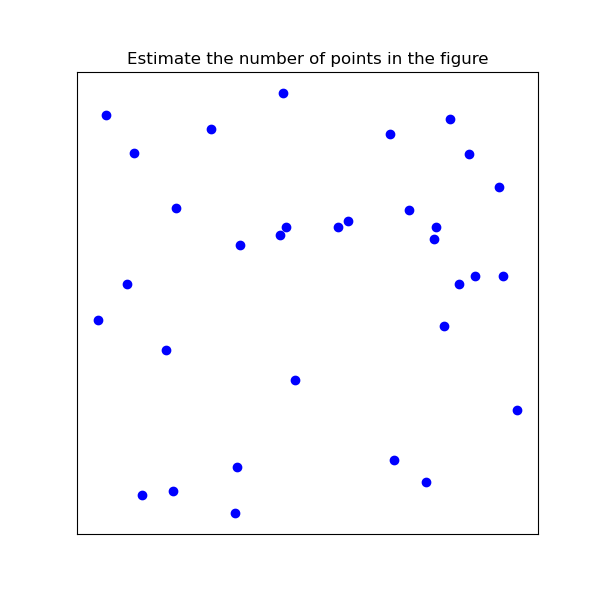}}
    \subfigure[45 vs. 25. True: 36.]
    {\includegraphics[width = .3\linewidth]{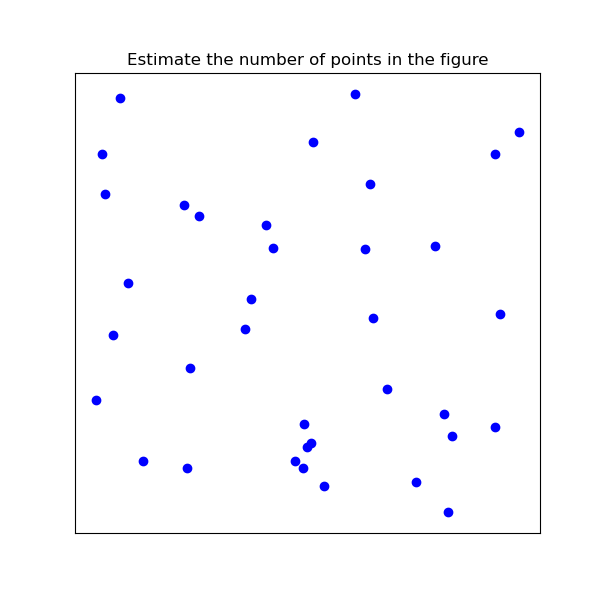}}
    \caption{Experiment design for Group 1.}
    \label{fig:experiment_scenario1}
\end{figure}

\begin{figure}[!th]
    \centering
    \subfigure[Estimate the number of points.]
    {\includegraphics[width = .5
    \linewidth]{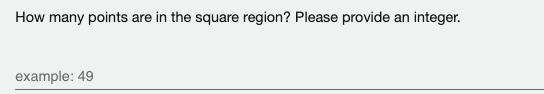}}
    \caption{Experiment design for Group 2.}
    \label{fig: Comparison vs. Estimation}
\end{figure}

\emph{Estimation process for $\kappa$.} 
Given figures with true parameter value $\theta^*$
(i.e., the number of points in the figure), we estimate the accuracy convergence rate $\kappa$ between two options using a human choice model. We collect multiple samples for each figure and use these estimates to derive the relationship between the difference in  utility and the accuracy of human choices.

Specifically, among two choices, assume the smaller one is $c^-$, the larger one is $c^+$, and the middle point is $\theta =(c^-+c^+)/2 $. 
According to the choice model, the human makes the correct choice with accuracy $p(\theta)=\frac{1}{1+\exp(-|u(c^-)-u(c^+)|/\gamma)}$. Note that we try to estimate $\kappa$ from Assumption \ref{assump: lip} that
\[
|u(c^+)-u(c^-)|/\gamma \approx \lambda_\Delta/\gamma\cdot\|\theta-\theta^*\|^\kappa\,.
\]
That is, the accuracy is $p(\theta)=\frac{1}{1+\exp(-\lambda_\Delta/\gamma \cdot |\theta-\theta^*|^\kappa)}$. Let $\tilde{\lambda}=\lambda_\Delta/\gamma$. For each data point with tested point $\theta_i$ and true parameter $\theta_i^*$, we define $y_i=1$ (the correct choice) with probability $p_i(\theta)$, and we define $y_i=0$ (the wrong choice) with probability $1-p_i(\theta)$. By maximizing the log-likelihood, we solve the optimization problem
\[
\max_{\tilde{\lambda}, \kappa}\quad \sum_{i=1}^K y_i\log p_i + (1-y_i)\log(1-p_i),
\]
which is equivalent to
\[
\max_{\tilde{\lambda}, \kappa} \quad \sum_{i=1}^K -y_i\log(1+\exp(-\tilde{\lambda}|\theta_i-\theta^*|^\kappa))-(1-y_i)\log(1+\exp(\tilde{\lambda}|\theta_i-\theta_i^*|^\kappa))\,.
\]
We solve this optimization problem simply by enumerating different values of $\tilde{\lambda}$ and $\kappa$.

\emph{Estimation process for $\sigma$.} The other group is asked to directly estimate the percentage of total area. For the figure with true parameter values $\theta_i^*$, the human provides responses by $y_i=\theta_i^*+\epsilon_i$. We estimate the variance of the noise by $\hat{\sigma}^2=\sum_{i=1}^N \frac{(y_i-\theta_i^*)^2}{N-1}$.

\emph{Empirical result.} We applied a bootstrap resampling method to the data collected from Amazon MTurk, generating 500 resamples to estimate key parameters, including $\kappa$, $\tilde{\lambda}$, and $\sigma$. The resulting mean estimates are $\kappa = 0.328$, $\tilde{\lambda} = 0.132$, and $\sigma^2 = 29.665^2$. The variances are 0.061, 0.006, and 4.600 for $\kappa$, $\tilde{\lambda}$, and $\sigma$, respectively.

Figure \ref{fig:error_empirical_AMT} presents the median empirical error of the two-stage framework across varying precision levels, based on 100 repetitions for each value of $\varepsilon$. The results demonstrate that, for any given precision level, the empirical error remains within the targeted threshold. In addition, an upward trend in the empirical error is observed as the precision level increases. The green plot illustrates the sample size required to achieve a specified precision, showing a rapid decline in sample size as precision increases.

For a clearer comparison with the pure \textsf{SL}, Figure \ref{fig:error_ratio_AMT} displays the error ratio between the two-stage framework and the pure \textsf{SL}. When $\varepsilon=0.3\%$, the median of the estimation error of the two-stage framework is approximately 16\% of the pure \textsf{SL}. Increasing the precision to $\varepsilon=4.3\%$, the median of the estimation error of the two-stage framework is approximately 45\% of the pure \textsf{SL}. This trend of a decreasing error ratio with tighter precision aligns with Proposition \ref{prop: LNCA condition}, highlighting the increasing advantage of the two-stage framework as $\varepsilon$ approaches 0.

\begin{figure}[!th]
    \centering
    {\includegraphics[width = .8\linewidth]{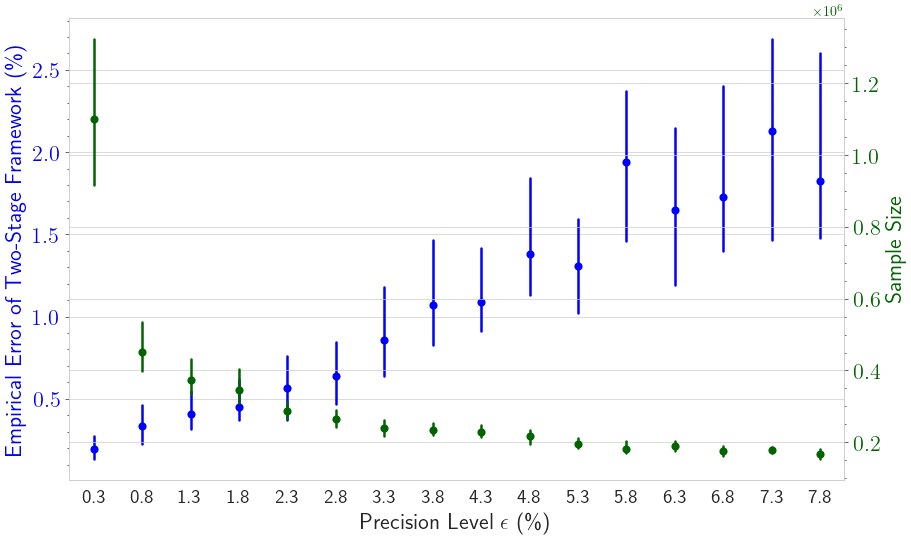}}
    \caption{Empirical error of the two-stage framework for different values of $\varepsilon$.}
    \label{fig:error_empirical_AMT}
\end{figure}

\begin{figure}[!th]
    \centering
    {\includegraphics[width = .8\linewidth]{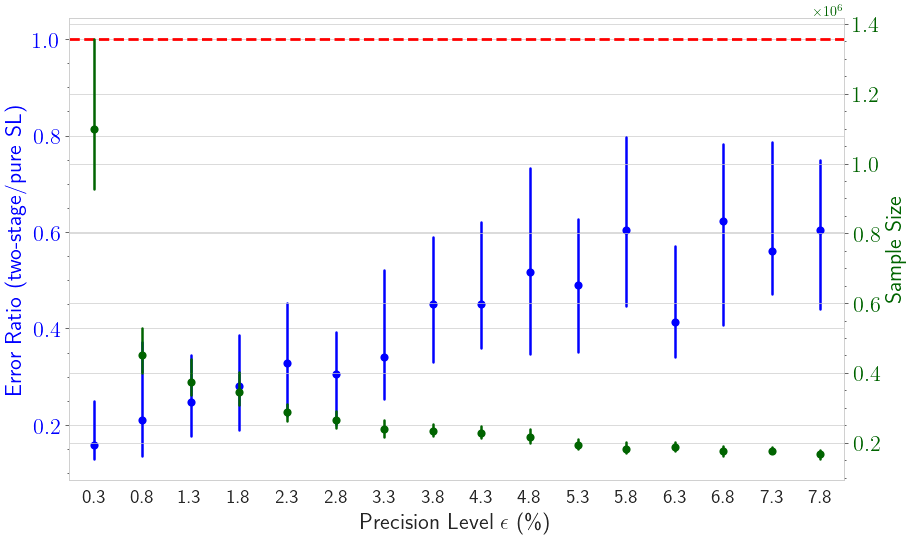}}
    \caption{Error ratio of the two-stage framework to the pure SL for different values of $\varepsilon$.}
    \label{fig:error_ratio_AMT}
\end{figure}

In conclusion, we acknowledge limitations in our numerical experiments. Ideally, our two-stage framework could be used to acquire responses sequentially from freelancers, first asking them which number is closer and and then asking them for an estimate of the number of points. Then the two results could be compared. However, implementing this comparison would be challenging because of the adaptive nature of our acquisition process, where each subsequent question depends on previously collected answers. Moreover, acquiring responses from the Amazon MTurk platform is costly. Given the complexity and expense of this procedure, we opted to use simulations to evaluate the performance of our framework.

\section{Discussions and Concluding Remarks}\label{sec: conclusion}
This work is motivated by the ever-growing observations of human-AI interaction in LLM. We presented a theoretical two-stage framework for explaining the significance of using human comparisons for model improvement. In the first stage, the noisy-labeled data is fed into the SL procedure to learn the low-dimensional representation; we then ask human evaluators to make pair-wise comparisons in the second stage. To address the challenge of  efficient acquisition of valuable information from these human comparisons, we introduced a probabilistic bisection method, which factors in the uncertainty that arises from the accuracy of human comparisons. The newly introduced concept, the LNCA ratio, quantifies the relative scale between label noise and human comparison accuracy. We demonstrate the significant advantages of our proposed two-stage framework over a pure supervised learning approach under certain conditions.

\subsection{Implications for Supervised Fine-Tuning}

Supervised fine-tuning (SFT) is the process of refining a pre-trained model on a labeled dataset with specific input-output pairs tailored to a target task. Our proposed framework can fit into SFT in the following way. First, the pre-trained model produces a representation, denoted as $\hat{\varphi}(\cdot)$, along with initial parameters $\hat{\btheta}_0$. For a new task, if $\hat{\varphi}(\cdot)$ remains the same, we proceed directly to the second phase, where human feedback is solicited for model alignment through comparison data. However, if the representation $\hat{\varphi}(\cdot)$ also requires adaptation, we first update the embeddings by training on a combination of newly collected and existing noisy-labeled data, using techniques such as transfer learning \citep{pan2020transfer}. In this adapted embedding space, human comparisons are then actively acquired in the second stage to enhance model alignment.

\subsection{Future Directions}

Our work represents an early attempt to organically integrate two sources of data: noisy-labeled data and human comparison data. We
position this paper as a prompt for an open discussion. Some potential extensions to our work
are worth investigating. First, we extended the probabilistic bisection algorithm from one dimension to multiple dimensions by refining along each dimension. This expansion raises the question of whether a more efficient bisection algorithm exists in multi-dimensional spaces. As discussed in \cite{frazier2019probabilistic}, the development of a PBA tailored to multi-dimensional problems remains an open problem. Second, although our paper focuses on a prediction problem, the framework has the potential to be extended to action-based learning, where the goal is to select optimal actions for the decision-making problem. Third, exploring the optimal sample selection to maximize the learning algorithm's efficiency is a compelling area of inquiry. Although we have offered guidelines for practical sample selection, the quest for the most efficient proposed procedure remains open.

\bibliographystyle{informs2014}
\bibliography{references}

\newpage

\begin{APPENDICES}
\section{Proofs}
\subsection{Proofs in Section \ref{sec: human}}\label{appendix: proof for human sec}
\begin{proof}{Proof of Lemma~\ref{lemma: precision}.}
Without loss of generality, assume $d(\btheta_1,\btheta^*)<d(\btheta_2,\btheta^*)$. Then the probability that human will make the right selection (i.e., select $f_{\btheta_1}$), is
\[
\begin{aligned}
\Pr(f_{\btheta_1}\succeq f_{\btheta_2}) &= \Pr(U(\btheta_1)>U(\btheta_2))\\
&=\frac{\exp(u(\btheta_1)/\gamma)}{\exp(u(\btheta_1)/\gamma) + \exp(u(\btheta_2)/\gamma)}= \frac{1}{1+ \exp((u(\btheta_2)-u(\btheta_1))/\gamma)}.
\end{aligned}
\]
Since $u(\btheta_2)<u(\btheta_1)$, we have $\Pr(f_{\btheta_1}\succeq f_{\btheta_2})>1/2$ when $\gamma$ is finite. 
We can similarly prove the conclusion when $d(\btheta_1,\btheta^*)>d(\btheta_2,\btheta^*)$.
\QED
\end{proof}

\begin{proof}{Proof of Proposition \ref{prop: bisection}.}
At each round, the algorithm filters a half of the space. Therefore, when the total number of round $k$ satisfies that
\[
\frac{2\beta_{\Theta}}{2^k}\leq \varepsilon,
\]
 we are able to reach the conclusion that $|\theta_k-\theta^*|\leq \varepsilon$. The condition is equivalent to
\[
k \geq \log_2 \left(\frac{\beta_\Theta}{\varepsilon}\right)+1.
\]
\QED
\end{proof}

\begin{proof}{Proof of Lemma \ref{lemma: detection accuracy}.}
We first show that for a chosen confidence parameter $\eta\in (0,1)$ and $\hbar_m = (2m(\ln(m+1)-\ln \eta))^{1/2}$,  such a test satisfies  $\Pr(\tau_1^\uparrow(\theta)<\infty)\leq \eta$ if $\theta=\theta^*$.
Note that $S_m(\theta) = \sum_{i=1}^{m}\tilde{Z}_i(\theta)$ and $\Pr(\tilde{Z}_i(\theta)=1)=\tilde{p}(\theta)$. We use $\tilde{p}$ to denote $\tilde{p}(\theta)$ in this proof for simplicity.

Define $Q_m=\sum_{i=1}^m X_i$, where $X_i$ are i.i.d. Bernoulli random variables with $\Pr(X_i=1)=p=1-\Pr(X_i=0)$.
\cite{robbins1970statistical} has shown that 
$\Pr(\tau'<\infty)\leq \eta$, where 
\[
\tau' = \inf\left\{m\geq 1: \binom{m}{Q_m} p^{Q_m}(1-p)^{m-Q_m}\leq \eta/(m+1)\right\}.
\]
This test stops at some $m$ if $Q_m=\hbar$ such that $\Pr(Q_m=\hbar)\leq \eta/(m+1)$. Define the other stopping time:
\[
\tau'' = \inf\left\{m\geq 1: |Q_m-m\tp| \geq (m(\ln(m+1)-\ln \eta)/2)^{1/2}\right\}.
\]
We will show that $\Pr(\tau''<\infty)\leq \Pr(\tau'<\infty)\leq \eta.$
For $\hbar\geq m\tp + (m(\ln(m+1)-\ln \eta)/2)^{1/2}$, Hoeffding's inequality yields 
\[
\Pr(Q_m=\hbar)\leq \Pr(Q_m\geq \hbar)\leq \exp(-2m(\hbar/m-\tilde{p})^2)\leq \eta/(m+1).
\]
Similarly, for $\hbar\leq m\tilde{p}-(2m(\ln(m+1)-\ln \eta))^{1/2}$, Hoeffding's inequality yields
\[
\Pr(Q_m= \hbar)\leq \Pr(Q_m\leq \hbar)\leq \exp(-2m(\hbar/m-\tilde{p})^2)\leq \eta/(m+1).
\]
Since $\Pr(Q_m=\hbar)\leq \eta/(m+1)$ for any $\hbar\geq m\tp + (m(\ln(m+1)-\ln \eta)/2)^{1/2}$ or $\hbar\leq m\tilde{p}-(2m(\ln(m+1)-\ln \eta))^{1/2}$,    we have $\tau''\geq_{s.t.} \tau'$, which implies that
\[
\Pr(\tau''<\infty)\leq \Pr(\tau'<\infty)\leq \eta.
\]
Since $\tilde{p}(\theta^*)=1/2$ and $S_m(\theta^*)=2(Q_m-m \tilde{p}(\theta^*))$, then by setting $\hbar_m= (2m(\ln(m+1)-\ln \eta))^{1/2}$, we have 
\[
\begin{aligned}
\tau''&= \inf\left\{m\geq 1: |Q_m-m\tp| \geq (m(\ln(m+1)-\ln \eta)/2)^{1/2}\right\}\\
&=\inf\left\{m\geq 1: |S_m(\theta^*)|\geq (2m(\ln(m+1)-\ln \eta))^{1/2}\right\}\\
&=\inf\left\{m\geq 1: |S_m(\theta^*)|\geq \hbar_m\right\}.
\end{aligned}
\]
According to the definition, we have
\[
\Pr(\tau_1^\uparrow(\theta^*)<\infty)\leq \eta.
\]

Let $S(0)$ denote the zero-drift random walk. On the event $\theta_k>\theta^*$,
\[
\Pr(Z_k(\theta_k)=+1|\theta_k) =  \Pr(S_{k,\tau_{1}^{\uparrow}(\theta_k)}>0, \tau_{1}^{\uparrow}(\theta_k)<\infty|\theta_k) \leq \Pr(S_{\tau_1^{\uparrow}(\theta^*)}>0, \tau_1^{\uparrow}(\theta^*)<\infty)\leq \eta/2,
\]
where  the first inequality follows by a sample path argument and the second inequality by the property
that $\mathbb{P}(\tau_1^\uparrow(\theta^*)<\infty)\leq \eta$.
Similarly, it can be shown that on the event $\theta_k<\theta^*$,
\[
\Pr(Z_k(\theta_k)=-1|\theta_k) \leq \eta/2.
\]
We use $p(\theta)$ to denote the  probability of correctness when the testing stops.\footnote{Note that $\tilde{p}(\theta)$ denotes the probability of accuracy from one sample while $p(\theta)$ denotes the probability of accuracy when the testing stops (based on multiple samples).} Therefore, for $\theta\in \Theta\backslash\{\theta^*\}$, we have accuracy $p(\theta)\geq 1-\eta/2>1/2$, where $p(\theta) = \Pr(Z_k(\theta)=+1)$ if $\theta<\theta^*$ and $p(\theta)=\Pr(Z_k(\theta)=-1)$ if $\theta>\theta^*$. Define $p_c=1-\eta/2$ as the comparison accuracy, which can be chosen by the algorithm through the selection of $\eta$.
\QED
\end{proof}

\begin{proof}{Proof of Lemma \ref{lemma: T(a) and tau(a)}.}
Let $U_0=\tau(a)$. Lemma 3 in \cite{frazier2019probabilistic} establishes that $U_0$ is finite a.s. Now, for $j\geq 1$, we recursively define
\[
V_j = \inf\{i>U_{j-1}: \nu_i(A)\geq 1/2\}, \text{ and } U_j=\inf\{i>V_j:\nu_i(B)\geq 1-\Delta\}.
\]
Here, $V_j$ represents the first time after time $U_{j-1}$ that the conditional mass in $A$ becomes at least $1/2$, and $U_j$ is the first time after $V_j$ that the conditional mass in $B$ is once again at least $1-\Delta$. $U_j$ and $V_j$ for $j\geq 1$ taking the value $\infty$ implicitly implies that the corresponding event does not occur. Let $\Gamma$ be the number of ``cycles'', i.e., $\Gamma=\sum_{j=1}^\infty \1(V_j<\infty)$. It holds that
\begin{equation}\label{eq: T(a)}
T(a) = U_0 + \sum_{j=1}^{\Gamma} [(V_j-U_{j-1})+(U_j-V_j)].
\end{equation}
In the above expression, $V_j-U_{j-1}$ represents the $j^{th}$ time required to increase the conditional  mass in $A$ from below $\Delta$ to 1/2 or more.  The quantity $U_j-V_j$ (conditional on $\Gamma\geq j$, i.e., conditional on both $U_j$ and $V_j$ being finite), represents the number of steps required to return the conditional mass to at least $1-\Delta$, starting from a point where the conditional mass in $A$ is at least $1/2$.

Using Equation \eqref{eq: T(a)}, it can be shown that (Equation (6) in \cite{frazier2019probabilistic})
\begin{equation}\label{eq: Ta decomposition}
T(a)\leq_{s.t.} \tau(a) + R,
\end{equation}
where $R$ is a random variable that is independent of $\tau(a)$ and $a$.
\QED

\end{proof}

\begin{proof}{Proof of Lemma \ref{lemma: prior}.}
For ease of notation, let $\tau = \tau(a)=\inf\{k\geq 0: \nu_k(B)>1-\Delta, \theta_k<\theta^*\}$. 
Recall that
$\nu_k(D)=\mu_k(D\cap [-\beta_\Theta,\theta^*])/\mu_k([-\beta_\Theta,\theta^*])$.

Fix $\iota\in(0,1/2)$. Define 
\[
M_k = e^{r_2 \tilde{N}(k\wedge \tau)}/\nu_{k\wedge \tau}(B),
\]
where
\[
\tilde{N}(k) = \sum_{j=1}^{k-1} \1(\mu_j([-\beta_\Theta,\theta^*])\geq 1/2+\iota)
\]
is the number of instances from time 0 to time $k-1$ when the mass of $A\cup B$ is at least $1/2+\iota$.
It  has been shown in Lemma 2 in \cite{frazier2019probabilistic} that $\{M_k:k\geq 0\}$ is a supermartingale with respect to the filtration $\{\mathscr{F}_k:k\geq 0\}$, for some $r_2$ (which depends only on $\Delta$, $\eta$, $p_c$, and $p$). According to the supermartingale property and the assumption that $\mu_0(B)\geq a^\varsigma>0$, we have that 
\[
\E[e^{r_2 \tilde{N}(k\wedge \tau)}] \leq \E\left[\frac{e^{r_2 \tilde{N}(k\wedge \tau)}}{\nu_{k\wedge \tau}(B)}\right]=\E[M_k]\leq \E[M_0] = \frac{1}{\mu_0(B)}=\frac{\theta^*}{a^\varsigma}.
\]
Since $\tilde{N}(\cdot)$ is nondecreasing, so $e^{r_2 \tilde{N}(k\wedge \tau)}\uparrow e^{r_2 \tilde{N}(\tau)}$ as $k\rightarrow \infty$. Monotone convergence then yields 
\[
\E [e^{r_2 \tilde{N}(\tau)}] = \lim_{k\rightarrow \infty} \E[e^{r_2 \tilde{N}(k\wedge \tau)}]\leq \theta^*/a^\varsigma.
\]
For $a=\omega e^{-r k}$, we obtain 
\begin{equation}\label{eq: lemma 4 inequality 1}
\E[e^{r_2 \tilde{N}(\tau(\omega e^{-rk}))}]\leq  \theta^*/(\omega e^{-r k})^\varsigma = e^{\varsigma rk} \theta^*/\omega^{\varsigma}.
\end{equation}
 Lemma 1 in \cite{frazier2019probabilistic} shows that there exists some $\alpha>0$ such that
\begin{equation}\label{lemma 1 in frazier}
\Pr(\tilde{N}(k/2)\leq \alpha k/2)\leq \beta_1 e^{-r_1 k/2}.
\end{equation}
Hence, we can bound 
\[
\begin{aligned}
\Pr(\tau(\omega e^{-r k})>k/2) &\leq \Pr(\tilde{N}(\tau(\omega e^{-r k}))\geq \tilde{N}(k/2))\\
&\leq \Pr(\tilde{N}(\tau(\omega e^{-r k}))\geq \tilde{N}(k/2), \tilde{N}(k/2)>\alpha k/2) + \Pr(\tilde{N}(k/2)\leq \alpha k/2)\\
&\stackrel{(*)}{\leq} \Pr(\tilde{N}(\tau(\omega e^{-r k}))>\alpha k/2) + \beta_1 e^{-r_1 k/2}\\
&=\Pr(e^{r_2 \tilde{N}(\tau(\omega e^{-r k}))}>e^{r_2\alpha k/2}) + \beta_1 e^{-r_1 k/2} \\
&\stackrel{(**)}{\leq}  e^{-r_2 \alpha k/2} \E[e^{r_2 \tilde{N}(\tau(\omega e^{-r k})}] +\beta_1 e^{-r_1 k/2}\\
&\stackrel{(***)}{\leq}  e^{-r_2 \alpha k/2} \cdot \frac{\theta^*}{\omega^\varsigma} e^{\varsigma r k}+\beta_1 e^{-r_1 k/2}\\
&=e^{\varsigma r k-r_2 \alpha k/2} \cdot \frac{\theta^*}{\omega^\varsigma} +\beta_1 e^{-r_1 k/2},
\end{aligned}
\]
where (*)  uses Markov's inequality, (**) applies \eqref{lemma 1 in frazier}, and (***) applies \eqref{eq: lemma 4 inequality 1}.
By setting $r=\frac{r_2\alpha}{4\varsigma}$, we have
\[
\Pr(\tau(\omega e^{-r k})>k/2)\leq e^{-r_2 \alpha k/4} \cdot \frac{\theta^*}{\omega^\varsigma} +\beta_1 e^{-r_1 k/2}.
\]
\QED
\end{proof}

\begin{proof}{Proof of Theorem \ref{thm: complexity}.}
Lemma \ref{lemma: T(a) and tau(a)} has established that
\[
T(a) \leq_{s.t.} \tau(a) + R,
\]
where $\tau(a)$ and $R$ are independent random variables with appropriately light tails and the non-negative random variable $R$ does not depend on $a$.

For arbitrary $\omega>0$, we have
\begin{align}\label{eq: inequality 1 in Theorem 1}
&\Pr(e^{rk}|\theta_k-\theta^*|\1(\theta_k\leq \theta^*)>\omega) \leq \Pr(\theta^*-\theta_k>\omega e^{-rk})\notag\\
 \leq &\Pr(T(\omega e^{-rk})>k) \leq \Pr(\tau(\omega e^{-rk})+R>k)\notag\\
 \leq & \Pr(\tau(\omega e^{-rk})>k/2) + \Pr(R>k/2)\,.
\end{align}
Lemma \ref{lemma: prior} 
establishes that 
\begin{equation}\label{eq:bound tau}
\Pr(\tau(\omega e^{-r k})>k/2)\leq e^{-r_2 \alpha k/4} \cdot \frac{\theta^*}{\omega^\varsigma} +\beta_1 e^{-r_1 k/2},
\end{equation}
where $r=r_2\alpha/(4\varsigma)$.

Moreover, Lemma 9 in \cite{frazier2019probabilistic} shows that $R$ has an exponentially decaying tail, i.e., 
\begin{equation}\label{eq: bound R}
\Pr(R>k/2) \leq \beta_2 e^{-r_R k},
\end{equation}
where $r_R$ does not depend on $a$.

Fix some $\omega>0$ (which we will define later) and define
\[
\tau^{\rightarrow}(\delta,\varepsilon) = \max \left\{\frac{\log\left(\omega/\varepsilon\right)}{r}, \frac{4\log(8\theta^*/(\delta\omega^\varsigma))}{r_2 \alpha}, \frac{2\log(8\beta_1/\delta)}{r_1}, \frac{\log(8 \beta_2/\delta)}{r_R}\right\}.
\] 
For any $k$ such that $k \geq \tau^{\rightarrow}(\delta,\varepsilon)$, 
the probability of the precision $|\theta_k-\theta^*|$ being larger than $\varepsilon$ can be bounded by
\[
\begin{aligned}
&\Pr\left(|\theta_k-\theta^*|>\varepsilon\right)\\
=& \Pr\left(|\theta_k-\theta^*|  e^{r k}>\varepsilon  e^{r k }\right)\\
\leq&\Pr(e^{rk}|\theta_k-\theta^*|>\omega)\\
=&\Pr(e^{rk}|\theta_k-\theta^*|\1(\theta_k\leq \theta^*)>\omega)+ \Pr(e^{rk}|\theta_k-\theta^*|\1(\theta_k>\theta^*)>\omega)\\
\stackrel{\eqref{eq: inequality 1 in Theorem 1}}{\leq} &2\left(\Pr(\tau(\omega e^{-rk})>k/2) + \Pr(R>k/2)\right)\\
\stackrel{\eqref{eq:bound tau},\eqref{eq: bound R}}{\leq}  & 2 \left(e^{-r_2 \alpha k/4} \cdot \frac{\theta^*}{\omega^\varsigma} +\beta_1 e^{-r_1 k/2}+\beta_2 e^{-r_R k}\right)\\
\leq & \delta,
\end{aligned}
\]
where the first inequality holds from the definition of $\tau^{\rightarrow}(\delta,\varepsilon)$; and the last inequality holds since $k\geq \tau^\rightarrow(\delta,\varepsilon)$ implies
\[
\quad e^{-r_2 \alpha k/4}\frac{\theta^*}{\omega^\varsigma}\leq \delta/8, \quad \beta_1 e^{-r_1 k/2}\leq \delta/8, \quad  \beta_2 e^{-r_R k}\leq \delta/8.
\]
Specifically, we take $\omega$ such that
\[
\omega/\varepsilon= 8\theta^*/(\delta\omega^\varsigma),
\]
which is equivalent to 
\[
\omega = \left(\frac{8\theta^*\varepsilon}{\delta}\right)^{\frac{1}{1+\varsigma}}.
\]
In this case, 
\[
\begin{aligned}
\tau^\rightarrow(\delta,\varepsilon) &= \max \left\{\frac{4\varsigma(1+\varsigma)\log(\frac{8\theta^*}{\delta \varepsilon^{\varsigma}})}{r_2\alpha}, \frac{4\log(\frac{8\theta^*}{\delta \varepsilon^{\varsigma}})}{r_2 \alpha}, \frac{2\log(8\beta_1/\delta)}{r_1}, \frac{\log(8\beta_2/\delta)}{r_R}\right\}\\
&=\max \left\{\frac{4\varsigma(1+\varsigma)\log(\frac{8\theta^*}{\delta \varepsilon^{\varsigma}})}{r_2\alpha},  \frac{2\log(8\beta_1/\delta)}{r_1}, \frac{\log(8\beta_2/\delta)}{r_R}\right\}.
\end{aligned}
\]
\QED
\end{proof}

\begin{proof}{Proof of Proposition \ref{prop: exp stopping}.}
 From the proof of Theorem \ref{thm: complexity}, for any $k$, by substituting $\omega=a e^{rk}$ in \eqref{eq:bound tau}, we have
\[
\begin{aligned}
&\Pr(T(a)>k) \leq \Pr(\tau(a)+R>k)\\
 \leq & \Pr(\tau(a)>k/2) + \Pr(R>k/2)\\
 \leq &e^{-r_2 \alpha k/2} \cdot \frac{\theta^*}{a^\varsigma} +\beta_1 e^{-r_1 k/2} + \beta_2 e^{-r_R k}.
\end{aligned}
\]
Since
\[
\Pr(\psi(a)>k)= \Pr(\max\{T(a), T'(a)\}>k)\leq \Pr(T(a)>k) +\Pr(T'(a)>k),
\]
and $\Pr(T'(a)>k)=\Pr(T(a)>k)$ because of symmetricity, 
we have
\[
\Pr(\psi(a)>k) \leq 2(e^{-r_2 \alpha k/2} \theta^*/a^\varsigma +\beta_1 e^{-r_1 k/2} +\beta_2 e^{-r_R k}).
\]
Taking expectation of $\psi(a)$, 
\[
\begin{aligned}
\E[\psi(a)] &= \int_0^\infty \Pr(\psi(a)>s) ds\\
&\leq 2\int_0^\infty  \Big(e^{-r_2 \alpha s/2} \cdot \frac{\theta^*}{a^{\varsigma}}+\beta_1 e^{-r_1 s/2} +\beta_2 e^{-r_R s}\Big)ds\\
&= \frac{4\theta^*}{r_2 \alpha a^\varsigma}+\frac{4\beta_1}{r_1}+\frac{2\beta_2}{r_R}.
\end{aligned}
\]
\QED
\end{proof}

\begin{proof}{Proof of Lemma~\ref{prop: complexity micro}.}

Let $N^{\uparrow}(q)$ denote the number of vertical tests needed when the drift is $q$ (which equals $2\tilde{p}(\theta)-1$).
From \cite{robbins1974expected} and \cite{lai1977power}, we have
\[
\limsup_{q\rightarrow 0}\  \E[N^\uparrow(q)] q^2 (\ln(|q^{-1}|))^{-1}<\infty.
\]
Then, there exists $c_1>0$ and $q_0$ such that
\[
\E[N^\uparrow(q)] q^2 (\ln(|q^{-1}|))^{-1}< c_1, \quad  \forall q\leq q_0.
\]
Moreover, $\E[N^\uparrow(q)]$ is decreasing in $q$.
It implies that there exists $p_0>0$ and $c_2>0$ such that 
\[
\begin{aligned}
\E[\tau_1^{\uparrow}(\theta)]&\leq c_1|2\tilde{p}(\theta)-1|^{-2} \ln(|2\tilde{p}(\theta)-1|^{-1})\1(|2\tilde{p}(\theta)-1|\leq p_0) + c_2 \1(|2\tilde{p}(\theta)-1|>p_0)\\
&\leq c_1|2\tilde{p}(\theta)-1|^{-2} \ln(|2\tilde{p}(\theta)-1|^{-1}) + c_2\,.
\end{aligned}
\]
Thus, we have reached our conclusion.
\QED
\end{proof}

\begin{lemma}\label{lemma: finite cover}
Suppose $\tilde{p}(\theta)$ is strictly larger than 1/2 for any $\theta\neq \theta^*$. There exists $\delta'>0$ such that $\tilde{p}(\theta)>1/2+\delta'$ for all $\theta\in [-\beta_\Theta, \theta^*-a]\cup [\theta^*+a,-\beta_\Theta]$.
\end{lemma}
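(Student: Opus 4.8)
The plan is to reduce the statement to the extreme value theorem via compactness, exactly as the phrase ``Heine--Borel theorem'' in the surrounding text suggests. Write $K:=[-\beta_\Theta,\theta^*-a]\cup[\theta^*+a,\beta_\Theta]$ for the set in question. First I would observe that $K$ is closed and bounded, hence compact by Heine--Borel. Since $a>0$, the true parameter $\theta^*$ lies in the open gap $(\theta^*-a,\theta^*+a)$ and is therefore \emph{not} an element of $K$; consequently every $\theta\in K$ satisfies $\theta\neq\theta^*$, so the hypothesis of the lemma gives the pointwise strict bound $\tilde{p}(\theta)>1/2$ for all $\theta\in K$. The goal is to upgrade this pointwise bound to a uniform one, and the only thing standing between the pointwise and uniform statements is whether the infimum of $\tilde{p}$ over $K$ is attained.

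The second step is therefore to establish that $\tilde{p}$ is continuous on $K$. By the definition preceding the lemma, $\tilde{p}$ coincides with $p_\gamma^+$ on the left component $[-\beta_\Theta,\theta^*-a]$ and with $p_\gamma^-$ on the right component $[\theta^*+a,\beta_\Theta]$, so it suffices to treat each branch separately (the awkward switch in the definition of $\tilde{p}$ at $\theta^*$ never occurs inside $K$, precisely because $K$ stays at distance at least $a$ from $\theta^*$). Recalling that
\[
p_\gamma^-(\theta)=\left[1+\exp\Big\{\big(u(c_\Delta^+(\theta))-u(c_\Delta^-(\theta))\big)/\gamma\Big\}\right]^{-1}
\quad\text{and}\quad p_\gamma^+=1-p_\gamma^-,
\]
each branch is a fixed continuous (logistic) transformation of the map $\theta\mapsto u(c_\Delta^+(\theta))-u(c_\Delta^-(\theta))$. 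Since $u(\cdot)=-d(\cdot,\theta^*)$ for the distance function $d$ of Assumption~\ref{assump: utility}, and the comparison points $c_\Delta^\pm(\theta)$ depend continuously on $\theta$, this map is continuous; hence $\tilde{p}$ is continuous on each of the two compact intervals making up $K$.

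The final step is to apply the extreme value theorem: a continuous real-valued function on a compact set attains its infimum, so $\tilde{p}$ achieves a minimum value $m:=\min_{\theta\in K}\tilde{p}(\theta)$ at some $\theta_0\in K$. Because $\theta_0\neq\theta^*$, the pointwise bound yields $m=\tilde{p}(\theta_0)>1/2$, and setting $\delta':=m-1/2>0$ gives $\tilde{p}(\theta)\geq m=1/2+\delta'$ for all $\theta\in K$, as claimed. The main (and essentially only) obstacle is the continuity of $\tilde{p}$, i.e. the continuity of the underlying distance function $u$: compactness is useless without it, since $\tilde{p}$ genuinely approaches $1/2$ both as $\theta\to\theta^*$ and---per Example~\ref{example: far-away answer}, for large $\Delta$---near the far boundary, so it is continuity that guarantees the infimum over $K$ is \emph{attained} rather than merely approached. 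Once continuity is secured, the remainder is the standard compactness argument.
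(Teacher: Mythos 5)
Your proposal is correct and is in substance the same compactness argument as the paper's: the paper covers $K=[-\beta_\Theta,\theta^*-a]\cup[\theta^*+a,\beta_\Theta]$ by open balls on which $\tilde{p}(\theta')>1/2+\delta(\theta)$, extracts a finite subcover via Heine--Borel, and takes $\delta'=\min_i\delta(\theta_i)$, whereas you package the same idea as continuity plus the extreme value theorem so that the infimum $m>1/2$ is attained and $\delta'=m-1/2$. If anything, your write-up is slightly more complete, since the paper's local-ball step silently presupposes exactly the (lower semi-)continuity of $\tilde{p}$ that you establish explicitly from the logistic form of $p_\gamma^{\pm}$ and the continuity of $u$ and $c_\Delta^{\pm}$.
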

\begin{proof}{Proof of Lemma \ref{lemma: finite cover}.}
For any $\theta\in [-\beta_\Theta, \theta^*-a]\cup [\theta^*+a,\beta_\Theta]$, there exists an open ball $\mathcal{B}(\theta)$ containing $\theta$ and $\delta(\theta)>0$ such that $\tp(\theta')>1/2+\delta(\theta)$ for all $\theta'\in \cB(\theta)$. According to Heine–Borel Theorem, there exists  finite number of covers $\cB(\theta_1),\cdots, \cB(\theta_k)$ such that $[-\beta_\Theta, \theta^*-a]\cup [\theta^*+a,-\beta_\Theta] \subseteq \bigcup_{i=1}^k \cB(\theta_i)$. Let $\delta'=\min_{1\leq i\leq k} \delta(\theta_i)$. Then we conclude that $\tp(\theta)>1/2+\delta'$ for all $\theta\in [-\beta_\Theta, \theta^*-a]\cup [\theta^*+a,\beta_\Theta]$.
\QED
\end{proof}

\begin{proof}{Proof of Proposition \ref{prop: hori-verti outside}.}
Define 
\[
\tilde{p}_{\min}(a,\botheta) = \min_{\theta:a\leq |\theta-\theta^*|\leq \botheta} \tilde{p}(\theta).
\]
First, we note that $\tau_1^\uparrow(\theta)$ decreases (stochastically) in $\theta$. Let $W_k$ be independent and identically distributed random variables that denotes the number of power-one tests needed when the drift is $2\tilde{p}_{\min}(a, \botheta)-1$; that is $W_k =_{D} N^\uparrow(2\tilde{p}_{\min}(a, \botheta)-1)$. Then we have
\[
\tau_1^{\uparrow}(\theta_k)\1(\|\theta_k-\theta^*\|\geq a) \leq_{s.t.} W_k\,.
\]
Lemma \ref{prop: complexity micro} indicates
\[
\E[W_k] \leq  c_1|2\tilde{p}_{\min}(a,\botheta)-1|^{-2} \ln(|2\tilde{p}_{\min}(a,\botheta)-1|^{-1}) + c_2.
\]
Therefore, 
\[
\begin{aligned}
&\E\left[\sum_{k=1}^{\psi(a)} T^{\uparrow}(\theta_k) \1(\|\theta_k-\theta^*\|\geq a)\right] \leq \E\left[\sum_{k=1}^{\psi(a)} W_k\right] \stackrel{(*)}{=}\E[\psi(a)] \E[W]\\
\leq & \left(\frac{4\theta^*}{r_2 \alpha a^\varsigma}+\frac{4\beta_1}{r_1}+\frac{2\beta_2}{r_R}\right) \max_{a\leq \|\theta-\theta^*\|\leq \beta_\Theta}\phi(\tilde{p}(\theta)),
\end{aligned}
\]
where we apply Wald's equation for (*).
\QED
\end{proof}

\begin{proof}{Proof of Theorem \ref{thm: complexity pc}.}
The design of the algorithm implies that the algorithm stops before $\tau^\rightarrow(\delta,\varepsilon)$ horizontal moves, so we have
\[
T^\rightarrow\leq \tau^\rightarrow(\delta,\varepsilon)\,.
\]
Let $W_k =_{D} N^\uparrow(2\lowp-1)$ where $=_D$ denotes ``distributionally equal''.
Then we have
\[
\begin{aligned}
&\E\left[\sum_{k=1}^{T^\rightarrow} T^{\uparrow}(\theta_k) \right] \leq \E\left[\sum_{k=1}^{T^\rightarrow} W_k\right] \leq \tau^\rightarrow (\delta,\varepsilon) \cdot \E[W]\\
\leq & \tau^\rightarrow (\delta,\varepsilon) \cdot\phi(\lowp)\,,
\end{aligned}
\]
where the last inequality is implied by Proposition \ref{prop: complexity micro}.
\QED
\end{proof}

\begin{proof}{Proof of Lemma \ref{lemma: convergence}.}
According to the definition, 
\[
\tilde{p}(\theta) = \frac{1}{1+ \exp(-|u(c_\Delta^+(\theta))-u(c_\Delta^-(\theta))|/\gamma)}\,.
\]
For ease of notation, define $z=|u(c_\Delta^+(\theta))-u(c_\Delta^-(\theta))|$. Then the condition $\tilde{p}(\theta)-1/2\geq c|\theta-\theta^*|^\kappa$ is equivalent to 
\begin{align}\label{eq: eq condition}
&\frac{1}{1+\exp(-z/\gamma)}-\frac{1}{2}\geq c|\theta-\theta^*|^\kappa \notag\\
\Leftrightarrow & z/\gamma \geq -\ln\left(\frac{1-2c|\theta-\theta^*|^\kappa}{1+2c|\theta-\theta^*|^\kappa}\right) = -\ln\left(1-\frac{4c|\theta-\theta^*|^\kappa}{1+2c|\theta-\theta^*|^\kappa}\right)\,.
\end{align}

When $|\theta-\theta^*|\leq a$, we can choose  $c<1/(6a^\kappa)$ such that 
\[
\frac{4c|\theta-\theta^*|^\kappa}{1+2c|\theta-\theta^*|^\kappa}<\frac{1}{2}.
\]
Note that $-\ln(1-x)\leq 2\ln (2) x$ when $x<1/2$. Then substituting $x$ with $\frac{4c|\theta-\theta^*|^\kappa}{1+2c|\theta-\theta^*|^\kappa}$, it holds that
\[
 -\ln\left(1-\frac{4c|\theta-\theta^*|^\kappa}{1+2c|\theta-\theta^*|^\kappa}\right)\leq 2\ln (2) \frac{4c|\theta-\theta^*|^\kappa}{1+2c|\theta-\theta^*|^\kappa}.
\]
Therefore, to achieve the condition \eqref{eq: eq condition}, we only need
\[
z/\gamma\ge 8\ln(2)c|\theta-\theta^*|^\kappa \geq 2\ln(2)\frac{4c|\theta-\theta^*|^\kappa}{1+2c|\theta-\theta^*|^\kappa}\,,
\]
then we will have
\[
z/\gamma\ge 8\ln(2)c|\theta-\theta^*|^\kappa \geq 2\ln(2)\frac{4c|\theta-\theta^*|^\kappa}{1+2c|\theta-\theta^*|^\kappa}\geq  -\ln\left(1-\frac{4c|\theta-\theta^*|^\kappa}{1+2c|\theta-\theta^*|^\kappa}\right)\,.
\]
Thus, we only need
\[
|u(c_\Delta^+(\theta))-u(c_\Delta^-(\theta))|\geq 8\ln(2)c\gamma |\theta-\theta^*|^\kappa\,,
\]
then the condition \eqref{eq: eq condition} holds.
Under Assumption \ref{assump: lip}, we choose $c=\min\{\lambda_\Delta/(8\gamma \ln(2)), 1/(6a^\kappa)\}$ and the conclusion holds.
\QED
\end{proof}

\begin{proof}{Proof of Theorem \ref{thm: micro stop}.}
From Theorem \ref{thm: complexity}, 
it holds that 
\[
\Pr(\|\theta_{k}-\theta^*\|\leq a)\geq 1-\delta/2,\quad \forall k\geq \tau^{\rightarrow}(\delta/2, a).
\]
Suppose $T^{\rightarrow}\geq \tau_1^{\rightarrow}(\delta/2,a)$. Define event $\cE_{k} = \{\|\theta_{k}-\theta^*\|\leq a\}$ and let $\theta=\theta_{T^\rightarrow}$ for simplicity. 
Under event $\cE_{T^\rightarrow}$, Lemma \ref{lemma: convergence} implies that $\tilde{p}(\theta)-1/2\geq c|\theta-\theta^*|^\kappa$ for $c=\min\{\lambda_\Delta/(8\gamma \ln(2)), 1/(6a^\kappa)\}$. Therefore, as long as 
\[
\tilde{p}(\theta)-1/2\leq c\varepsilon^\kappa,
\]
we have that $|\theta-\theta^*|\leq \varepsilon$. Recall that 
\[
\tilde{p}(\theta') = \begin{cases} \Pr(\tilde{Z}(\theta')=1), & \mbox{if } \theta' \leq \theta^*  \\ \Pr(\tilde{Z}(\theta')=-1), & \mbox{if } \theta'>\theta^* \end{cases}\,.
\]
WLOG, assume $\theta<\theta^*$. According to Hoeffding's inequality, for any $s>0$ and $n>0$, we have
\[
\begin{aligned}
\Pr\left(\left|\sum_{i=1}^{n} \tilde{Z}_i(\theta)/n-(2\tilde{p}(\theta)-1)\right|\geq s\right)\leq \exp\left(-\frac{n s^2}{2}\right)\,.
\end{aligned}
\]
Note that 
\[
\hbar_m= (2m(\ln(m+1)-\ln \gamma))^{1/2}\,.
\]
At step $s$, if the process does not stop, it implies that
\[
\begin{aligned}
\sum_{i=1}^{s} \tilde{Z}_i(\theta)/s\leq \hbar_{s}/s\,.
\end{aligned}
\]
Let $\tau_0$ be the threshold such that 
\[
\hbar_{\tau}/\tau\leq c\varepsilon^\kappa/2, \quad \forall \tau\geq \tau_0.
\]
Set $\tau^\uparrow(\delta,\varepsilon)= \max\{\tau_0,\frac{8\log(1/\delta)}{c^2\varepsilon^{2\kappa}}\}=\tilde{O}(\varepsilon^{-2\kappa})$. 
 Then we have
\[
\begin{aligned}
&\E\left[\1(|\theta-\theta^*|>\varepsilon)\1(T^{\uparrow}(\theta)\geq \tau^{\uparrow}(\delta/2,\varepsilon))\right]\\
\leq  &\E\left[\1(|\theta-\theta^*|>\varepsilon)\1(T^{\uparrow}(\theta)\geq \tau^{\uparrow}(\delta/2,\varepsilon))\1(\cE_{T^\rightarrow})\right] + \Pr(\cE_{T^\rightarrow}^c)\\
\leq&\E\left[\1(2\tilde p (\theta)-1\geq c\varepsilon^\kappa)\1(T^{\uparrow}(\theta)\geq \tau^{\uparrow}(\delta/2,\varepsilon))\right]+\frac{1}{2}\delta\\
\leq &\E\left[\1(2\tilde p (\theta)-1\geq c\varepsilon^\kappa) \1\left(\sum_{i=1}^{\tau^{\uparrow}(\delta/2,\varepsilon)} \tilde{Z}_i(\theta)/\tau^{\uparrow}(\delta/2,\varepsilon)\leq \hbar_{\tau^{\uparrow}(\delta/2,\varepsilon)}/\tau^{\uparrow}(\delta/2,\varepsilon)\right)\right]+\frac{1}{2}\delta\\
\leq & \E\left[\1(2\tilde p (\theta)-1\geq c\varepsilon^\kappa) \1\left(\sum_{i=1}^{\tau^{\uparrow}(\delta/2,\varepsilon)} \tilde{Z}_i(\theta)/\tau^{\uparrow}(\delta/2,\varepsilon)\leq c\varepsilon^\kappa/2\right)\right]+\frac{1}{2}\delta\\
\leq & \Pr\left(\left|\sum_{i=1}^{\tau^{\uparrow}(\delta/2,\varepsilon)} \tilde{Z}_i(\theta)/\tau^\uparrow(\delta/2,\varepsilon)-(2\tilde{p}(\theta)-1)\right|\geq c\varepsilon^\kappa/2\right)+\frac{1}{2}\delta\\
\leq &\exp\left(-\frac{\tau^\uparrow(\delta/2,\varepsilon) c^2\varepsilon^{2\kappa}}{8}\right)+\frac{1}{2}\delta\leq \delta\,,
\end{aligned}
\]
where the last inequality holds because $\tau^{\uparrow}(\delta/2,\varepsilon)\geq \frac{8\log(2/\delta)}{c^2\varepsilon^{2\kappa}}$.
\QED
\end{proof}

\begin{proof}{Proof of Theorem \ref{thm: total complexity}.}
First, we decompose the moves to Phase I and Phase II as follows
\[
\begin{aligned}
\E\left[\sum_{k=1}^{T^{\rightarrow}} T^{\uparrow} (\theta_k)\right]=&\E\left[\sum_{k=1}^{T^{\rightarrow}} T^{\uparrow} (\theta_k) \1(\|\theta_k-\theta^*\|\geq a)\right]+\E\left[\sum_{k=1}^{T^{\rightarrow}} T^{\uparrow} (\theta_k) \1(\|\theta_k-\theta^*\|\leq a)\right]\\
\leq & \E\left[\sum_{k=1}^{\psi(a)} T^{\uparrow} (\theta_k) \1(\|\theta_k-\theta^*\|\geq a)\right]+\E\left[\sum_{k=1}^{T^{\rightarrow}} T^{\uparrow} (\theta_k) \1(\|\theta_k-\theta^*\|\leq a)\right].
\end{aligned}
\]
Proposition \ref{prop: hori-verti outside} gives that
\begin{equation}\label{eq: bound out a}
\E\left[\sum_{k=1}^{\psi(a)} T^{\uparrow}(\theta_k) \1(\|\theta_k-\theta^*\|\geq a)\right]\leq\left(\frac{4\theta^*}{r_2 \alpha a^\varsigma}+\frac{4\beta_1}{r_1}+\frac{2\beta_2}{r_R}\right) \max_{a\leq \|\theta-\theta^*\|\leq \beta_{\Theta}}\phi(\tilde{p}(\theta))\,.
\end{equation}
There are two stopping criteria for Algorithm \ref{Alg: RTB}: 1) the horizontal move reaches $\tau^{\rightarrow}(\delta,\varepsilon)$; or 2) the horizontal move reaches $\tau^{\rightarrow}(\delta/2,a)$ and the vertical move reaches $\tau^\uparrow(\delta/2,\varepsilon)$. In both cases,  we have the number of moves bounded by $\max\{\tau^{\rightarrow}(\delta,\varepsilon), \tau^{\rightarrow}(\delta/2,a)\} \tau^{\uparrow}(\delta/2,\varepsilon)$, which is bounded by $\tau^{\rightarrow}(\delta/2,\varepsilon) \tau^{\uparrow}(\delta/2,\varepsilon)$.
Therefore,
\begin{equation}\label{eq: bound in a2}
\E\left[\sum_{k=1}^{T^{\rightarrow}} T^{\uparrow} (\theta_k) \1(\|\theta_k-\theta^*\|\leq a)\right] \leq \tau^{\rightarrow}(\delta/2,\varepsilon) \tau^{\uparrow}(\delta/2,\varepsilon)\,.
\end{equation}
Thus, combining \eqref{eq: bound out a} and \eqref{eq: bound in a2}, we have
\[
\E\left[\sum_{k=1}^{T^{\rightarrow}} T^{\uparrow} (\theta_k)\right]\leq \left(\frac{4\theta^*}{r_2 \alpha a^\varsigma}+\frac{4\beta_1}{r_1}+\frac{2\beta_2}{r_R}\right) \max_{a\leq \|\theta-\theta^*\|\leq \beta_{\Theta}}\phi(\tilde{p}(\theta)) +  \tau^{\rightarrow}(\delta/2,\varepsilon) \tau^{\uparrow}(\delta/2,\varepsilon)\,.
\]
\QED
\end{proof}

\subsection{Proofs in Section \ref{sec: human-AI}}\label{appendix: }

Before proving Theorem \ref{thm: lasso}, we first show the following result. It proves that when the $\ell_\infty$-curvature condition holds, $\|\frac{1}{n} (\bX^\top_S \bX_S)^{-1}\|_\infty $ can also be bounded.

\begin{lemma}\label{lemma: bound inverse}
For any $\bX$, if it satisfies that 
\[
\|\frac{1}{n} \bX^\top \bX z\|_\infty \geq \gamma\|z\|_\infty \text{ for all } z \in C_\alpha(S), 
\]
then we have
\[
\|\frac{1}{n} (\bX^\top_S \bX_S)^{-1}\|_\infty \leq \frac{1}{\gamma}\,.
\]
\end{lemma}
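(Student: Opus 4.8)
The plan is to reduce the matrix-norm bound to a coordinatewise expansion estimate for $V_S$ and then control the off-support coordinates via mutual incoherence. Write $V=\frac1n\bX^\top\bX$ and $V_S=\frac1n\bX_S^\top\bX_S$; the latter is invertible since the lower-eigenvalue condition gives $\lambda_{\min}(V_S)\ge c_{\min}>0$. Recalling that the induced $\ell_\infty$ operator norm of an inverse obeys
\[
\|V_S^{-1}\|_\infty=\Big(\min_{w\neq 0}\frac{\|V_S w\|_\infty}{\|w\|_\infty}\Big)^{-1},
\]
it suffices to establish the expansion bound $\|V_S w\|_\infty\ge\gamma\|w\|_\infty$ for every $w\in\mathbb{R}^s$.

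First I would lift an arbitrary $w\in\mathbb{R}^s$ to $z\in\mathbb{R}^d$ with $z_S=w$ and $z_{S^c}=0$. Since $\|z_{S^c}\|_1=0\le\alpha\|z_S\|_1$, we have $z\in C_\alpha(S)$, so the curvature hypothesis yields $\|Vz\|_\infty\ge\gamma\|z\|_\infty=\gamma\|w\|_\infty$. Splitting $Vz$ into blocks, its $S$-part is $(Vz)_S=V_S w$, and for $j\in S^c$ its $j$-th entry is $(Vz)_j=v_j^\top w$, where $v_j$ is the $j$-th column of the off-diagonal block $V_{SS^c}$.

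The crux is that $\|Vz\|_\infty$ is a maximum over all $d$ coordinates, while I need it to coincide with the $S$-block norm $\|V_S w\|_\infty$; a priori the maximum could be attained off support, and curvature alone does not preclude this (a two-by-two example makes this explicit). Mutual incoherence from Assumption~\ref{assump: three assumptions} supplies the missing control. Since $(\bX_S^\top\bX_S)^{-1}\bX_S^\top x_j=V_S^{-1}v_j$, incoherence states $\|V_S^{-1}v_j\|_1\le\alpha_1$ for each $j\in S^c$. Writing $c_j:=V_S^{-1}v_j$, so that $v_j=V_S c_j$ and $\|c_j\|_1\le\alpha_1$, symmetry of $V_S$ gives
\[
(Vz)_j=v_j^\top w=(V_S c_j)^\top w=c_j^\top V_S w=c_j^\top (Vz)_S,
\]
hence $|(Vz)_j|\le\|c_j\|_1\,\|(Vz)_S\|_\infty\le\alpha_1\|V_S w\|_\infty$. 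As $\alpha_1\le 1$, the off-support entries cannot exceed the $S$-block, so $\|Vz\|_\infty=\|(Vz)_S\|_\infty=\|V_S w\|_\infty$.

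Combining the curvature lower bound with this identity gives $\|V_S w\|_\infty=\|Vz\|_\infty\ge\gamma\|w\|_\infty$ for all $w$, and the operator-norm identity then delivers $\|V_S^{-1}\|_\infty\le 1/\gamma$. The only delicate step is the off-support bound, where mutual incoherence (with constant $\alpha_1\le1$) is exactly what forces the maximum defining $\|Vz\|_\infty$ to be realized on $S$; without it the claim fails, so I would be careful to invoke the incoherence part of Assumption~\ref{assump: three assumptions} explicitly rather than the curvature condition in isolation.
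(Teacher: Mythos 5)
Your proof is correct, and it is actually more complete than the paper's own argument. The paper's proof uses exactly your skeleton---the operator-norm identity $\|A^{-1}\|_\infty^{-1}=\min\{\|Aw\|_\infty:\|w\|_\infty=1\}$ together with the lifting $z=(z_S,0)\in C_\alpha(S)$---but then it simply asserts the equality
\[
\Bigl\|\tfrac{1}{n}\bX_S^\top\bX_S z_S\Bigr\|_\infty=\Bigl\|\tfrac{1}{n}\bX^\top\bX z\Bigr\|_\infty ,
\]
i.e., that the maximum defining the right-hand side is attained on $S$. That is precisely the step you flag as the crux: since $\frac{1}{n}\bX_S^\top\bX_S z_S$ is the $S$-subvector of $\frac{1}{n}\bX^\top\bX z$, in general only ``$\leq$'' holds, which is the wrong direction given that the curvature hypothesis lower-bounds the norm of the full $d$-dimensional vector. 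So the paper's proof has a genuine unjustified step, and your patch---writing $(Vz)_j=c_j^\top V_S z_S$ for $j\in S^c$ with $c_j=(\bX_S^\top\bX_S)^{-1}\bX_S^\top x_j$, then using symmetry of $V_S$, H\"older, and mutual incoherence $\|c_j\|_1\leq\alpha_1\leq 1$ to get $|(Vz)_j|\leq\|V_S z_S\|_\infty$---is valid and closes exactly that hole. The trade-off is that you prove a slightly different statement from the lemma as written: the lemma claims the bound for \emph{any} $\bX$ satisfying the curvature condition alone, whereas you additionally invoke incoherence and invertibility of $\bX_S^\top\bX_S$. This is harmless for the paper, because the lemma is used only inside the proof of Theorem \ref{thm: lasso}, where all three parts of Assumption \ref{assump: three assumptions} (with $\alpha_1<1$ implicit in the $1-\alpha_1$ factors) are in force. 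One caveat on your closing claim that ``without it the claim fails'': this is demonstrably true when the cone constant $\alpha$ is small---for $S=\{1\}$ the rank-one Gram matrix $V=\bigl(\begin{smallmatrix}4\gamma\alpha & 2\gamma\\ 2\gamma & \gamma/\alpha\end{smallmatrix}\bigr)$ satisfies $\|Vz\|_\infty\geq\gamma\|z\|_\infty$ on $C_\alpha(S)$ for $\alpha\leq 1$, yet $\|V_S^{-1}\|_\infty=(4\gamma\alpha)^{-1}>1/\gamma$ once $\alpha<1/4$---but for wide cones (e.g., the usual constant $3$) the positive-semidefinite structure of $\frac{1}{n}\bX^\top\bX$ rules out this construction, and whether curvature alone then suffices is not settled either by your argument or by the paper's; what is clear is that the paper's one-line equality does not establish it, while your incoherence-based version is fully rigorous and sufficient for where the lemma is applied.
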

\begin{proof}{Proof of Lemma \ref{lemma: bound inverse}.}
We first note that for all matrix $A$,
\begin{equation}\label{eq: matrix inverse}
\|A^{-1}\|_\infty = \frac{1}{\min\{\|Ax\|_\infty:\|x\|_\infty=1\}}\,.
\end{equation}
For any $z_S$ such that $\|z_S\|_\infty =1$, we let $z_{S^c}=0$ and $z= (z_S, z_{S}^c)$. It is obvious that $z\in C_3(S)$, so according to the condition, 
\[
\|\frac{1}{n} \bX^\top_S \bX_S z_S\|_\infty = \|\frac{1}{n} \bX^\top \bX z\|_\infty \geq \gamma\|z\|_\infty =\gamma\,.
\]
From Equation \eqref{eq: matrix inverse}, we have
\[
\|\frac{1}{n} (\bX^\top_S \bX_S )^{-1}\|_\infty \leq \frac{1}{\gamma}\,.
\]
\QED
\end{proof}

Now we are ready to prove Theorem \ref{thm: lasso}.

\begin{proof}{Proof of Theorem \ref{thm: lasso}.}
Corollary 7.22 in \cite{wainwright2019high} states that for any matrix $\bX$ that satisfies Assumption \ref{assump: three assumptions}\ref{assum: MI} (mutual incoherence) and \ref{assum: LE} (lower eigenvalue), then it satisfies the $\ell_\infty$-error bound
\begin{equation}\label{eq: bound linfinite}
\|\htheta_S-\theta^*_S\|_\infty \leq \frac{\sigma}{\sqrt{c_{\min}}}\left\{\sqrt{\frac{2\log s}{n}}+\zeta\right\}+\left\|\left(\frac{\bX_S^\top \bX_S}{n}\right)^{-1}\right\|_\infty\lambda_n,
\end{equation}
with probability at least $1-4 e^{-n\zeta^2/2}$. From Lemma \ref{lemma: bound inverse}, it states that the $\ell_\infty$-curvature condition implies that
\begin{equation}\label{eq: bound inverse}
\|\frac{1}{n} (\bX^\top_S \bX_S)^{-1}\|_\infty \leq \frac{1}{\alpha_2}\,.
\end{equation}
Substituting \eqref{eq: bound inverse} into \eqref{eq: bound linfinite}, we conclude that
\[
\|\htheta_S-\theta^*_S\|_\infty \leq \frac{\sigma}{\sqrt{c_{\min}}}\left\{\sqrt{\frac{2\log s}{n}}+\zeta\right\}+\frac{1}{\alpha_2} \lambda_n.
\]
Theorem 7.21 in \cite{wainwright2019high} also implies no false inclusion. Thus we reach our conclusion.
\QED
\end{proof}

\begin{proof}{Proof of Theorem \ref{thm: complexity of HAI}.}
Theorem \ref{thm: lasso} states that $s$-dimensional important features can be correctly identified with probability at least $1-\delta$ when the noisy labeled data is more than $H_0(\delta,\varepsilon;\lowbeta)$. Combined with Theorem \ref{thm: total complexity}, we reach our conclusion.
\QED
\end{proof}

\begin{proof}{Proof of Proposition \ref{prop: LNCA condition}.}
First, if only utilizing Lasso for the supervised learning oracle to refine the estimator based on $n$ noisy labeled data, subject to certain regularity conditions, the estimation error can be bounded as  $\|\htheta_n-\theta^*\|_2\leq c\sigma\sqrt{s\log d/n}$ with high probability for some constant $c>0$ \citep{wainwright2019high}. Equivalently, to ensure that the two-norm distance of the estimation error is within $\varepsilon$, we need to acquire $O(\sigma^2 s\log d/\varepsilon^2)$ data points. Theorem \ref{thm: complexity of HAI} implies that for sufficiently small $\varepsilon$, the sample complexity scales with $s^{1+\kappa}/\varepsilon^{2\kappa}$.

By comparing two terms,  
we have the condition of Algorithm \ref{Alg: SFT+FTPB} performing better than SL (ignoring the logarithm term) as
\begin{equation*}
\frac{\sigma^2 }{s^\kappa} \gtrapprox \varepsilon^{2-2\kappa}.
\end{equation*}
\QED
\end{proof}

\section{Numerical illustrations for Example \ref{example: pricing}}\label{appendix: pricing}

In this experiment, each coordinate of $X_i\in \mathbb{R}^{1000}$ is generated from Gaussian distribution with mean 0 and standard deviation 100. To reflect our sparse setup, we set $\theta_1= -0.5$, $\theta_2 = 5$, and the remaining coefficients to 0. The observational noise follows Gaussian distribution with mean 0 and standard deviation 200. By training on a dataset comprising 2000 data points, we obtained estimated parameter $\hat{\theta}_1=0.47$ and $\hat{\theta}_2 = 5.87$. It is noteworthy that the estimated sign of the first parameter is incorrect, indicating a potential issue in pure supervised learning.

\end{APPENDICES}

\end{document}